\declaretheorem[name=Lemma]{lemma}
\declaretheorem[name=Definition]{definition}
\declaretheorem[name=Proposition]{proposition}
\declaretheorem[name=Remark,style=remark,qed=$\square$]{remark}
\newcommand{\A}{\mathcal A}
\newcommand{\E}[2]{\mathbb{E}_{#1}{\left[#2\right]}}
\DeclareRobustCommand\onedot{\futurelet\@let@token\@onedot}
\def\@onedot{\ifx\@let@token.\else.\null\fi\xspace}
\def\eg{\emph{e.g}\onedot}
\newcommand{\defeq}{\mathrel{\mathop:}=}
\newcommand{\supp}{\mathrm{supp}}
\title{{Understanding Behavioral Metric Learning}:\\ {\Large\smaller[1] A Large-Scale Study on Distracting Reinforcement Learning Environments}}
\author{Ziyan "Ray" Luo\textsuperscript{1,2}, Tianwei Ni\textsuperscript{1,3}, \\
Pierre-Luc Bacon\textsuperscript{1,3,5}, Doina Precup\textsuperscript{1,2,5}, Xujie Si\textsuperscript{1,4,5}\looseness=-1}
\keywords{behavioral metrics, bisimulation metrics, representation learning, evaluation} 
\begin{document}

\makeCover
\maketitle


\vspace{-1em}
\begin{abstract}
\vspace{-0.5em}
A key approach to state abstraction is approximating \textit{behavioral metrics} (notably, bisimulation metrics) in the observation space and embedding these learned distances in the representation space. While promising for robustness to task-irrelevant noise, as shown in prior work, accurately estimating these metrics remains challenging, requiring various design choices that create gaps between theory and practice. 
Prior evaluations focus mainly on final returns, leaving the quality of learned metrics and the source of performance gains unclear.
To systematically assess how metric learning works in deep reinforcement learning (RL), we evaluate five recent approaches, unified conceptually as isometric embeddings with varying design choices. 
We benchmark them with baselines across 20 state-based and 14 pixel-based tasks, spanning \textit{370 task configurations}\footnote{200 state-based IID Gaussian (20 tasks × 10 noises), 84 pixel-based ID generalization (14 tasks × 6 noises), 30 state-based IID Gaussian with random projection (6 tasks × 5 noises), and 56 pixel-based OOD generalization (14 tasks × 4 noises).\looseness=-1} with diverse noise settings.
Beyond final returns, we introduce the evaluation of a \textit{denoising factor} to quantify the encoder's ability to filter distractions. 
To further isolate the effect of metric learning, we propose and evaluate an \textit{isolated metric estimation} setting, in which the encoder is influenced solely by the metric loss.
Finally, we release an open-source, modular codebase to improve reproducibility and support future research on metric learning in deep RL.\footnote{\label{fn:artifact}The artifact is available at \url{https://github.com/Rayluo-mila/understanding-metric-learning}.}\looseness=-1
\end{abstract}

\begin{figure}[h]
\vspace{-1.6em}
    \centering\includegraphics[width=\linewidth]{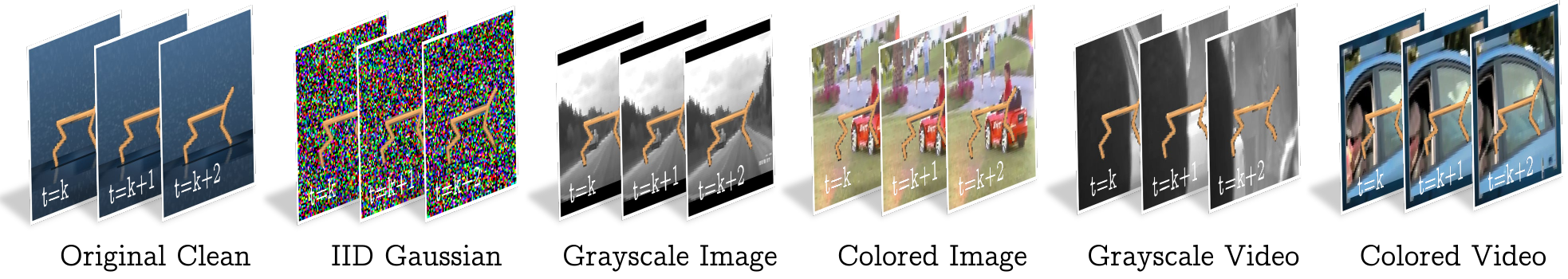}
\vspace{-2em}
    \caption{\small \textbf{Examples of background noise settings in pixel-based domains.} 
    In image settings, the background is fixed; in video settings, it varies slightly; IID Gaussian noise is independently sampled each timestep.
}
    \vspace{-1em}
\label{fig:pixel_noise}
\end{figure}

\vspace{-1em}
\section{Introduction}
\label{sec:intro}
\vspace{-0.8em}
\begingroup
\setlength{\parskip}{4pt}

    

Real-world environments often present high-dimensional, noisy observations, posing challenges for RL. For instance, in image-based settings, task-irrelevant variations in background, lighting, and viewpoint introduce distractions (e.g., \autoref{fig:pixel_noise}). Yet, despite this observational complexity, system dynamics are typically governed by \textit{a compact, task-relevant state}.
State abstraction~\citep{li2006towards,konidaris2019necessity} provides a framework for extracting such \textit{latent representations} from raw observations, filtering out irrelevant information while preserving task-critical structure.
A key principle of state abstraction is that behaviorally similar states should have similar representations. Traditionally, this is enforced through state aggregation~\citep{singh1994reinforcement,givan2003equivalence}, grouping states into discrete abstract classes based on equivalence relations.
However, state aggregation lacks a measure of \textit{how different} states are across classes and struggles with continuous representations, requiring infinitely many discrete classes.

To address this, \textit{bisimulation metrics}~\citep{ferns2004metrics,ferns2011bisimulation} and their scalable variants~\citep{castro2020scalable, zhang2020learning} have been proposed to define meaningful distances between observations.
These fall into the broader class of \textbf{behavioral metrics}~\citep{castro2023kernel}, which quantify state similarity based on differences in immediate rewards and transition probabilities.
By learning a metric alongside deep RL, prior work~\citep{zhang2020learning,kemertas2021towards,chen2022learning,zang2022simsr}  has shown progress in tackling high-dimensional, noisy tasks.\looseness=-1

Nevertheless, the role of behavioral metric learning in deep RL (\textbf{metric learning} for short) remains unclear due to a lack of systematic evaluation.
First, metric learning's effectiveness relies on accurate estimation in theory, but it is challenging in practice due to many design choices. Moreover, prior work primarily measures performance through \textit{returns}, without directly assessing the quality of learned metrics.
Second, metric learning is often combined with \textit{multiple losses} (e.g., self-prediction~\citep{zhang2020learning}, inverse dynamics~\citep{kemertas2021towards}), as well as \textit{architectural choices} (e.g., normalization, ensembles~\citep{zang2022simsr}), making it difficult to isolate metric learning's impact on performance gains.
Third, most studies evaluate only \textit{OOD generalization} in environments with \textit{grayscale} natural videos as distractions~\citep{zhang2020learning}, conflating robustness with generalization. 
Lastly, prior evaluations~\citep{tomar2021learning,li2022does} report inconsistent results for the same algorithms, raising concerns about reproducibility.\looseness=-1


\textbf{Contributions.}\hspace{0.6em} In this paper, we provide an understanding of \textbf{how metric learning works in deep RL} through a systematic large-scale study.
Our main contributions are as follows:
\begin{enumerate}[noitemsep, itemsep=0.1\baselineskip]
\item \textbf{Conceptual insights} (\autoref{sec:3}): We unify five recent metric learning approaches under an isometric embedding framework to identify key design choices. We analyze why some exact behavioral metrics provide a theoretical denoising guarantee, whereas others may not.
\item \textbf{Evaluation designs on denoising} (\autoref{sec:4}): To ensure a rigorous and comprehensive evaluation, we introduce diverse distraction benchmarks with varying difficulty levels, 
across both state-based and pixel-based domains, tested under both ID and OOD generalization.
Then, we quantify the \textbf{denoising} capability -- the encoder’s ability to filter out distractions by introducing the \textit{denoising factor} (DF). 
Finally, we propose an \textit{isolated metric estimation} setting (referred to as the isolated setting) to assess metric learning’s contribution to denoising, independent of other losses.\looseness=-1
\item \textbf{Comprehensive evaluation} (\autoref{sec:exp}): 
We conduct a comprehensive benchmark of five metric learning methods and baselines across 20 state-based tasks with 10 IID Gaussian noise levels, and 14 pixel-based tasks with 6 distraction types, in DeepMind Control suite~\citep{tassa2018deepmind}, evaluating both return and DF.
Beyond performance comparison, we assess the difficulty of our distracting benchmarks, and conduct targeted case studies to identify key design choices and examine the connection between metric learning and denoising throughout ablation study and isolated metric estimation. 
\item \textbf{Open-source codebase} (\autoref{fn:artifact}): We open-source a modular and efficient codebase to enhance the  reproducibility and extensibility of metric-based methods in the RL community. 
\end{enumerate}

\textbf{Main Findings.} Based on the evaluation, we highlight the main findings as follows:
\begin{enumerate}
    \item \textbf{Benchmarking results} (\autoref{sec:5.1}): SimSR, despite being designed for pixel tasks, outperforms all methods in return and denoising on state-based domains with IID Gaussian noise. RAP performs best in pixel-based tasks. Surprisingly, SAC and DeepMDP are strong baselines. 
    Interestingly, common distractions like varying noise dimensions and grayscale videos add little difficulty, while random projection (state-based) and pixel Gaussian noise remain challenging.
    \item \textbf{Case study insights} (\autoref{sec:5.2}): Further analysis reveals that SimSR’s success on state-based tasks is largely driven by its use of self-prediction loss and feature normalization. Additionally, applying layer normalization tends to improve both return and denoising across all methods.
    \item \textbf{Isolated setting results} (\autoref{sec:5.3}): When comparing DFs in the isolated setting, we find that the standalone benefit of learning metric by an explicit metric loss becomes marginal.
\end{enumerate}
\endgroup

\vspace{-1em}
\section{Background}
\label{sec:2}
\vspace{-0.5em}

\subsection{Problem Formulation}
\label{sec:2.1}
\vspace{-0.5em}

We consider a setting where observations contain distractions and focus on a special class of Markov decision processes -- exogenous block MDPs (EX-BMDPs)~\citep{efroni2021provable,islam2022agent}. 
This formulation (i) encompasses many distracting environments in prior work, (ii) retains the generality of standard MDPs, (iii) exactly characterizes problems solvable by bisimulation metrics, and (iv) permits concise theoretical analysis and straightforward experiment design.
Before introducing EX-BMDPs, we first define block MDPs as a prerequisite.

\vspace{-0.5em}
\paragraph{Block MDPs~\citep{du2019provably}.}   A block MDP (BMDP) is a tuple $\langle \mathcal{X}, \mathcal{Z}, \mathcal{A}, q, p, R, \gamma \rangle$,
where $\mathcal{X}$ is the observation space, $\mathcal{Z}$ is the latent state space, $\mathcal{A}$ is the action space, $p: \mathcal{Z} \times \mathcal{A} \to \Delta(\mathcal{Z})$ is latent transition function, $R: \mathcal{Z} \times \mathcal{A} \to \mathbb R$ is (latent) reward function, and $\gamma \in [0,1)$ is the discount factor. 
The emission function $q: \mathcal{Z} \to \Delta(\mathcal{X})$ generates observation $x \sim q(\cdot \mid z)$ from latent state $z$. 
Crucially, BMDP assumes the \emph{block structure}:
$
\forall\, z_1, z_2 \in \mathcal{Z},\, z_1 \neq z_2 \implies \supp(q(\cdot \mid z_1)) \cap \supp(q(\cdot \mid z_2)) = \emptyset.
$
This ensures that each observation uniquely determines its latent state, enabling the existence of the \textit{oracle encoder} $q^{-1}: \mathcal{X} \to \mathcal{Z}$ such that $q^{-1}(x) = z$ whenever $x \sim q(\cdot \mid z)$.
The goal of RL in BMDP is to find a policy $\pi: \mathcal X \to \Delta(\mathcal A)$ that maximizes the rewards:
$
\max_\pi \E{\pi}{\sum_{t=0}^\infty \gamma^t R(q^{-1}(x_t), a_t)}$. The policy only receives the observation $x$ without access to the latent state $z$, the latent space $\mathcal Z$, or the oracle encoder $q^{-1}$. 
While the class of BMDPs is equivalent to the class of MDPs~\citep{du2019provably}\footnote{From an MDP perspective, the grounded transition is $\mathcal P(x' \mid x,a) = \sum_{z'\in \mathcal Z} p(z' \mid q^{-1}(x),a) q(x' \mid z') $.}, they capture the underlying state from a high-dimensional observation. However, BMDPs do not differentiate between task-relevant (endogenous) state and task-irrelevant (exogenous) noise in the latent space.
\vspace{-0.5em}
\paragraph{Exogenous BMDPs~\citep{efroni2021provable}.} An EX-BMDP extends BMDP by factorizing a latent state into $z = (s,\xi)$, where $s\in\mathcal{S}$ is the \textit{task-relevant state} and $\xi\in\Xi$ is the \textit{task-irrelevant noise}, representing distraction. The latent state transition $p(s', \xi' \mid s, \xi, a)$ factorizes as $p(s' \mid s, a) p(\xi' \mid \xi)$, where the noise $\xi$ evolves independently and does not affect the reward function. To simplify notation, we denote the reward function as $R(s, a)$.  
EX-BMDPs guarantee the existence of a denoising map  \(D: \mathcal{Z} \to \mathcal{S}\) which extracts the task-relevant state \(s\) from latent state \(z \in \mathcal{Z}\). Combined with the oracle encoder in BMDPs, this enables recovery of the task-relevant state directly from observations:
$
s = D(q^{-1}(x)). 
$
We define this composite function $\phi^* = D \circ q^{-1}$ as the \textbf{oracle encoder} of EX-BMDP.\looseness=-1


\vspace{-0.5em}
\subsection{Representation Learning in RL}
\label{sec:2.2}
\vspace{-0.5em}

In actor-critic methods~\citep{konda1999actor}, representation learning is commonly used to handle complex MDPs such as EX-BMDPs. The idea is to learn an encoder that maps a raw observation to a representation, which is then shared by both actor and critic. Formally, an actor-critic algorithm employs an encoder $\phi:\mathcal{X}\to \Psi $, a (latent) actor $\pi_\theta: \Psi\to \Delta(\A)$, and a (latent) critic $Q_\omega: \Psi \times \A \to \mathbb R$, where $\Psi$ is the representation space. In this work, we focus on end-to-end actor-critic methods based on the soft actor-critic (\textbf{SAC}) algorithm~\citep{haarnoja2018soft}.  These methods jointly optimize the encoder and actor-critic using the RL loss in SAC, denoted as $J_{\text{SAC}}(\phi, \theta, \omega)$. 

Learning state representations solely from reward signals (i.e., RL loss) is challenging in complex tasks. To address this, various state abstraction frameworks and representation objectives have been proposed (see \citet{ni2024bridging} for a literature review). Among these, \textit{model-irrelevance abstraction}~\citep{li2006towards} defines two conditions for an effective encoder using \textit{bisimulation relation}~\citep{givan2003equivalence}. The first condition, known as \textbf{reward prediction (RP)}\footnote{Formally, in an EX-BMDP, RP condition is $\exists R_\kappa: \Psi \times \A \to \mathbb R, \,\text{s.t.} \, R(\phi^*(x), a) = R_\kappa(\phi(x), a), \forall x, a.$}, requires that the representation preserves reward information. The second, \textbf{self-prediction (ZP)}\footnote{Formally, in an EX-BMDP, ZP condition is $\exists P_\nu: \Psi \times \A \to \Delta(\Psi), \, \text{s.t.} \, \mathcal P(\psi' \mid x, a) = P_\nu(\psi' \mid \phi(x), a), \forall x, a, \psi'$, where $\mathcal P(\psi' \mid x, a) 
 = \sum_{x'\in \mathcal X}\mathcal P(x' \mid x, a) \mathbf{1}(\phi(x') = \psi')$.}~\citep{ni2024bridging}, requires that the representation preserves latent dynamics information. 
Model-irrelevance abstraction thus defines compact yet informative encoders that retain essential information for optimal decision-making~\citep{subramanian2022approximate}.
By definition, the RP and ZP conditions hold when $\phi = \phi^*$ and $\Psi = \mathcal S$.\footnote{In this case, $R_\kappa(s,a) = R(s,a)$ and $P_\nu(s' \mid s,a) = p(s'\mid s,a)$.} This implies that the oracle encoder $\phi^*$ serves as a model-irrelevance abstraction.\looseness=-1

To learn a model-irrelevance abstraction, \textbf{DeepMDP}~\citep{gelada2019deepmdp} introduces \textbf{RP and ZP losses} to approximate the RP and ZP conditions, respectively. Given a data tuple $(x, a, r, x')$, these losses jointly optimizes the encoder $\phi$, the reward model $R_\kappa$, and the latent transition model $P_\nu$:
{
  \setlength{\abovedisplayskip}{3pt}
  \setlength{\belowdisplayskip}{3pt}
\begin{align}
\label{eq:ZP_loss}
J_{\text{RP}}(\phi, \kappa) = (R_\kappa(\phi(x), a) - r)^2, \quad J_{\text{ZP}}(\phi, \nu) = - \log P_\nu(\bar\phi(x') \mid \phi(x), a),
\end{align}
}
where $\bar\phi$ detaches the encoder from gradient backpropagation. 
The overall objective $J_{\text{DeepMDP}}(\phi)$ for the encoder in DeepMDP combines SAC loss with RP and ZP losses (\autoref{eq:ZP_loss}, \autoref{fig:deepmdp_arch}).

\vspace{-0.5em}
\section{Conceptual Analysis on Behavioral Metrics Learning in RL}
\label{sec:3}
\vspace{-0.5em}


This section establishes a conceptual framework linking behavioral metrics to representations in deep RL (\autoref{sec:isometric}), and then summarizes how recent work instantiates it (\autoref{sec:3.2}). Please see Appendix \autoref{app:metrics} for background in metrics and metric learning and \autoref{sec:related} for other related work.

\vspace{-0.5em}
\subsection{Isometric Embedding: Between Behavioral Metrics and Representations}
\vspace{-0.5em}
\label{sec:isometric}

We aim to find an encoder that maps noisy observations into a structured representation space, where distances reflect differences in rewards and transition dynamics smoothly. This representation should facilitate RL by ensuring that task-relevant variations are captured.
A natural way to formalize this goal is through the concept of an \textit{isometric embedding (isometry)}\footnote{\url{https://en.wikipedia.org/wiki/Isometry}}: 


\begin{definition}[Isometric Embedding]
An encoder $\phi: \mathcal X \to \Psi$ is an \emph{isometric embedding} if the distances in the original space $(\mathcal X, d_{\mathcal X})$ are preserved in the representation space $(\Psi,d_{\Psi})$. Formally, 
{
  \setlength{\abovedisplayskip}{3pt}
  \setlength{\belowdisplayskip}{3pt}
\begin{equation}
\label{eq:isometric}
    d_{\mathcal X}(x_1, x_2) = d_{\Psi}(\phi(x_1), \phi(x_2)), \quad \forall x_1, x_2 \in \mathcal X,
\end{equation}
}
where $d_{\mathcal X}$ is the \textbf{target metric} (``desired'' metric) and $d_\Psi$ is the \textbf{representational metric}.
\end{definition}
\vspace{-0.5em}



The target metric captures differences in rewards and transition dynamics following a policy $\pi$.  We omit the dependency on $\pi$ for simplicity. 
The target metric is formulated as~\citep{castro2023kernel}:
{
  \setlength{\abovedisplayskip}{3pt}
  \setlength{\belowdisplayskip}{1pt}
\begin{align}
\label{eq:metric_def}
    d_{\mathcal X}(x_1, x_2) &\defeq c_R d_R(x_1, x_2) + c_Td_T(d_{\mathcal X})(\mathcal P(x'\mid x_1),\mathcal P(x'\mid x_2)) \\ 
    &\approx c_R \hat d_R(r_1, r_2) + c_T \hat d_T(\hat d_{\mathcal X})(\hat{\mathcal P}(x'\mid x_1),\hat{\mathcal P}(x'\mid x_2))=\hat d_{\mathcal X}(x_1, x_2), 
\end{align}
}

where $r_1,r_2\in\mathbb R$ are sampled immediate rewards based on $x_1,x_2$ following $\pi$ and $\mathcal P(x'\mid x)$ is a next-state distribution following $\pi$. Here, $d_R$ represents \textit{immediate} state similarity by rewards and $d_T$ is a probabilistic measure of \textit{long-term} state similarity through transition distance, and $\hat{d}_R$ and $\hat{d}_T$ are approximants of $d_R$ and $d_T$.

With isometric embedding assumption, we show the following lemma: for $x_1,x_2\in\mathcal X$,
{
  \setlength{\abovedisplayskip}{3pt}
  \setlength{\belowdisplayskip}{3pt}
\begin{equation}
\label{eq:transition_distance}
d_T(d_{\mathcal X})(\mathcal P(x'\mid x_1),\mathcal P(x'\mid x_2)) = d_T(d_{\Psi})(\mathcal P_\phi(\psi'\mid x_1),\mathcal P_\phi(\psi'\mid x_2)),
\end{equation}
}
where $\mathcal P_\phi(\psi'\mid x) = \sum_{x'}\mathcal P(x'\mid x)\mathbf{1}(\psi'=\phi(x'))$. 
The proof, provided in Appendix~\autoref{app:proof_isometric}, holds for all considered $d_T$. Intuitively, this result shows that isometry \textit{preserves} transition distances in $\mathcal X$ when mapped to $\Psi$, a property implicitly assumed in prior work.
 
\vspace{-0.5em}
\subsection{Design Choices in Behavioral Metric Learning}
\label{sec:3.2}
\vspace{-0.5em}

\begin{table}[ht]
\vspace{-0.5em}
\centering
\caption{\small\textbf{Summary of key implementation choices for the benchmarked methods.}}
\vspace{-0.7em}
\setlength{\tabcolsep}{3pt}
\resizebox{\linewidth}{!}{%
\begin{tabular}{l l l l l l l l l}
\toprule
\multirow{2}{*}{Method} & \multirow{2}{*}{\(\hat{d_R}\)} & \multirow{2}{*}{\(\hat{d_T}\)} & \multirow{2}{*}{\(d_\Psi\)} &
\multirow{2}{*}{\makecell{Metric\\Loss}} &
\multirow{2}{*}{\makecell{Target\\Trick}} &
\multirow{2}{*}{\makecell{Other\\Losses}} &
\multirow{2}{*}{\makecell{Transition\\Model}} &
\multirow{2}{*}{\makecell{Normali\\-zation}} \\
\\
\midrule
\textbf{SAC}~\citep{haarnoja2018soft}
  & --- & --- & --- & --- & --- & --- & --- & --- \\
\textbf{DeepMDP}~\citep{gelada2019deepmdp}
  & --- & --- & --- & --- & --- & RP + ZP & Probabilistic & --- \\
\textbf{DBC}~\citep{zhang2020learning}
  & Huber & $W_2$ closed-form & Huber & MSE & --- & RP + ZP & Probabilistic & --- \\
\textbf{DBC-normed}{\scriptsize~\citep{kemertas2021towards}}
  & Huber & $W_2$ closed-form & Huber & MSE & --- & RP + ZP & Deterministic & $\operatorname{MaxNorm}$ \\
\textbf{MICo}~\citep{castro2021mico}
  & Abs. & Sample-based & Angular & Huber & \(\checkmark\) & --- & --- & --- \\
\textbf{RAP}~\citep{chen2022learning}
  & RAP & $W_2$ closed-form & Angular & Huber & --- & RP + ZP & Probabilistic & --- \\
\textbf{SimSR}~\citep{zang2022simsr}
  & Abs. & Sample-based & Cosine & Huber & --- & ZP & Prob.\ ensemble & $\operatorname{L2Norm}$ \\
\bottomrule
\end{tabular}
}
\vspace{-0.5em}
\label{tab:cand_summary}
\end{table}

\autoref{sec:isometric} provides a general conceptual framework instantiated by several works through distinct design choices. Rather than detailing theoretical differences (Appendix \autoref{app:metric_def}) and design choices defined in papers (Appendix \autoref{app:metric_learning}), we focus on \textit{practical implementations} in their publicly available codebases, summarized in \autoref{tab:cand_summary} and illustrated in Appendix \autoref{app:arch}.\looseness=-1

\vspace{-0.5em}

\paragraph{Choices of Target Metric $d_{\mathcal X}$.}
Methods vary on the choice of $d_{\mathcal X}$ (see Appendix \autoref{app:metric_def})  and $\hat d_{\mathcal X}$ to approximate $d_{\mathcal X}$ -- specifically, $\hat{d_R}$ and $\hat{d_T}$ that approximate $d_R$ and $d_T$.
\vspace{-0.5em}
\begin{itemize}[itemsep=0pt, topsep=0pt, parsep=0pt, partopsep=0pt]
    \item $\hat{d_R}$: MICo and SimSR use absolute difference (``Abs.'' in \autoref{tab:cand_summary}, \autoref{eq:abs_diff}), DBC and DBC-normed use Huber distance (``Huber'' in \autoref{tab:cand_summary}, \autoref{eq:huber_dist}), and RAP has a specific form (Appendix \autoref{app:metric_learning}).
    \item $\hat{d}_T$: To avoid expensive 1-Wasserstein computations in bisimulation metrics~\citep{ferns2004metrics},  DBC, DBC-normed, and RAP approximate $d_T$ using a Gaussian transition model with a 2-Wasserstein metric. In contrast, MICo and SimSR rely on sample-based distance approximations.
\end{itemize}
\vspace{-0.5em}

\paragraph{Choices of Representational Metric $d_\Psi$.}
To approximate $d_\Psi$, DBC and DBC-normed employ a Huber distance (a surrogate for $L_2$ distance); MICo, SimSR, and RAP use an angular distance.

\vspace{-0.5em}
\paragraph{Metric Loss Function $J_M$ and Target Trick.}  To approximate an isometric embedding, metric learning methods optimize this general objective: 
{
  \setlength{\abovedisplayskip}{1pt}
  \setlength{\belowdisplayskip}{1pt}
\begin{equation}
\label{eq:metric_loss}
J_M(\phi) = \ell\left(d_{\Psi}(\phi(x_1), \phi(x_2)) - \hat d_{\mathcal X}(x_1,x_2)\right),
\end{equation}
}
where $\hat d_{\mathcal X}(x_1,x_2) = c_R \hat d_R(r_1, r_2) + c_T \hat d_T(d_{\Psi})(\hat{\mathcal P}(\psi'\mid x_1),\hat{\mathcal P}(\psi'\mid x_2))$ derived by \autoref{eq:metric_def} and \autoref{eq:transition_distance}. Here, $\ell$ is Huber loss~\citep{huber1992robust} in MICo, RAP and SimSR, or mean square error (MSE) in DBC and DBC-normed.
MICo employs a target network $\bar{\phi}$ for encoding one observation in $d_\Psi$ when approximating $d_\mathcal{X}$ to ensure learning stability. See \citet[Appendix C.2]{castro2021mico} for further details.

\vspace{-0.5em}
\paragraph{Self-prediction (ZP) and Reward Prediction (RP) Loss.} 
As discussed, approximating $d_\mathcal{X}$ often requires a transition model, and methods adopt distinct approaches: probabilistic models (DBC), ensembles of probabilistic models (SimSR), and deterministic models (DBC-normed). MICo, in contrast, employs a sample-based target metric that is free of ZP. 
Since all the listed metric learning methods use sampled immediate rewards to approximate $d_R$, an explicit reward model is not strictly necessary. However, following the convention of DeepMDP, in DBC, DBC-normed, and RAP, the RP loss is employed to further shape the representation.

\vspace{-0.5em}
\paragraph{Normalization in the Representation Space $\Psi$.}  DBC-normed employs max normalization to enforce boundedness, leveraging prior knowledge of value range constraints on target metrics. While SimSR requires $L_2$ normalization to enforce unit-length representations, all the other methods use LayerNorm~\citep{ba2016layer} in pixel-based encoders. See Appendix \autoref{app:normalization} for details.



\vspace{-0.5em}
\subsection{Candidate Methods}
\vspace{-0.5em}

We present the design choices of methods to be benchmarked in \autoref{tab:cand_summary}. 
In our experiment (\autoref{sec:exp}),
we follow DBC's implementation on DeepMDP which employs exponential moving average of ZP target~\citep{ni2024bridging} and excludes observation reconstruction loss. 
For DBC-normed, we exclude their additional components related to intrinsic rewards and inverse dynamics. For DBC-normed and SimSR, we replace their original transition models with a single probabilistic transition model. These modifications ensure that our study focuses on the effect of metric learning itself.

\vspace{-0.5em}
\subsection{Why do Metrics (Not) Help with Denoising?}
\vspace{-0.5em}

First, we define \textit{denoising} as a form of generalization in which task-irrelevant noise is removed from observations, allowing a model to generalize across observations with unseen noise. Formally:
\begin{mdframed}[backgroundcolor=yellow!20, linecolor=black, linewidth=1pt]
An encoder $\phi$ is said to achieve \textbf{perfect denoising} in a EX-BMDP if, for any triplet $x, x_+, x_- \in \mathcal X$ such that $\phi^*(x)=\phi^*(x_+) \neq \phi^*(x_-)$, it holds that $\phi(x)=\phi(x_+) \neq \phi(x_-)$. 
That is, $\phi$ replicates the abstraction behavior of the oracle encoder $\phi^*$. 
\end{mdframed}

We then discuss on the connection between denoising and target metric $d_\mathcal{X}$, which motivates further empirical investigation into whether metric learning facilitates denoising.

\vspace{-0.5em}
\paragraph{Metrics potentially help with denoising.} 
Bisimulation metric (BSM, \autoref{def:bisim_metric})~\citep{ferns2004metrics,ferns2011bisimulation} has perfect denoising in a EX-BMDP: for observations $x,x_+\in \mathcal{X}$, $d_\mathcal{X}(x,x_+)=0$ (see 
Appendix \autoref{app:proof_denoising} for a proof). Through \autoref{eq:isometric}, zero $d_{\Psi}$ is ensured and the two observations are assigned to the same representation (Appendix \autoref{app:metric_math}, Metric definition, (1)).
PBSM (\autoref{def:pibisim_metric}) has a denoising property when the policy is \textit{exo-free}~\citep{islam2022agent} (see Appendix
\autoref{app:proof_denoising} for a proof).
Generally, the MICo distance (\autoref{def:mico}) does not assign a zero distance to such $x,x_+$~\citep{castro2021mico} unless both the policy and transition function are deterministic, but empirical evidence indicates its potential to help with denoising~\citep{chen2022learning,zang2022simsr}.


\vspace{-0.5em}
\paragraph{Approximated metrics may not help with denoising.} 
Although BSM has perfect denoising, it is inherently challenging to approximate~\citep{castro2020scalable}. As a result, all of our candidate methods are based on PBSM~\citep{zhang2020learning,kemertas2021towards} or MICo~\citep{castro2021mico,zang2022simsr,chen2022learning}.
However, PBSM does not guarantee denoising observations under arbitrary policies, even when the policy is optimal
(see Appendix \autoref{app:proof_denoising} for a detailed discussion).
Furthermore, several gaps between theory and practice ($d_\mathcal{X}$ and $\hat d_\mathcal{X}$) exacerbate their denoising properties.
Firstly, both PBSM and MICo are on-policy metrics, but the sampled rewards used in  $\hat d_R$ (\autoref{sec:3.2}) are from a replay buffer (\autoref{eq:dbc_loss}), which are off-policy. 
Secondly, the methods use approximated transition models, where an approximation error is introduced (\citet{kemertas2021towards}, Appendix Sec. D).
Thirdly, in the line of work that leverages behavioral metrics in deep RL, the metric loss is not the sole factor shaping the representation. The interplay among the metric loss, ZP loss, and critic loss can lead to undesirable outcomes.

\vspace{-0.5em}
\section{Study Design on Metric Learning: Noise and Denoising}
\label{sec:4}
\vspace{-0.5em}



The ``denoising capability'' of behavioral metric learning is often cited as a motivation in prior work~\citep{zhang2020learning,kemertas2021towards,chen2022learning,zang2022simsr}.  However, most studies evaluate this \textit{indirectly} \textit{in limited settings} by (1) combining metric learning with RL, (2) training only on grayscale natural video backgrounds, (3) testing on unseen videos in training, and (4) evaluating solely through return performance. This leaves a gap between motivation and actual denoising assessment.\looseness=-1

This section bridges that gap with a systematic study design. First, we introduce a diverse range of noise settings from IID Gaussian noise and random projections to natural video backgrounds (\autoref{sec:noises}), enabling an analysis of how noise difficulty impacts metric learning.  Second, we separate the noise distributions during training and testing to examine denoising under both ID and OOD generalization settings (\autoref{sec:id_ood}).
Third, we introduce a direct evaluation measure, the \textit{denoising factor} (\autoref{sec:df}). Finally, to disentangle metric learning from RL, we propose the \textit{isolated metric estimation} setting, where metric learning affects only the encoder, not the RL agent (\autoref{sec:isolated}).

\vspace{-0.5em}
\subsection{Noise Settings}
\label{sec:noises}
\vspace{-0.5em}
We introduce four noise settings under the EX-BMDP framework (\autoref{sec:2.1}), where observations follow $x \sim q(\cdot \mid z)$ with $z = (s, \xi)$, each designed to reflect distinct forms of environmental variation. IID Gaussian noise is applied to both state-based and pixel-based domains to simulate sensor-level randomness. In the state-based setting, we further consider a more challenging variant with random projection, inspired by information security~\citep{dwork2006differential,gentry2009fully},  where data is privacy-preserving yet remains recoverable via decryption. Natural image and video settings apply only to pixel-based domains, simulating real-world background shifts in visual environments. The grayscale setting is widely adopted in metric learning~\citep{zhang2020learning,kemertas2021towards,zang2022simsr,chen2022learning}.\footnote{Some prior work present backgrounds in color but actually use grayscale in experiments, which may cause confusion.} An illustration of the pixel-based noise settings is shown in \autoref{fig:pixel_noise}.\looseness=-1


\vspace{-0.5em}
\paragraph{IID Gaussian Noise.} The task-irrelevant noise $\xi_t$ is sampled independently at each timestep from an $m$-dimensional isotropic Gaussian, $\xi_t \sim \mathcal N(\boldsymbol{\mu}, \sigma^2 \mathbf{I})$. 
For state-based domains, the observation is exactly the latent state, i.e.,  $x_t=z_t$ with $q$ as the identity mapping. We adjust the noise dimension $m$ or noise std $\sigma$ to modulate difficulty, whereas prior work~\citep{kemertas2021towards,ni2024bridging} only varies $m$ with a small $\sigma$. 
For pixel-based domains, noise is applied per pixel in the background and overlaid by the robot's foreground pixels, with $q$ as a rendering function.

\vspace{-0.5em}
\paragraph{IID Gaussian Noise with Random Projection.} 
It applies only to state-based domains where $s\in \mathbb R^n$. During initialization, a full-rank square matrix $\mathbf{A} \in \mathbb{R}^{(n+m) \times (n+m)}$ is constructed with entries sampled as $A_{ij} \sim \mathcal N(\mu_A, \sigma_A^2)$. At each time step, we generate $m$-dimensional IID Gaussian noise $\xi_t \sim \mathcal N(\boldsymbol{\mu}, \sigma^2 \mathbf{I})$ and then apply a linear projection to obtain observation $x_t = \mathbf{A} z_t$ where $z_t = (s_t, \xi_t)$. 
Since $\mathbf{A}$ is of full rank, $s_t$ can be recovered from $x_t$ using $\mathbf{A}^{-1}$. This setting is more challenging than IID Gaussian noise, as it linearly entangles $s_t$ and $\xi_t$, with $q$ as the linear projection.

\vspace{-0.5em}
\paragraph{Natural Images.} This setting applies only to pixel-based domains, replacing the clean background with a randomly selected natural image. As in the original environment, the background remains fixed during training. Images can be \textit{grayscale} or \textit{colored}, introducing different levels of visual complexity. In EX-BMDP notation, $\xi_t$ is stationary, and $q$ is a rendering function. 

\vspace{-0.5em}
\paragraph{Natural Videos.} This pixel-based noise setting replaces the clean background with \textit{grayscale} or \textit{colored} natural videos. The underlying noise $\xi_{t} \in \mathbb N$, representing the \textit{frame index}, follows the update rule $\xi_{t} = (\xi_{t-1} + 1)\mod N$, where $N$ is the total number of frames.

\vspace{-0.5em}
\subsection{Denoising Involves ID and OOD Generalization}
\label{sec:id_ood}
\vspace{-0.5em}

The evaluation settings differ based on whether the \textit{noise distribution} remains unchanged or shifts between training and testing.
In \textbf{in-distribution (ID) generalization evaluation} setting, the training and testing environments (EX-BMDPs) are identical, meaning the same noise distribution is applied in both phases. For example, IID Gaussian noise remains unchanged throughout training and testing.
For the \textbf{out-of-distribution (OOD) generalization evaluation setting}, the training and testing EX-BMDPs share the same task-relevant parts (i.e., $p(s'\mid s,a)$, $p(s_0)$, $\mathcal R(s,a)$) but differ in noise distributions (i.e., $p(\xi'\mid \xi)$, $p(\xi_0)$). For instance, natural videos from a training dataset are employed during training, while videos from a distinct test dataset are used during evaluation. This OOD evaluation setup is widely used in metric learning~\citep{zhang2020learning,kemertas2021towards,zang2022simsr,chen2022learning}. 

\vspace{-0.5em}
\subsection{Quantifying Denoising via the Denoising Factor}
\label{sec:df}
\vspace{-0.5em}

We introduce the \textit{denoising factor} (DF), a measure that quantifies an encoder $\phi$'s ability to filter out irrelevant details while retaining essential information.\footnote{While the oracle encoder $\phi^*$ achieves perfect denoising, direct comparison is impossible as $\phi$ lacks access to $\mathcal S$.} 
It also provides insight into how the behavioral metrics are approximated, given that exact behavioral metrics are nearly inaccessible via fixed-point iteration in high-dimensional state or action spaces.
To compute DF, we define a \textit{positive score} and a \textit{negative score} for an encoder $\phi$. Inspired by triplet loss~\citep{schroff2015facenet} in contrastive learning, we compute these scores by selecting an observation $x$ as an \textit{anchor} under a policy $\pi$, then constructing a \textit{positive example} $x_+$ that shares the same task-relevant state, i.e., $\phi^*(x) = \phi^*(x_+)$ (implying $x$ and $x_+$ are bisimilar), 
and a \textit{negative example} $x_-$ for which this equality does not necessarily hold.
We define DF under a specific policy $\pi$ because an agent may lack access to anchors from other policies, and has no reason to denoise observations outside its training distribution.\looseness=-1



\begin{definition}[Positive score] The positive score of an encoder $\phi$ w.r.t. the metric $d_\Psi$ measures the average representational distance between anchors and their positive examples:
{
  \setlength{\abovedisplayskip}{2pt}
  \setlength{\belowdisplayskip}{3pt}
\begin{equation}
\label{eq:pos_score}
\mathrm{Pos}^\pi_{d_\Psi}(\phi) \defeq \E{x\sim \rho_\pi(x),\xi_+ \sim \rho(\xi_+), x_+ \sim q(\cdot\mid \phi^*(x), \xi_+) }{d_\Psi(\phi(x), \phi(x_+))},
\end{equation}
}
where $\rho_\pi(x)$ is the stationary state distribution under the policy $\pi$ and $\rho(\xi_+)$ is a stationary noise distribution. The sampling $x_+ \sim q(\cdot\mid \phi^*(x), \xi_+)$ ensures that $x_+$ shares the same task-relevant state $s=\phi^*(x)$ but has different noise $\xi_+$. 
\end{definition}
\vspace{-0.5em}
In the temporally-independent noise setting, $\rho(\xi_+)$ matches the noise transition; in the natural-video setting, $\rho(\xi_+)$ is a uniform distribution over frame indices $\{0,1, \dots, N-1\}$. 

\begin{definition}[Negative score] The negative score of an encoder $\phi$ w.r.t. the metric $d_\Psi$ measures the average representational distance between anchors and their negative examples (IID sampled):
{
  \setlength{\abovedisplayskip}{2pt}
  \setlength{\belowdisplayskip}{2pt}
\begin{equation}
\label{eq:neg_score}
\mathrm{Neg}^\pi_{d_\Psi}(\phi) \defeq \E{x,x_-\stackrel{\mathrm{IID}}{\sim} \rho_\pi}{d_\Psi(\phi(x), \phi(x_-))}.
\end{equation}
}
\end{definition}

\begin{definition}[Denoising factor (DF)] 
\label{def:DF}
The denoising factor of an encoder $\phi$ w.r.t. the metric $d_\Psi$ is defined as the normalized difference between the negative and positive scores:
{
  \setlength{\abovedisplayskip}{3pt}
  \setlength{\belowdisplayskip}{3pt}
\begin{equation}
\label{eq:df}
    \mathrm{DF}^\pi_{d_\Psi}(\phi) \defeq \frac{\mathrm{Neg}^\pi_{d_\Psi}(\phi)-\mathrm{Pos}^\pi_{d_\Psi}(\phi)}{\mathrm{Neg}^\pi_{d_\Psi}(\phi)+\mathrm{Pos}^\pi_{d_\Psi}(\phi)} \in [-1, 1]. 
\end{equation}
}
\end{definition}
\vspace{-0.5em}
DF measures denoising, with values above $0$ indicating smaller distances for positive over negative pairs; higher values imply better denoising. The oracle encoder $\phi^*$ attains the maximum of $1$.\looseness=-1


\vspace{-0.5em}
\subsection{Decoupling Metric Learning from RL for Denoising Evaluation}
\label{sec:isolated}
\vspace{-0.5em}

In many behavioral metric learning methods, the encoder $\phi$ is optimized via a combination of losses: the RL loss (e.g., $J_{\text{SAC}}(\phi)$), the reward-prediction loss $J_{\text{RP}}(\phi)$, the self-prediction loss $J_{\text{ZP}}(\phi)$ (\autoref{eq:ZP_loss}), and a metric loss $J_{\text{M}}(\phi)$ (\autoref{eq:metric_loss}). 
This coupling makes it difficult to isolate the \textit{direct impact of metric learning} on representation quality.
Moreover, denoising factor (DF, \autoref{def:DF}) depends on both an encoder and a policy. 
Although DFs under different policies may offer initial quantitative insights, such comparisons are not rigorous, as each reflects denoising ability on policy-specific data. 
Notably, policies that frequently revisit similar task-relevant states under varying noise can significantly inflate DF.
Due to the above reasons, we propose to evaluate behavioral metric learning algorithms in an \textit{isolated metric estimation} setting.\looseness=-1

\vspace{-0.5em}
\paragraph{Isolated Metric Estimation Setting.} To isolate the effect of metric learning, we introduce an isolated \textit{metric encoder} $\tilde{\phi}$ that is optimized solely via the metric loss $J_{\text{M}}(\tilde{\phi})$, while the \textit{agent encoder} $\phi$ is updated using the RL objectives (\eg, $J_{\text{SAC}}(\phi)$ or $J_{\text{DeepMDP}}(\phi)$). 
In our experiments, regardless of the metric learning method, a SAC agent interacts with the environment and collects data for learning the metrics (illustration see \autoref{fig:iso_arch}).
This allows for a fair comparison of  $\mathrm{DF}^\pi_{d_\Psi}(\tilde{\phi})$ across different metric learning methods.
For methods that rely on self-prediction loss~\citep{zhang2020learning,kemertas2021towards,zang2022simsr}, we learn an \textit{isolated transition model} using $\tilde{\phi}$ while preventing gradient backpropagation to $\tilde{\phi}$ to ensure isolation. This setting can be naturally extended to cases where $\tilde{\phi}$ is optimized by a different combination of objectives than those used to optimize $\phi$, for example, using $J_{\text{RP}}$ and $J_{\text{ZP}}$ to optimize $\tilde{\phi}$ while using $J_{\text{SAC}}$ to optimize $\phi$.

\vspace{-0.5em}
\section{Experiments}
\label{sec:exp}
\vspace{-0.5em}

\paragraph{Experiment Organization.} We first conduct a comprehensive evaluation of all the methods (\autoref{tab:cand_summary}) across \textbf{20 state-based} DeepMind Control (DMC)  \citep{tassa2018deepmind,tunyasuvunakool2020dm_control} tasks (listed in \autoref{tab:state_levels}) and \textbf{14 pixel-based} DMC tasks (listed in \autoref{tab:pixel_levels}), under various noise settings with ID generalization evaluation.
Our results (\autoref{sec:5.1}) offer a broad assessment of agent performance and task difficulty across a significantly larger set of tasks and noise settings than prior work.
Based on these findings, we select a subset of representative tasks for case studies (\autoref{sec:5.2}) to identify key design choices (\autoref{sec:3.2}), and further investigate the isolated metric estimation setting (\autoref{sec:isolated}) in \autoref{sec:5.3}. OOD generalization following prior work is assessed in \autoref{sec:5.4}. 


\vspace{-0.5em}
\paragraph{Evaluation Protocol.} 
For aggregated scores, we report the \textit{mean episodic reward} rather than the IQM~\citep{agarwal2021deep} to avoid ignoring tasks that are too easy or too challenging. 
Corresponding per-task results for all aggregated scores in the main text are provided in Appendix~\autoref{app:addition_exp}.
In our tables, each run's mean episodic reward, bounded within $[0,1000]$, is computed as the average of 10 evaluation points collected between 1.95M and 2.05M steps, and then aggregated over seeds.
All figures and tables display $95\%$ confidence intervals (CIs) across tasks. 
We use 12 seeds for state-based and 5 seeds for pixel-based environments per task-noise combination.

\vspace{-0.5em}
\paragraph{Hyperparameters.} We adopt the original hyperparameters from the referenced implementations (details and exceptions see Appendix \autoref{app:hparams}). 
The hyperparameters are widely used and considered well-tuned for pixel-based DMC with grayscale and clean backgrounds. 
Our settings retain the same task-relevant state, varying only the noise type, which justifies this choice.
Following DBC-normed, we use the same key hyperparameters for both pixel- and state-based tasks.

\vspace{-0.5em}
\paragraph{Approximation of DF (\autoref{eq:df}).}
Observations collected in the evaluation stage are considered \textit{anchors}. We sample $16$ positive and $16$ negative examples for each anchor using the strategy in \autoref{sec:df}. 
We report $\mathrm{DF}^\pi_{||\cdot||_2}(\phi)$, where $\pi$ denotes the agent’s policy up to each evaluation point.

\vspace{-0.5em}
\subsection{Benchmarking Methods on Various Noise Settings}
\label{sec:5.1}
\vspace{-0.5em}

\paragraph{Settings.}
For state-based DMC tasks, we apply IID Gaussian noise ($\boldsymbol{\mu}=\boldsymbol{0}$), varying either (a) standard deviations $\sigma \in \{0.2,1.0,2.0,4.0,8.0\}$ (with a fixed $m=32$), or (b) noise dimensions $m\in\{2,16,32,64,128\}$ (with a fixed $\sigma=1.0$). 
For pixel-based DMC tasks, evaluation is conducted under \textbf{6 image background settings}: (1) clean background (the original pixel-based DMC setting), (2) grayscale natural images, (3) colored natural images, (4) grayscale natural videos, (5) colored natural videos, and (6) IID Gaussian noise (with $\sigma=1.0$). ID generalization evaluation is conducted in this subsection.
The aggregated reward and DF for settings (a), (b), and (1)-(6) are shown in \autoref{fig:5.1bar} and \autoref{fig:df_sensitivity},
respectively.
Per-task results are listed in Appendix \autoref{app:addition_exp}.\looseness=-1 

\vspace{-0.5em}
\paragraph{Implementation Details.} For state-based tasks, the encoder is a three-layer MLP, as used by SAC and DBC-normed. For pixel-based tasks, the encoder is a CNN followed by LayerNorm~\citep{ba2016layer}, as used by SAC-AE~\citep{yarats2021improving}. 
All the compared methods are implemented based on SAC. 
For a fair comparison, we adopt identical probabilistic latent transition models and reward models used in DBC and DBC-normed if applicable.

\begin{figure}[t]
\vspace{-2em}
    \centering
    \includegraphics[width=0.9\linewidth]{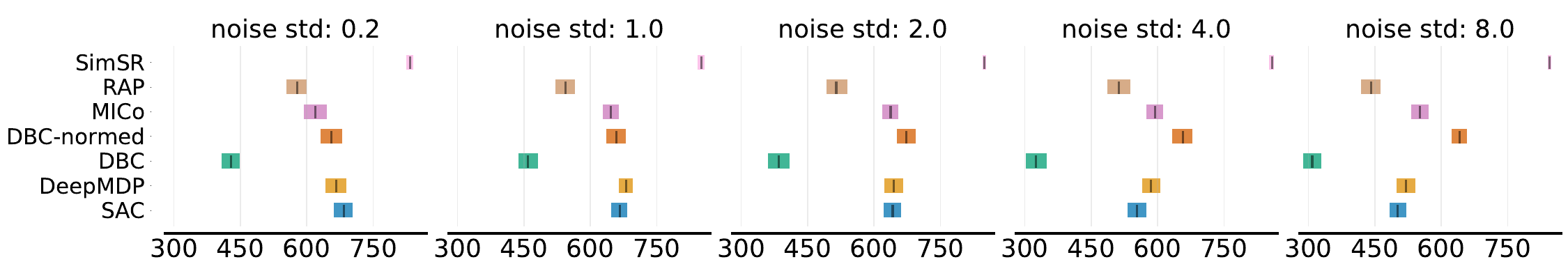}
    \vspace{0.1cm}
    \includegraphics[width=0.9\linewidth]{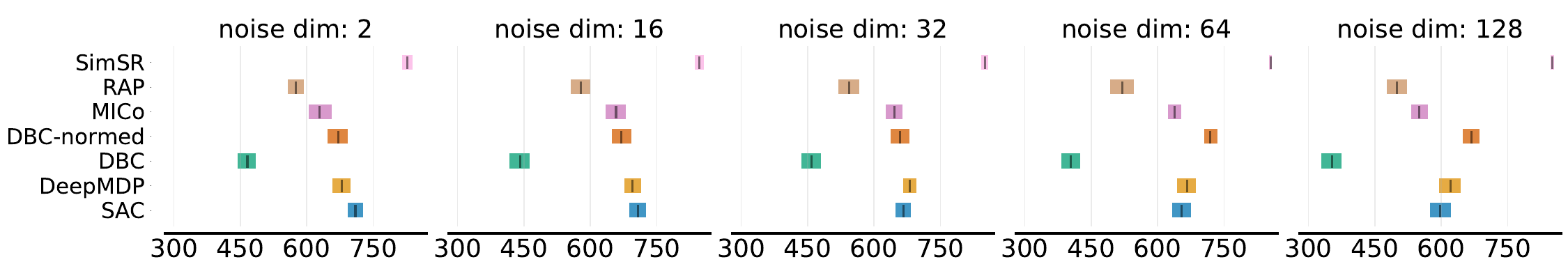}
    \vspace{0.1cm}
    \includegraphics[width=0.9\linewidth]{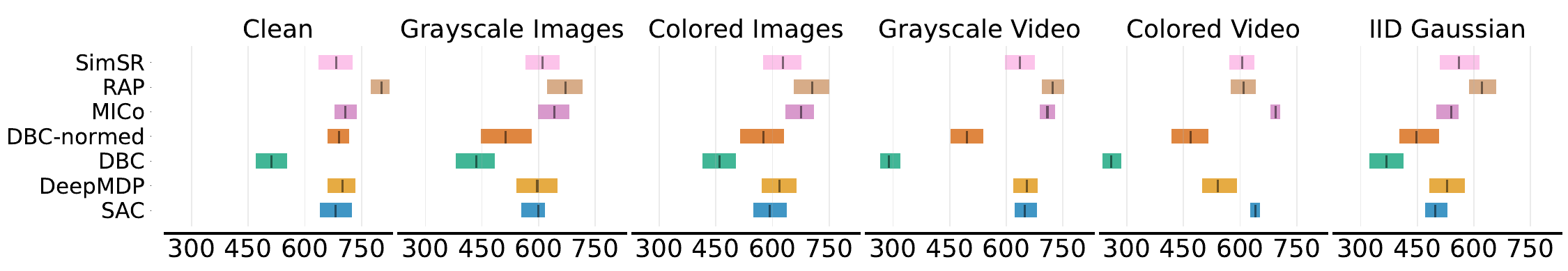}
    \vspace{-1em}
    \caption{\small\textbf{Benchmarking results: performance of seven methods across diverse noise settings}, aggregating episodic rewards from \textbf{20 state-based} (first two rows) and \textbf{14 pixel-based} tasks (last row). ``Noise std'' denotes the IID Gaussian noise's standard deviation $\sigma$, while ``noise dim'' denotes its dimension $m$. Bars show $95\%$ CI.}
    \label{fig:5.1bar}
    \vspace{-2em}
\end{figure}



\vspace{-0.5em}
\paragraph{Benchmarking Findings.} We summarize the key findings from  \autoref{fig:5.1bar} and \autoref{tab:model_update_time}.
\begin{itemize}[itemsep=1.5pt, topsep=0pt, parsep=0pt, partopsep=0pt]
    \item \textbf{SimSR} consistently achieves the highest performance in most state-based tasks, excelling in both return and DF. \textbf{RAP} performs best in most pixel-based tasks but suffers a moderate performance drop in state-based tasks. Interestingly, both SimSR and RAP were evaluated only in pixel-based domains in their papers, making our state-based findings novel.
    \item \textbf{SAC and DeepMDP}, although not metric learning methods,  deliver decent performance on both pixel-based and state-based tasks,  but are often overlooked in prior work. Conversely, \textbf{DBC}, a commonly used metric learning baseline, consistently performs the worst among all methods.
    \item Within the ranges we tested in state‐based tasks, both increasing the number of noise dimensions (at fixed $\sigma = 1.0$) and the noise standard deviation (at fixed $m = 32$) causes moderate reward drops.
    Well-performing methods remain robust to noise variation in both reward and DF (\autoref{fig:df_sensitivity}).\looseness=-1
    \item In pixel-based domains, \textit{grayscale natural video}, widely used in prior work, is not significantly harder than the clean background setting (e.g., for SAC and DeepMDP). Surprisingly, the \textit{IID Gaussian noise} setting is the most challenging, warranting further study.
    \item Different algorithms excel in different tasks (\autoref{tab:state_levels}, \autoref{tab:pixel_levels}), e.g., RAP in reacher/easy, MICo in point\_mass/easy (\autoref{fig:all_video}). Broad task coverage is essential to ensure generalizable insights.
    \item Adding objectives trades off computation efficiency. As shown in \autoref{tab:model_update_time}, the time cost of optimizing a metric loss is close to optimizing a ZP loss by comparing MICo with DeepMDP.
\end{itemize}

\begin{table}[t]
\vspace{-2em}
\centering
\caption{ \small
  Performance comparison without ($R$) and with LayerNorm ($R'$).
  The cell backgrounds in $R'$ rows reflect $R' - R$: 
  \colorbox{red!20}{\textbf{red}} if $R'-R > 0$,
  \colorbox{blue!20}{\textbf{blue}} if $R'-R < 0$, with darker shades for larger magnitude.
}
\vspace{-0.5em}
\begin{adjustbox}{max width=0.9\linewidth}
\begin{tabular}{llccccccc}
\toprule
 & & \multicolumn{7}{c}{\textbf{Methods}} \\
\cmidrule(lr){3-9}
\textbf{Task} & 
  & \textbf{SAC} & \textbf{DeepMDP} & \textbf{DBC} & \textbf{DBC-normed} & \textbf{MICo} & \textbf{RAP} & \textbf{SimSR} \\
\midrule
\multirow{2}{*}{\textbf{cartpole/balance}}
 & $R$ 
   & 967.5{\footnotesize $\pm$12.3}
   & 928.7{\footnotesize $\pm$32.3}
   & 814.1{\footnotesize $\pm$86.6}
   & 973.7{\footnotesize $\pm$12.4}
   & 966.6{\footnotesize $\pm$9.2}
   & 950.3{\footnotesize $\pm$71.2}
   & 999.5{\footnotesize $\pm$0.5} \\
 & $R'$
   & \cellcolor{red!6}   979.5{\footnotesize $\pm$20.1}
   & \cellcolor{red!15}  994.6{\footnotesize $\pm$3.6}
   & \cellcolor{red!25}  943.6{\footnotesize $\pm$24.1}
   & \cellcolor{red!3}   975.5{\footnotesize $\pm$19.9}
   & \cellcolor{blue!15} 936.1{\footnotesize $\pm$29.8}
   & \cellcolor{red!15}  981.7{\footnotesize $\pm$19.1}
   & \cellcolor{blue!8}  980.2{\footnotesize $\pm$19.3} \\
\midrule

\multirow{2}{*}{\textbf{finger/turn\_easy}}
 & $R$ 
   & 592.9{\footnotesize $\pm$176.6}
   & 327.3{\footnotesize $\pm$88.5}
   & 201.9{\footnotesize $\pm$38.5}
   & 619.0{\footnotesize $\pm$35.1}
   & 419.0{\footnotesize $\pm$75.9}
   & 240.6{\footnotesize $\pm$36.4}
   & 926.8{\footnotesize $\pm$10.9} \\
 & $R'$
   & \cellcolor{red!30} 770.6{\footnotesize $\pm$65.8}
   & \cellcolor{red!50} 955.0{\footnotesize $\pm$7.1}
   & \cellcolor{blue!5} 193.7{\footnotesize $\pm$22.2}
   & \cellcolor{blue!15} 577.5{\footnotesize $\pm$33.7}
   & \cellcolor{red!40} 745.3{\footnotesize $\pm$47.6}
   & \cellcolor{red!30} 412.8{\footnotesize $\pm$39.3}
   & \cellcolor{red!5}  934.6{\footnotesize $\pm$16.0} \\
\midrule

\multirow{2}{*}{\textbf{walker/run}}
 & $R$ 
   & 635.3{\footnotesize $\pm$19.8}
   & 347.8{\footnotesize $\pm$84.0}
   & 23.9{\footnotesize $\pm$2.6}
   & 628.9{\footnotesize $\pm$25.7}
   & 455.9{\footnotesize $\pm$41.3}
   & 649.4{\footnotesize $\pm$11.1}
   & 760.6{\footnotesize $\pm$19.4} \\
 & $R'$
   & \cellcolor{blue!25} 534.5{\footnotesize $\pm$53.6}
   & \cellcolor{red!45}  776.0{\footnotesize $\pm$5.9}
   & \cellcolor{red!35}  342.9{\footnotesize $\pm$54.5}
   & \cellcolor{red!25}  759.8{\footnotesize $\pm$19.4}
   & \cellcolor{red!28}  611.0{\footnotesize $\pm$22.5}
   & \cellcolor{red!6}   661.6{\footnotesize $\pm$88.4}
   & \cellcolor{red!3}   761.6{\footnotesize $\pm$20.0} \\
\midrule

\multirow{2}{*}{\textbf{quadruped/run}}
 & $R$ 
   & 233.8{\footnotesize $\pm$59.0}
   & 381.1{\footnotesize $\pm$64.9}
   & 219.5{\footnotesize $\pm$63.5}
   & 433.3{\footnotesize $\pm$47.3}
   & 417.9{\footnotesize $\pm$44.2}
   & 441.1{\footnotesize $\pm$93.7}
   & 847.4{\footnotesize $\pm$21.7} \\
 & $R'$
   & \cellcolor{red!35} 483.8{\footnotesize $\pm$6.0}
   & \cellcolor{red!45} 891.1{\footnotesize $\pm$17.8}
   & \cellcolor{red!18} 291.3{\footnotesize $\pm$55.0}
   & \cellcolor{red!20} 509.5{\footnotesize $\pm$35.4}
   & \cellcolor{red!15} 467.4{\footnotesize $\pm$21.8}
   & \cellcolor{red!35} 687.3{\footnotesize $\pm$59.8}
   & \cellcolor{blue!8} 832.9{\footnotesize $\pm$63.4} \\
\midrule

\multirow{2}{*}{\textbf{finger/turn\_hard}}
 & $R$ 
   & 177.6{\footnotesize $\pm$66.1}
   & 168.3{\footnotesize $\pm$50.4}
   & 97.9{\footnotesize $\pm$11.8}
   & 414.7{\footnotesize $\pm$49.5}
   & 207.2{\footnotesize $\pm$53.8}
   & 110.8{\footnotesize $\pm$17.0}
   & 885.4{\footnotesize $\pm$24.5} \\
 & $R'$
   & \cellcolor{red!35} 495.7{\footnotesize $\pm$53.1}
   & \cellcolor{red!60} 925.8{\footnotesize $\pm$14.7}
   & \cellcolor{blue!5} 95.9{\footnotesize $\pm$12.4}
   & \cellcolor{red!18} 473.4{\footnotesize $\pm$39.9}
   & \cellcolor{red!25} 335.1{\footnotesize $\pm$42.6}
   & \cellcolor{red!20} 201.1{\footnotesize $\pm$26.3}
   & \cellcolor{red!15} 917.1{\footnotesize $\pm$13.9} \\
\midrule

\multirow{2}{*}{\textbf{hopper/hop}}
 & $R$ 
   & 0.1{\footnotesize $\pm$0.0}
   & 31.3{\footnotesize $\pm$16.7}
   & 0.3{\footnotesize $\pm$0.3}
   & 51.1{\footnotesize $\pm$13.4}
   & 0.4{\footnotesize $\pm$0.3}
   & 0.8{\footnotesize $\pm$0.5}
   & 233.9{\footnotesize $\pm$22.6} \\
 & $R'$
   & \cellcolor{red!6}  12.4{\footnotesize $\pm$4.9}
   & \cellcolor{red!30} 195.4{\footnotesize $\pm$19.9}
   & \cellcolor{red!4}  6.2{\footnotesize $\pm$4.8}
   & \cellcolor{red!18} 125.8{\footnotesize $\pm$22.3}
   & \cellcolor{red!3}  1.8{\footnotesize $\pm$2.0}
   & \cellcolor{red!2}  1.0{\footnotesize $\pm$0.3}
   & \cellcolor{blue!12} 207.4{\footnotesize $\pm$36.4} \\
\bottomrule
\end{tabular}
\end{adjustbox}
\label{tab:5.2ln}
\vspace{-1.5em}
\end{table}

\vspace{-0.5em}
\subsection{What Matters in Metric and Representation Learning?}
\label{sec:5.2}
\vspace{-0.5em}


To identify key factors in metric learning, we conduct case studies on the design choices outlined in \autoref{sec:3.2}. Six \textit{easy-to-hard} state-based DMC tasks (see  \autoref{tab:5.2ln}) are selected
for detailed analysis.

\vspace{-0.5em}
\paragraph{Case studies design.} First, a notable difference between our default encoder implementations for state-based and pixel-based tasks is the inclusion of normalization, which may significantly impact benchmarking outcomes.
SimSR, the best-performing algorithm in state-based environments, employs $L_2$ normalization in the representation space and discusses its effectiveness~\citep{zang2022simsr}.
This inspires us to examine whether normalization benefits \textit{other} metric learning methods.
Considering the target metric \textit{misspecification} issue (see Appendix \autoref{app:normalization}), we examine the effect of incorporating LayerNorm on the representation space rather than using $L_2$ normalization in methods that do not inherently require it.
Second, several techniques used by the best-performing methods merit further analysis. Specifically, SimSR, RAP, and MICo (which excels in \textit{colored natural video} settings) utilize Huber metric loss instead of MSE, while MICo incorporates the target trick (\autoref{sec:3.2}).
To evaluate the effectiveness of these techniques, we apply them to DBC-normed, which originally does not include any of these modifications.
Third, we investigate the performance of methods \textit{with LayerNorm} in a challenging setting: \textit{IID Gaussian noise with random projection} (\autoref{sec:noises}) with $\sigma \in \{0.2,1.0, 2.0,4.0,8.0\}$ (with a fixed noise dimension $m=32$), shown in \autoref{fig:5.2RP}.
Important findings in \autoref{tab:5.2ln}, \autoref{fig:5.2DBC-normed}, \autoref{fig:5.2simsr}, and \autoref{fig:5.2RP} are as follows:\looseness=-1

\begin{figure}[t]
\vspace{-2em}
    \centering

    \includegraphics[width=0.62\linewidth]{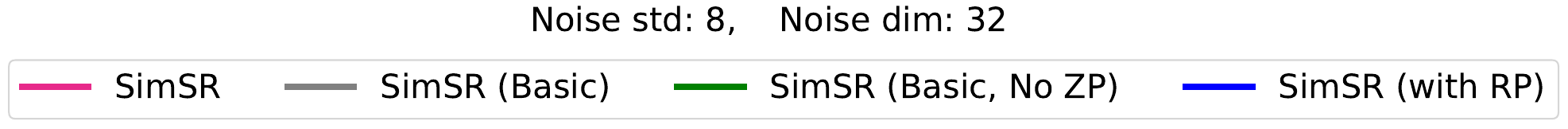}
    \vspace{-0.1cm}
    \includegraphics[width=\linewidth]{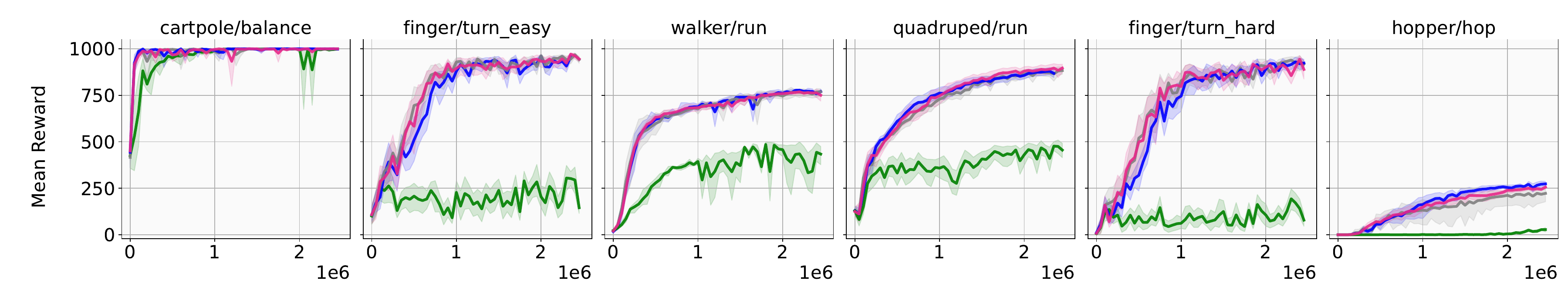}
    \includegraphics[width=\linewidth]{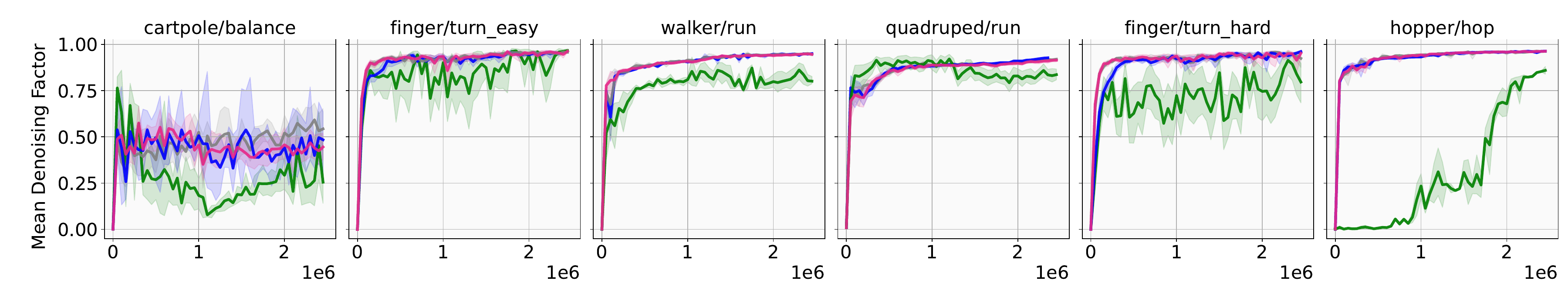}
    \vspace{-2em}
   \caption{\small\textbf{Ablation study on ZP loss on SimSR.} 
   ``SimSR'' is the agent benchmarked in \autoref{sec:5.1}, where ZP is integral to the metric estimation. Therefore, we resort to
   ``SimSR (Basic)'' setting (Theorem 2, ~\citet{zang2022simsr}), where ZP is independent of the metric estimation, and ``SimSR (Basic, No ZP)'' is the setting that ZP is detached from SimSR (Basic). ``SimSR (with RP)'' adds RP loss to original SimSR.
   This ablation highlights the impact of detaching ZP on the overall performance.
   X-axis stands for the environmental step.
   }
    \vspace{-1em}
    \label{fig:5.2simsr}
\end{figure} 

\begin{itemize}[itemsep=1.5pt, topsep=0pt, parsep=0pt, partopsep=0pt]
    \item \textbf{Most methods benefit from LayerNorm in the representation space}, improving both reward (\autoref{tab:5.2ln}) and DF (\autoref{fig:5.2ln_huber}). Notably, \textbf{DeepMDP with LayerNorm performs comparably to SimSR}.\footnote{Our additional experiments reveal that removing LayerNorm in pixel-based environments causes a substantial performance drop across all methods, highlighting the critical role of normalization.} For DBC-normed, employing Huber loss for the metric and incorporating the target trick yield a modest performance improvement (\autoref{fig:5.2DBC-normed}).
    \item \textbf{ZP loss is crucial for SimSR's success} in noisy state-based tasks (\autoref{fig:5.2simsr}).\looseness=-1
    \item A significant performance drop occurs for all agents when increasing the noise standard deviation in \textbf{IID Gaussian with random projection} setting (\autoref{fig:5.2RP}, \autoref{fig:rp_sensitivity}), even with LayerNorm applied. Nevertheless, DeepMDP and SimSR remain relatively robust to the noise.\looseness=-1
\end{itemize}

\begin{figure}[ht]
    \centering
    \includegraphics[width=0.9\linewidth]{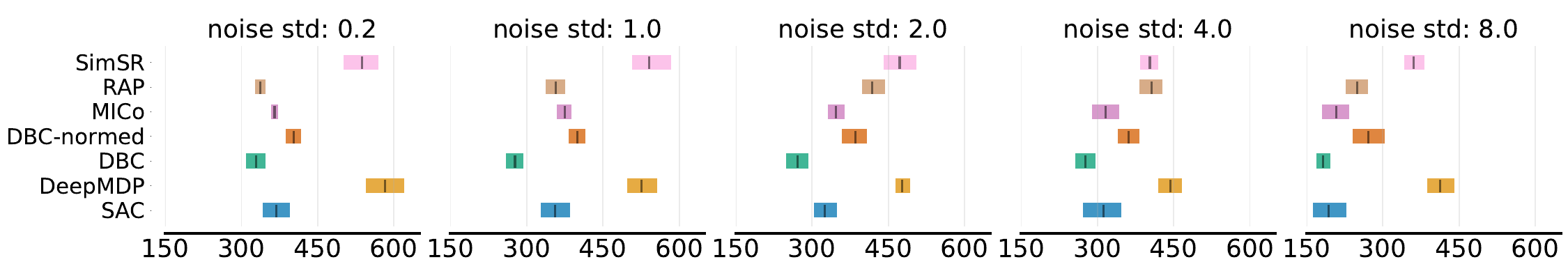}
    \vspace{0.1cm}
    \includegraphics[width=0.9\linewidth]{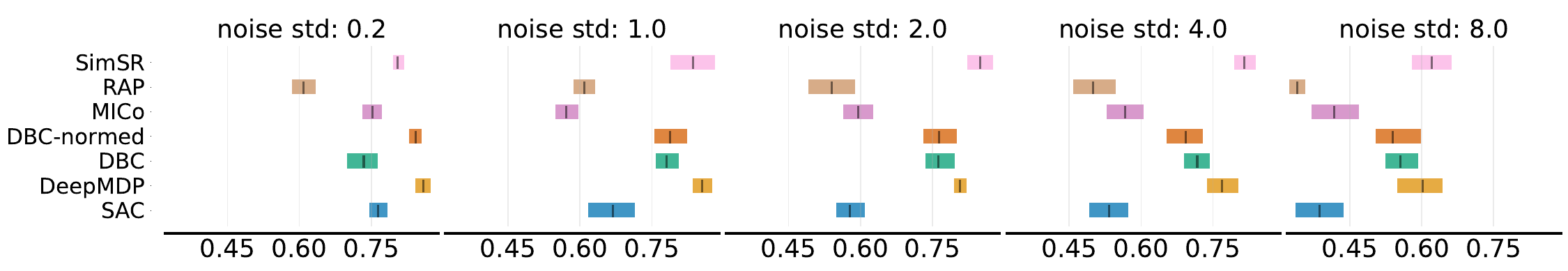}
    \vspace{-1em}
    \caption{\small \textbf{Aggregated reward (top row) and DF (bottom row)} of seven agents on \textbf{IID Gaussian with random projection} settings, varying noise standard deviation, in the 6 selected state-based tasks.}
    \label{fig:5.2RP}
    \vspace{-1em}
\end{figure}

 

\vspace{-0.5em}
\subsection{Isolated Metric Estimation Setting: Does Metric Learning Help with Denoising?}
\label{sec:5.3}
\vspace{-0.5em}
We further analyze the proposed isolated setting (\autoref{sec:isolated}) in the 6 selected tasks (\autoref{fig:5.3compared}).
The experiments include:
(i) isolated metric estimation for metric learning methods, SAC, and DeepMDP where $\tilde{\phi}$ is optimized using either the Q loss, ZP loss alone (i.e., the minimalist algorithm~\citep{ni2024bridging}), or both RP and ZP losses.
(ii) The same as (i) but with LayerNorm applied to $\tilde{\phi}$.
(iii) Building on (ii), additionally applying ZP loss to $\tilde{\phi}$ for all metric learning methods. 
All evaluations are conducted on ID generalization using SAC with LayerNorm as the base agent.
We observe that:
\vspace{-0.5em}
\begin{itemize}[itemsep=1.5pt, topsep=0pt, parsep=0pt, partopsep=0pt]
    \item Generally, metrics learned in isolation provide some denoising but underperform relative to optimizing the ZP loss on $\tilde{\phi}$ (i.e., DeepMDP (w/o RP) in \autoref{fig:5.3compared}, middle row). Including an additional RP loss to shape $\tilde{\phi}$ (as in DeepMDP) yields limited DF improvement to the ZP-only setting.
    \item Applying LayerNorm to the isolated encoder $\tilde{\phi}$ substantially improves DF for DeepMDP (with or w/o RP), but offers only modest gains for metric learning methods (top and middle rows of \autoref{fig:5.3compared}).\looseness=-1
    \item Adding metric losses to ZP loss on $\tilde{\phi}$ generally does not improve DF (\autoref{fig:5.3compared}, mid and bottom rows).\looseness=-1
    \item MICo’s DF remains relatively low, which aligns with its theoretical property that the metric for positive examples is non-zero (\autoref{def:mico}), as MICo does not enforce zero self-distance.
\end{itemize} 

\begin{figure}[t]
     \vspace{-2em}
    \centering
    \includegraphics[width=0.8\linewidth]{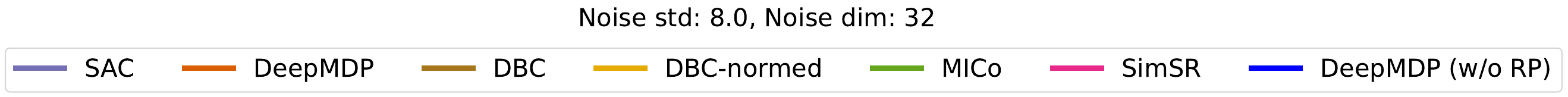}
     \vspace{-0.3cm}
     \includegraphics[width=1\linewidth]{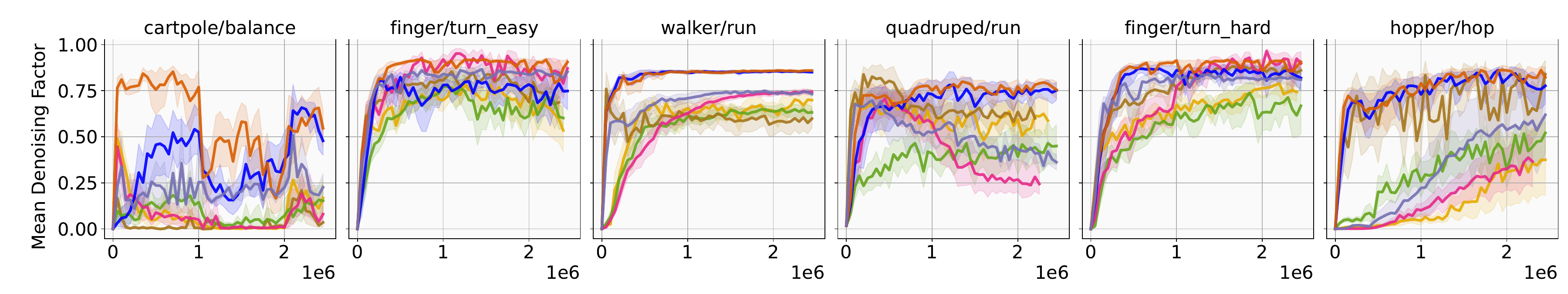}
     \vspace{-0.3cm}
     \includegraphics[width=1\linewidth]{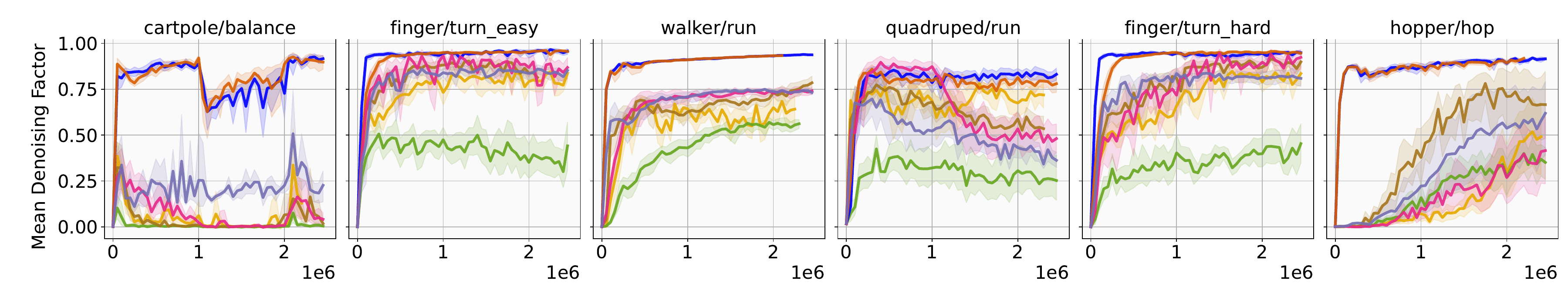}
     \vspace{-0.1cm}
     \includegraphics[width=1\linewidth]{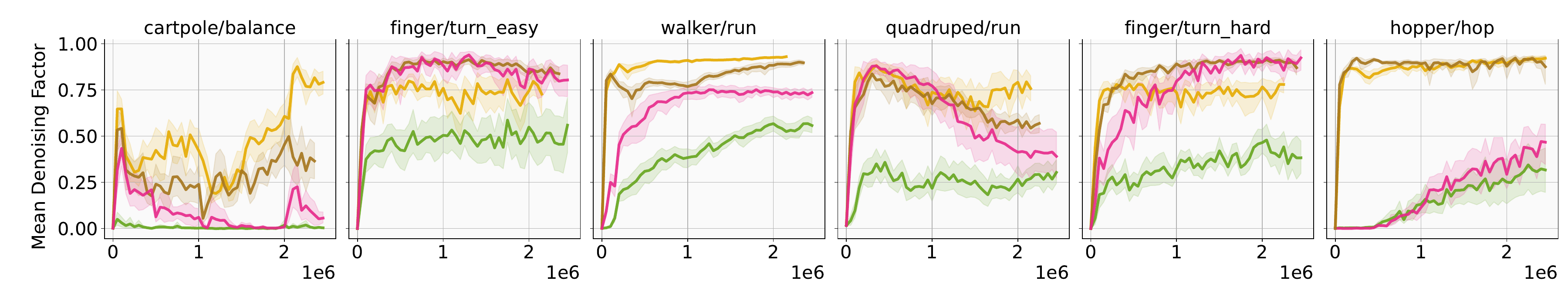}
     \vspace{-2em}
    \caption{\small DF for isolated encoder $\tilde{\phi}$ -- \textbf{top row}: without LayerNorm; \textbf{middle row}: with LayerNorm; \textbf{bottom row}: with LayerNorm, $\Tilde{\phi}$ co-trained with metric and ZP losses. See \autoref{fig:5.3iso_sacln_rew} for reward curves and \autoref{fig:df_co_training_RL} for DF for the agent encoder $\phi$ co-trained with metric and RL losses.}
    \vspace{-1em}
    \label{fig:5.3compared}
\end{figure}



\vspace{-0.8em}
\subsection{OOD Generalization Evaluation on Pixel-based Tasks}
\label{sec:5.4}
\vspace{-0.5em}

\begin{wrapfigure}[9]{r}{0.6\linewidth}
\vspace{-1.7em}
\centering
\includegraphics[width=0.4\linewidth]{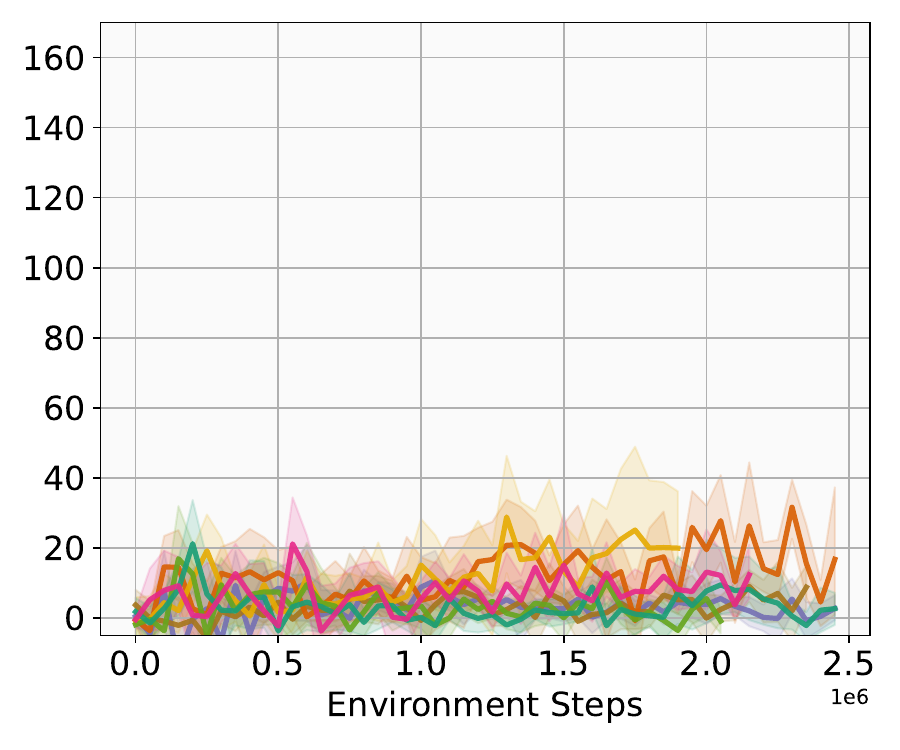}%
\hfill
\includegraphics[width=0.4\linewidth]{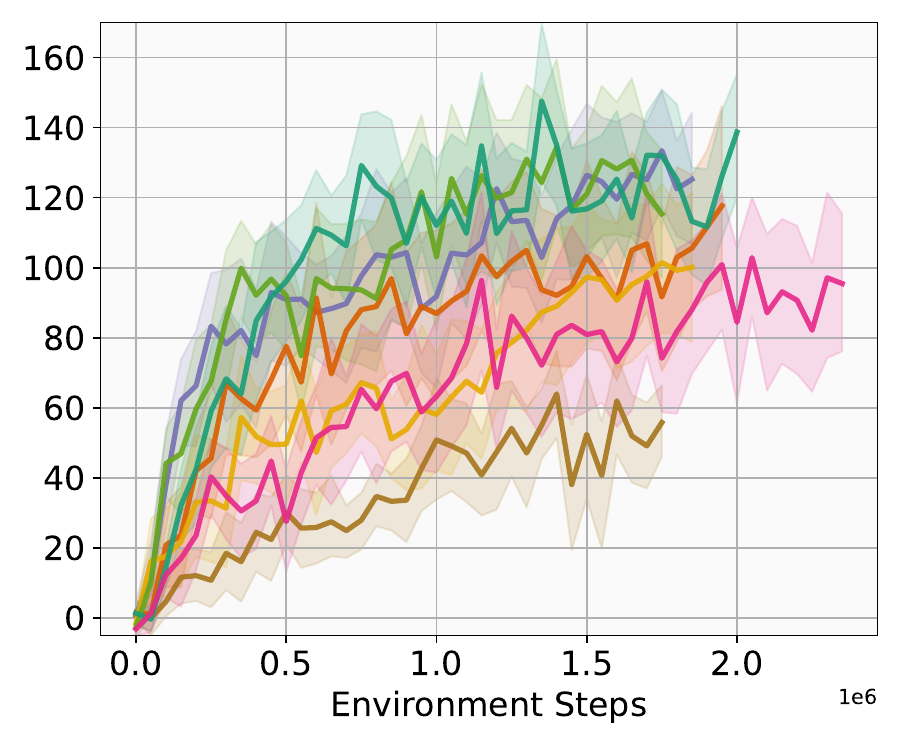}%
\includegraphics[width=0.18\linewidth]{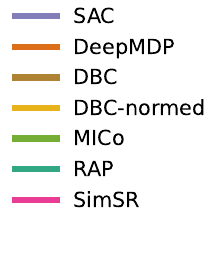}
\vspace{-1.2em}
\caption{\small \textbf{Reward difference} between ID and OOD evaluation in the grayscale video setting (left) and colored video setting (right), aggregated over 14 pixel-based tasks in \autoref{tab:pixel_levels}.}
\label{fig:gen_gap}
\end{wrapfigure}

While prior work has focused on OOD generalization in pixel-based settings, we extend this analysis by evaluating all \textbf{14 pixel-based tasks}. 
The ``grayscale video'' setting (and similarly for other settings) in \autoref{fig:5.4ood} denotes using grayscale videos as distracting backgrounds for both training and evaluation, with distinct video samples in each phase.
Takeaways in \autoref{fig:gen_gap} and \autoref{fig:5.4ood} are as follows:

\begin{figure}[htb]
    \centering
    \includegraphics[width=0.9\linewidth]{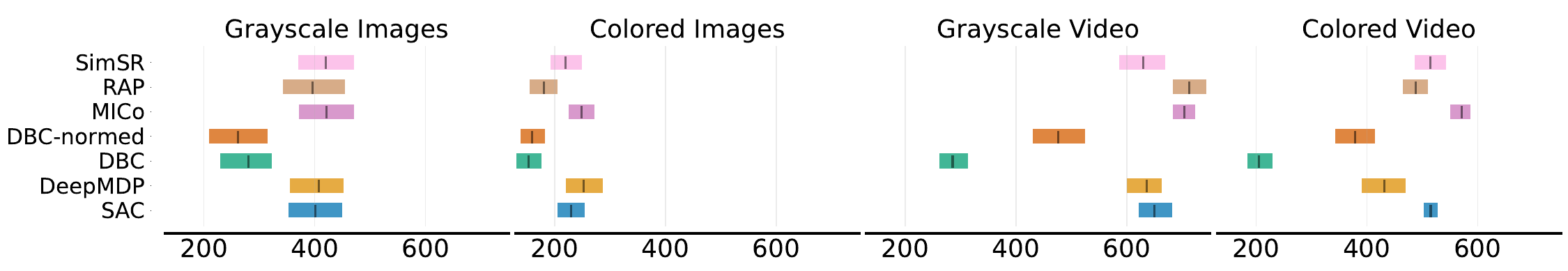}
    \vspace{0.1cm}
    \includegraphics[width=0.9\linewidth]{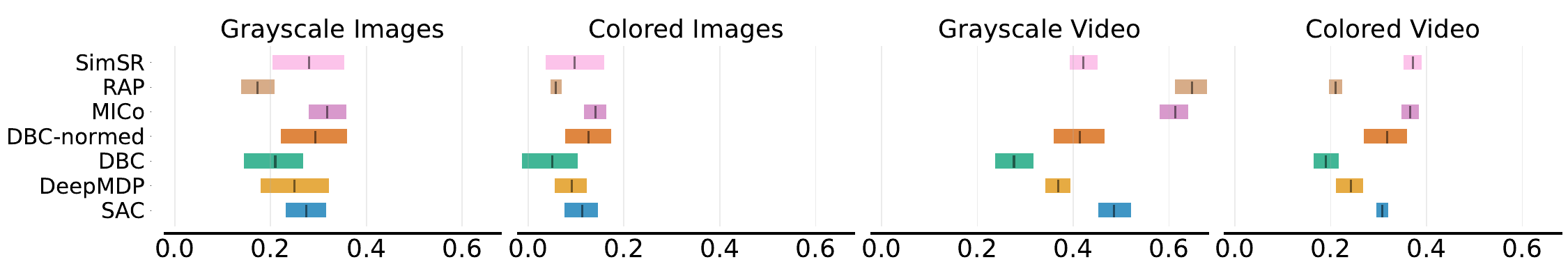}
\vspace{-1em}
    \caption{\small Aggregated reward (top row) and DF (bottom row) of 7 agents on various noise settings in 14 pixel-based tasks in \autoref{tab:pixel_levels} with \textbf{OOD generalization} evaluation.}
    \label{fig:5.4ood}
    \vspace{-1.5em}
\end{figure}

\begin{itemize}[itemsep=1.5pt, topsep=0pt, parsep=0pt, partopsep=0pt]
    \item Comparing \autoref{fig:5.4ood} (OOD) with \autoref{fig:5.1bar} (ID), all methods struggle to generalize in both grayscale and colored image settings. 
    Unlike video backgrounds, which provide changes over time that help distinguish relevant and irrelevant features, static images lack such variation, making adaptation to unseen backgrounds more difficult.
    \item Even with OOD generalization evaluation, SAC and DeepMDP remain competitive (\autoref{fig:5.4ood}). 
    \item OOD generalization is significantly more challenging in the \textit{colored} video setting than in the \textit{grayscale} video setting (\autoref{fig:gen_gap}). Surprisingly, even baselines like SAC exhibit a low reward difference, questioning the necessity of incorporating a metric loss in the widely used grayscale setting.
\end{itemize}



\vspace{-1em}
\section{Future Work}
\vspace{-1em}
Our empirical study focuses on continuous control, particularly locomotion, with SAC as the base RL algorithm. As future work, we plan to extend this to discrete control domains and other embodiments with alternative base RL algorithms.
To better disentangle algorithmic effects, future metric learning research may benefit from evaluation in \textit{conceptually simple yet empirically challenging} environments with distracting noise, such as Gaussian noise with random projection in state-based domains (\autoref{sec:noises}).
In addition, evaluation could include the denoising factor (\autoref{sec:df}) and comparisons between in- and out-of-distribution generalization (\autoref{sec:id_ood}). 
Building on our findings, future work could design new metric-based objectives that work well with self-prediction and normalization, providing additional benefits beyond what these components already offer.



\subsubsection*{Acknowledgments}
\label{sec:ack}
Ziyan Luo is funded by the Canada CIFAR AI Chair Program.
Tianwei Ni is funded by RBC Borealis Graduate Fellowship and Google DeepMind. 
We thank Sahand Rezaei-Shoshtari, Jianda Chen, Weijian Liao, 
Pablo Samuel Castro, Ayoub Echchahed, Lu Li, Xutong Zhao, Wesley Chung, Gandharv Patil, 
Aditya Mahajan, and anonymous reviewers for valuable discussions that helped shape our paper.
This research was enabled by compute resources provided by Mila (\url{https://mila.quebec}) and the Digital Research Alliance of Canada (\url{https://alliancecan.ca/}).


{\small
\bibliography{main}
\bibliographystyle{rlj}
}
\beginSupplementaryMaterials
\appendix

\part*{Appendix}
\etocsetnexttocdepth{subsection}         
\addcontentsline{toc}{part}{Appendix}
\localtableofcontents

\section{Related Work}
\label{sec:related}

\paragraph{State Abstraction and Behavioral Metrics.} State abstraction in MDPs is traditionally achieved by grouping equivalent states into a single abstract state.
Various types of abstractions are proposed using the criteria of different equivalence relations on aggregating the states, e.g., bisimulation (model-irrelevance) abstraction and $Q^*$-irrelevance abstraction~\citep{li2006towards,jiang2018notes}.
Bisimulation, as a canonical equivalence relation, originates from concurrency theory in the context of labeled transition systems~\citep{milner1980calculus,park1981concurrency} and labeled Markov process~\citep{desharnais2002bisimulation,panangaden2009labelled}. In the context of MDPs, it can be regarded as a refinement of both action-sequence equivalence (\textit{identical future reward sequences} given the same action sequence as input) and optimal value equivalence~\citep{givan2003equivalence}. 
To relax the strict dichotomy inherent in bisimulation relations, \citet{ferns2004metrics} propose using pseudometrics to measure the degree of bisimilarity between two states in a finite MDP and~\citet{ferns2011bisimulation} extend it further to continuous MDPs.
Policy-dependent bisimulation metric (PBSM)~\citep{castro2020scalable} is a scalable variant of the bisimulation metric (BSM) that restricts behavioral similarity to the current policy, eliminating the need to evaluate all actions.
GCB~\citep{hansen2022bisimulation} extends PBSM to goal-conditioned RL, proposing a similar metric that measures the distance between two state-goal pairs.
\citet{castro2021mico} propose the MICo distance, which enables sample-based computation of transition distance while also offering theoretical benefits~\citep{castro2023kernel}. Building on MICo, improvements have been introduced by SimSR~\citep{zang2022simsr} and RAP~\citep{chen2022learning} (see \autoref{app:metric_def}). \citet{chen2024state} further 
propose a multi-step behavioral metric that measures the distance between two pairs of states.\looseness=-1

Beyond state abstraction, state-action abstraction has been extensively studied as another paradigm of model minimization.
As early as 1978, \citet{whitt1978approximations,whitt1979approximations} proposes constructing approximate dynamic‑programming models by restricting to subsets of states and actions, establishing bounds on how the approximate return function diverges from the true optimum.
 In modern RL, this abstraction paradigm manifests in the form of \emph{MDP homomorphism}~\citep{ravindran2002model,ravindran2004algebraic}, which compresses state‑action pairs under equivalence relations that preserve reward and transition dynamics, as well as in their relaxation via the \textit{lax bisimulation metric}~\citep{taylor2008lax}. A promising application is the exploitation of environmental symmetries, and recent work has focused on scalable methods for discovering and approximating such homomorphisms~\citep{biza2018online,van2020mdp,van2020plannable,rezaei2022continuous,liao2023policy,panangaden2024policy}.


\paragraph{Representation and Metric Learning in RL.} 
As shown in \autoref{tab:cand_summary}, many compared methods adopt self-prediction loss in addition to metric loss to shape the representation; see~\citet{ni2024bridging} for a comprehensive review.
Notably, DeepMDP~\citep{gelada2019deepmdp} provides a theoretically grounded and empirically validated framework for self-predictive representation learning in RL, incorporating ZP and RP auxiliary losses alongside standard RL objectives to construct a latent MDP.
SPR~\citep{schwarzer2020data} adopts a similar self-supervised approach by learning a transition model and shaping the representation using signals from the $k$-th future observation rather than the next observation in DeepMDP. 

There are multiple ways to connect representation and metric learning in RL. For example, in \autoref{sec:3}, we introduce isometric embedding as a unifying approach adopted by all our candidate methods. 
Beyond this, \citet{agarwal2021contrastive} relax the binary indicator in the contrastive loss by substituting it with a metric‑induced continuous similarity measure. \citet{liu2023robust} use metrics for prototype representation clustering, and \citet{wang2023efficient,wang2024rethinking} utilize the learned metrics to shape the reward.\looseness=-1

Beyond their use in online, model-free RL, behavioral metrics have also been applied to representation learning in offline RL~\citep{dadashi2021offline,hong2023offline,pavse2023state,zang2023understanding} and model-based RL~\citep{shimizu2024bisimulation}.


\paragraph{Evaluation on Representation Learning and Denoising.} 
A closely related work, \citet{tomar2021learning}, evaluate representation learning methods on six \textit{distracting pixel-based} DMC tasks and Atari games. They show that a simple baseline using reward prediction and self-prediction (akin to DeepMDP in our implementation) outperforms one metric learning method (i.e., DBC). Similarly, \citet{li2022does} study self-supervised losses like BYOL~\citep{grill2020bootstrap} on six \textit{clean pixel-based} DMC tasks and Atari games. They find that BYOL, akin to self-prediction, benefits model-free agents but is inferior to data augmentation.
For denoising evaluation, previous works~\citep{zhang2020learning,chen2022learning,zang2022simsr} \textit{qualitatively} assess it by visualizing representations in a two-dimensional space using t-SNE plot, showing their connection to the approximated state-value function. 
The learned encoder is evaluated by determining whether anchor-positive pairs are closer than anchor-negative pairs in these plots, although these results may not be statistically significant.

Unlike prior empirical studies, our work specifically evaluates metric learning (five methods) across a broader range of state-based and pixel-based DMC tasks, considering both denoising factors and returns.  Beyond self-prediction, we identify other key design choices, such as layer normalization and metric loss function, that impact performance. Denoising factors are computed from batches of random samples throughout evaluation.

\paragraph{Benchmarks on Denoising and Generalization.} 
In state-based domains, IID Gaussian noises are commonly employed in prior work~\citep{kemertas2021towards,nikishin2022control,ni2024bridging} as a challenging benchmark by varying noise dimensions. However, our findings in DMC show that this protocol does not present sufficient difficulty for the best-performing method. 
In pixel-based domains, IID Gaussian noise is less common but has been studied in \citet{zhang2018natural} on Atari games. Unlike our findings, they report that Atari with natural video backgrounds is more challenging than with Gaussian noises, possibly due to domain differences (Atari vs. DMC).
Similar to our IID Gaussian with random projection setting, \citet{voelcker2024does} projects observations using a random binary matrix in discrete domains.

In pixel-based domains, beyond the commonly used grayscale video distractions~\citep{zhang2020learning,kemertas2021towards,zang2022simsr,chen2022learning}, several other works have explored injecting distracting video backgrounds. 
In DeepMind Control Remastered~\citep{grigsby2020measuring}, environments are initialized with a visual seed that randomly selects graphical elements -- such as floor textures, backgrounds, body colors, and camera/lighting settings -- to drastically alter the appearance of image renderings while leaving the transition dynamics unchanged, allowing millions of distinct visualizations of the same state sequence. The Distracting Control Suite~\citep{stone2021distracting} provides a benchmark by introducing challenging visual distractions including similar graphical variations,
resulting in a spectrum of difficulty levels.
The DMControl Generalization Benchmark (DMC-GB)~\citep{hansen2021softda} provides a framework for evaluating agents' generalization ability in pixel-based DMC tasks with distractions from random colors and video backgrounds. Unlike our setting, it trains agents in a fixed, clean background and evaluates them in a distracted environment, emphasizing the effectiveness of image-augmentation-based methods.
RL-ViGen~\citep{yuan2023rl} creates a benchmark including domains across indoor navigation, autonomous driving, and robotic manipulation, and various generalization types including visual appearances, camera views, lighting conditions, scene structures, and cross embodiments.

\paragraph{Feature Normalization in RL.} Recent work has demonstrated the benefits of layer normalization (LayerNorm)~\citep{ba2016layer} in deep RL, especially in state-based domains~\citep{hiraoka2021dropout,smith2022walk,nauman2024overestimation}.  Our findings also highlight LayerNorm's critical role in state-based tasks but with key differences. Unlike most prior work, we apply LayerNorm only to the encoder's output layer, not every dense layer, as in~\citet{fujimoto2023sale}. Additionally, we demonstrate the combined impact of LayerNorm and self-prediction on distracting tasks, rather than in purely model-free RL on clean tasks.
In pixel-based domains, most prior metric learning methods incorporate LayerNorm in the encoder following SAC-AE~\citep{yarats2021improving} but do not evaluate its contribution in ablation studies. 

Beyond LayerNorm, $L_2$ normalization is also widely used in RL~\citep{hussing2024dissecting}. SimSR~\citep{zang2022simsr} demonstrates its effectiveness for its own method in pixel-based domains. SPR~\citep{schwarzer2020data} integrates $L_2$ normalization into the self-prediction loss, effectively transforming MSE into cosine loss. However, unlike our implementation, SPR does not normalize representations for the actor or critic loss.

Finally, various other normalization techniques~\citep{bhatt2019crossq,fujimoto2023sale,li2023normalization,hansen2023td} have been explored in RL, focusing on sample efficiency and generalization. Our work complements these efforts by analyzing the role of LayerNorm in representation and metric learning on distracting tasks.

\paragraph{Beyond Behavioral Metrics.}
The idea of learning metrics that represent state discrepancy and embedding them into the representation space (\autoref{eq:isometric}) can be extended to a broader context beyond behavioral metrics that rely heavily on the reward signal.
As prominent examples, \citet{agarwal2021contrastive} utilize a state distance on optimal policies to shape the representation. 
\citet{wang2023efficient} introduce an inverse dynamics bisimulation metric that incorporates the discrepancy in predicted inverse dynamics into the PBSM formulation.
Similarly, \citet{rudolph2024learning} propose action-bisimulation metric that captures the equivalence of states in terms of action controllability only, where a small distance indicates that the states share similar inverse dynamics on the representation space.
\citet{myers2024learning} highlight the link between metrics and goal-conditioned RL by introducing a reward-free temporal distance between a state and a goal.

\section{Notation}
\label{app:notation}
\autoref{tab:abbreviations} presents the abbreviations used frequently in our work and their full names.

\autoref{tab:notation_mdp}, \autoref{tab:notation_agent}, and \autoref{tab:notation_metric} show the glossary used in this paper.

\begin{table}[h]
\centering
\small
\caption{Abbreviations and their full names.}
\vspace{-1em}
\begin{adjustbox}{max width=\linewidth}
\begin{tabular}{ll}
\toprule
\textbf{Abbreviation} & \textbf{Full Name} \\ \midrule
RP & Reward Prediction \\
ZP & Self-prediction \\
BMDP & Block Markov Decision Process~\citep{du2019provably} \\ 
EX-BMDP & Exogenous BMDP~\citep{efroni2021provable} \\ \midrule
BSM & Bisimulation Metric~\citep{ferns2004metrics,ferns2011bisimulation} \\
PBSM & Policy-dependent Bisimulation Metric~\citep{castro2020scalable} \\ \midrule
SAC   & Soft Actor-Critic~\citep{haarnoja2018soft} \\
SAC-AE & Soft Actor-Critic + Autoencoder~\citep{yarats2021improving} \\
DBC   & Deep Bisimulation Control~\citep{zhang2020learning} \\
DBC-normed  & DBC with max normalization~\citep{kemertas2021towards} \\
MICo  & Matching under Independent Couplings~\citep{castro2021mico} \\
SimSR & Simple Distance-based State Representation~\citep{zang2022simsr} \\
RAP   & Reducing Approximation Gap~\citep{chen2022learning} \\ \midrule
DF    & Denoising Factor \\ 
IID & Independent and Identically Distributed \\
ID Generalization & In-distribution Generalization \\
OOD Generalization & Out-of-distribution Generalization \\
DMC   & DeepMind Control Suite~\citep{tassa2018deepmind,tunyasuvunakool2020dm_control} \\
\bottomrule
\end{tabular}
\end{adjustbox}
\label{tab:abbreviations}
\end{table}

\begin{table}[h]
    \centering
    \small
    \begin{minipage}{0.53\linewidth}
        \centering
        \caption{\small Glossary of notations in EX-BMDP (\autoref{sec:2.1}). The top section lists symbols related to the latent states, while the bottom section defines symbols related to grounded observations.}
\setlength{\tabcolsep}{0.2em}
        \begin{tabular}{cc}
            \toprule
            \textbf{Symbol} & \textbf{Description} \\ 
            \midrule
            $z = (s,\xi)\in \mathcal Z$ & Environment's latent state  \\ 
            $p(z'\mid z,a)$ & Latent state transition \\            
            $s\in \mathcal S$ & Task-relevant state  \\
            $\xi\in \Xi$ & Task-irrelevant noise \\ 
            $a\in \mathcal A$ & Action \\ 
            $R(s,a)$ & Latent reward function \\
            $r\in \mathbb R$ & Reward  \\ 
            $\gamma \in [0,1)$ & Discount factor \\
            $p(s' \mid s,a)$ & Task-relevant state transition    \\
            $p(\xi' \mid \xi) $ & Task-irrelevant noise transition \\
            \midrule
            $x \in \mathcal X$ & Observation  \\ 
            $q(x \mid z)$ & Emission function \\
            $q^{-1}: \mathcal X \to \mathcal Z$ & Oracle encoder to $\mathcal Z$ \\
            $\phi^*:\mathcal X \to \mathcal S$ & Oracle encoder (to $\mathcal S$) \\
            $\mathcal R(x,a)$ & Grounded reward function \\ 
    $\mathcal{P}(x' \mid x,a)$ & Grounded transition\\
            \bottomrule
        \end{tabular}
        \label{tab:notation_mdp}
    \end{minipage}
    \hfill
    \begin{minipage}{0.46\linewidth}
        \centering
        \caption{\small Glossary of notations in RL agents.}
\setlength{\tabcolsep}{0.2em}
        \begin{tabular}{cc}
            \toprule
            \textbf{Symbol} & \textbf{Description} \\
            \midrule
            $\psi\in\Psi$ & Agent's representation \\
            $\phi: \mathcal X \to \Psi$ & Agent's encoder \\
            $\Tilde{\phi}: \mathcal X \to \Psi$ & (Isolated) Metric encoder \\
            $\pi_\theta(a \mid \psi)$ & (Latent) Actor \\ 
            $Q_\omega(\psi, a)$ & (Latent) Critic \\
            $R_\kappa(\psi, a)$ & (Latent) Reward model \\
            $P_\nu(\psi' \mid \psi, a)$ & (Latent) Transition model \\
            $\mathcal P_\phi(\psi' \mid x, a)$ & Grounded-to-latent transition\\ 
            \bottomrule
        \end{tabular}
        \label{tab:notation_agent}
\vspace{1em}
        \caption{\small Glossary of notations in metrics.}
\setlength{\tabcolsep}{0.4em}
        \begin{tabular}{cc}
            \toprule
            \textbf{Symbol} & \textbf{Description} \\
            \midrule
            $d_\mathcal X: 
            \mathcal X \times \mathcal X \to \mathbb R$ & Target metric \\
            $d_\Psi: \Psi \times \Psi \to \mathbb R$ & Representational metric \\
            $d_R: \mathbb R \times \mathbb R \to \mathbb R$ & Reward distance \\
            $d_T(d)(\cdot,\cdot)$ & Transition distance \\
            $x,x_+,x_-\in \mathcal X$ & Anchor, positive, negative \\
            $\mathrm{DF}^\pi_{d}(\phi)$ & Denoising factor\\
            \bottomrule
        \end{tabular}
        \label{tab:notation_metric}
    \end{minipage}
\end{table}

\section{Background on Metrics and Metric Learning}
\label{app:metrics}



\subsection{Metric, Pseudometric, and Diffuse Metric}
\label{app:metric_math}

\noindent \textbf{Metric.}
A function $d: \mathcal{X} \times \mathcal{X} \to \mathbb{R}_{\ge 0}$ is called a \emph{metric} on the space $\mathcal X$ if for all $x,y,z \in \mathcal{X}$:
\[
\begin{aligned}
&(1)\quad d(x,y) = 0 \;\Longleftrightarrow\; x = y,\\
&(2)\quad d(x,y) = d(y,x),\\
&(3)\quad d(x,z) \le d(x,y) + d(y,z).
\end{aligned}
\]

\noindent \textbf{Pseudometric\footnote{Sometimes termed ``semimetric''~\citep{ferns2004metrics}.}.}
A function $d: \mathcal{X} \times \mathcal{X} \to \mathbb{R}_{\ge 0}$ is a \emph{pseudometric} if it satisfies (2) and (3) above, and for (1) we only require $d(x,x) = 0$ for all $x \in \mathcal{X}$ (i.e.,\ $d(x,y) = 0$ does not imply $x=y$).

\noindent \textbf{Diffuse Metric (Definition 4.9 in \cite{castro2021mico}).}
A function 
\(d: \mathcal{X} \times \mathcal{X} \to \mathbb{R}_{\ge 0}\)
is a \emph{diffuse metric} if it satisfies properties \((2)\) and \((3)\) above, and for \((1)\)
we only require \(d(x,y)\ge 0\). 
That is, we do not demand \(d(x,x) = 0\) or that 
\(d(x,y)=0 \Longleftrightarrow x = y\).

\subsection{Definitions of Various Behavioral Metrics}
\label{app:metric_def}

In this section, we discuss the behavioral metrics for an EX-BMDP (\autoref{sec:2.1}) that serve as candidates for $d_\mathcal{X}$ in \autoref{sec:3}. From the observation space $\mathcal X$, the grounded transition function is defined as $\mathcal{P}(x' \mid x,a)=\sum_{z'\in \mathcal{Z}} q(x' \mid z')p(z' \mid q^{-1}(x),a)$ and the grounded reward function as $\mathcal R(x,a) = R(\phi^*(x),a)$. 
Let $x_1,x_2\in\mathcal{X}$ be two arbitrary observations.

Bisimulation metric is a relaxation of bisimulation relation~\citep{givan2003equivalence} by allowing a smooth variation based on differences in the reward function and transition dynamics. 
The bisimulation metric thus quantifies the behavioral similarity between two states and is formally defined as follows:
\begin{definition}[Bisimulation metric $d^\sim$ \citep{ferns2004metrics,ferns2011bisimulation}]
\label{def:bisim_metric}
There exists a unique pseudometric $d^\sim:\mathcal{X}\times \mathcal{X} \rightarrow \mathbb{R}$, called the bisimulation metric (BSM)\footnote{As noted by \citet{ferns2004metrics}, BSM relates to the largest bisimulation relation, $\sim$. For brevity, we simplify the original definition that uses the fixed-point of an operator and omit the proof for the existence of such a fixed-point.}, defined as:
\begin{equation}
d^\sim(x_1,x_2) \defeq \max_{a\in\mathcal{A}}\left(c_R|\mathcal R(x_1,a)-\mathcal R(x_2,a)|+c_T\mathcal{W}_1(d^\sim)(\mathcal P(\cdot\mid x_1,a),\mathcal P(\cdot\mid x_2,a))\right),
\end{equation}
where 1-Wasserstein (Kantorovich) distance $\mathcal W_1(d^\sim)(P, Q) = \inf_{\delta\in \mathcal T(P, Q)} \E{(x_1', x_2')\sim \delta}{d^\sim(x_1', x_2')}$ with $\mathcal T(P, Q)$ the coupling space for $P$ and $Q$. Here, $c_R$ and $c_T$ are coefficients for short-term and long-term behavioral differences, which are commonly set to $c_R=1$ and $c_T=\gamma$, where $\gamma$ is the MDP's discount factor.
\end{definition}    

In practice, applying the $\max$ operator over actions is intractable in continuous action spaces and pessimistically accounts for behavioral similarity across all actions, including those leading to low rewards. Policy-dependent bisimulation metrics~\citep{castro2020scalable} address this limitation by restricting behavioral similarity to the current policy, eliminating the need to evaluate all actions.
\begin{definition}[Policy-dependent bisimulation metric $d^\pi$~\citep{castro2020scalable}]
\label{def:pibisim_metric}
Given a policy $\pi: \mathcal X \to \Delta(\A)$, there exists a unique pseudometric $d^\pi:\mathcal{X}\times \mathcal{X} \rightarrow \mathbb{R}$, called a policy-dependent bisimulation metric (PBSM), defined as:
\begin{equation}
d^\pi(x_1,x_2) \defeq c_R|\mathcal R^\pi(x_1)-\mathcal R^\pi(x_2)|+c_T\mathcal{W}_1(d^\pi)(\mathcal P^\pi(\cdot\mid x_1),\mathcal P^\pi(\cdot\mid x_2)),
\end{equation}
where $\mathcal R^\pi(x) \defeq \E{a\sim\pi}{\mathcal R(x,a)}$ and $\mathcal P^\pi(\cdot\mid x) \defeq \E{a\sim\pi}{\mathcal P^\pi(\cdot\mid x,a)}$ are policy-dependent reward and transition, respectively.
\end{definition}

To approximate the 1-Wasserstein distance in PBSM, DBC~\citep{zhang2020learning} assumes a Gaussian transition kernel and uses 2-Wasserstein distance which has a closed-form solution under such assumption (\autoref{app:metric_learning}). 
To further circumvent the costly computation of the 1-Wasserstein distance, \citet{castro2021mico} propose MICo distance which uses the independent coupling rather than all coupling of the distributions in the 1-Wasserstein distance, which enables its computation using samples.

\begin{definition}[MICo distance $u^\pi$~\citep{castro2021mico}]
\label{def:mico}
Given a policy $\pi: \mathcal X \to \Delta(\A)$, there exists a unique diffuse metric $u^\pi:\mathcal{X}\times \mathcal{X} \rightarrow \mathbb{R}$, called MICo distance:
\begin{equation}
u^\pi(x_1,x_2) \defeq c_R|\mathcal R^\pi(x_1)-\mathcal R^\pi(x_2)|+c_T\E{x_1'\sim \mathcal P^\pi(\cdot|x_1),x_2'\sim \mathcal P^\pi(\cdot|x_2)}{u^\pi(x_1', x_2')}.
\end{equation}

\end{definition}
The independent coupling term (second term on the RHS) may yield a non-zero distance between states that share identical immediate rewards and transition dynamics for all actions. Consequently, the MICo distance can be characterized as a diffuse metric (see \autoref{app:metric_math}).
Based on MICo, several improvements are made by SimSR~\citep{zang2022simsr} and RAP~\citep{chen2022learning}.
\begin{definition}[SimSR distance~\citep{zang2022simsr}]
\label{def:simsr}
Given a policy $\pi: \mathcal X \to \Delta(\A)$, there exists a unique distance $u^\pi:\mathcal{X}\times \mathcal{X} \rightarrow \mathbb{R}$, called the Simple State Representation (SimSR) distance:
\begin{equation}
u^\pi (x_1, x_2)
\;\defeq\;
c_R\bigl\lvert \mathcal R^\pi(x_1) - \mathcal R^\pi(x_2)\bigr\rvert
\;+\;
c_T \,\mathbb{E}_{\,x'_1 \sim \widehat{\mathcal P}^\pi(\cdot|x_1),\;x'_2 \sim \widehat{\mathcal P}^\pi(\cdot|x_2)}\bigl[u^\pi(x'_1,\,x'_2)\bigr],
\end{equation}
where $\widehat{\mathcal P}^\pi$ is an approximated transition dynamics model. Specifically in SimSR, through isometric embedding (\autoref{eq:isometric}), $u^\pi(x_1,x_2)=d_\mathcal{X}(x_1,x_2)=d_\Psi(\phi(x_1), \phi(x_2))
=1-\cos\bigl(\phi(x_1),\phi(x_2)\bigr)$, which is the cosine distance (normalized dot product distance).
\end{definition}

\begin{definition}[RAP distance~\citep{chen2022learning}]
\label{def:rap}
Given a policy $\pi: \mathcal X \to \Delta(\A)$, there exists a unique distance $u^\pi: \mathcal{X} \times \mathcal{X} \rightarrow \mathbb{R}$, called the Robust Approximate (RAP) distance:
\begin{equation}
\scalebox{0.9}{$
u^\pi(x_1, x_2)
\;\defeq\; 
c_R\bigl\lvert \mathcal R^\pi(x_1) - \mathcal R^\pi(x_2)
\bigr\rvert
\\[1ex]
\;+\;
c_T
\,\mathbb{E}_{a_1\sim \pi,\,a_2\sim \pi}
\Bigl[
\,u^\pi\Bigl(
\,\mathbb{E}_{x'_1 \sim \hat{\mathcal P}(\cdot |x_1,a_1)}[\,x'_1\,],
\,\mathbb{E}_{x'_2 \sim \hat{\mathcal P}(\cdot |x_2,a_2)}[\,x'_2\,]
\Bigr)
\Bigr]
$}.
\end{equation}
\end{definition}

\subsection{Approximating the Behavioral Metrics}
\label{app:metric_learning}
Behavioral metrics are approximated and isometrically embedded into the representation space via an auxiliary loss in recent works. 
In this section, we extend the explanation of design choices used to approximate the metrics in their implementations presented in \autoref{tab:cand_summary}.

In DBC~\citep{zhang2020learning} and DBC-normed~\citep{kemertas2021towards}, to approximate PBSM (\autoref{def:pibisim_metric}), the metric loss is defined in the following form:
{
\small
\begin{equation} 
J_M(\phi) = \Biggl( 
\underbrace{\|\phi(x_1) - \phi(x_2)\|_1}_{=d_{\Psi}(\phi(x_1),\phi(x_2))} 
\;-\; 
\Bigl(\underbrace{|r_1 - r_2| \;+\; \gamma\, \mathcal W_2(\|\cdot\|_1)\Bigl(\hat{ \mathcal P}\Bigl(\psi^\prime\,\left|\,\bar{\phi}(x_1), a_1\right.\Bigr), \hat{ \mathcal P}\Bigl(\psi^\prime\,\left|\,\bar{\phi}(x_2), a_2\right.\Bigr)\Bigr)\Bigr)}_{\approx d_R(x_1, x_2)+d_T(d_\Psi)(\mathcal P_\phi(\psi'|x_1), \mathcal P_\phi(\psi'|x_2))=d_{\mathcal{X}}(x_1, x_2)} 
\Biggr)^2.
\label{eq:dbc_loss}
\end{equation}
}

where $(x_1, x_2, a_1, a_2, r_1, r_2)$ is sampled from a replay buffer, $\bar{\phi}$ denotes the encoder $\phi$ with gradient detached, the transition model $\hat{\mathcal{P}}$ outputs a \textit{factorized} Gaussian distribution,
with mean $\mu_{\hat{\mathcal P}}(\bar{\phi}(x),a)$ and covariance $\sigma_{\hat{\mathcal P}}(\bar{\phi}(x),a)$.
For brevity, we denote $\mu_1 = \mu_{\hat{\mathcal{P}}}(\bar{\phi}(x_1), a_1)$ and $\sigma_1 = \sigma_{\hat{\mathcal{P}}}(\bar{\phi}(x_1), a_1)$.
$\mathcal{W}_2$ is the 2-Wasserstein distance, which serves as a surrogate for the 1-Wasserstein distance in \autoref{def:pibisim_metric} and admits a convenient closed-form solution when comparing two Gaussian distributions:
\begin{equation}
\mathcal W_{2}(\|\cdot\|_{2})\Bigl(\mathcal N(\mu_{1},\Sigma_{1}),\,\mathcal N(\mu_{2},\Sigma_{2})\Bigr)
=
\sqrt{\,
\|\mu_{1}-\mu_{2}\|_{2}^{2}
\;+\;
\bigl\|\Sigma_{1}^{\tfrac{1}{2}}-\Sigma_{2}^{\tfrac{1}{2}}\bigr\|_{F}^{2}
\,}\,.
\label{eq:w2_dist}
\end{equation}

where $\|\cdot\|_F$ is the Frobenius norm and 
$\Sigma_{1},\Sigma_{2} \in \mathbb{R}^{k \times k}$ are the covariance matrices.
In the special case where both Gaussians are factorized (i.e., $\Sigma_i = \mathrm{diag}(\sigma_i^2)$), the Frobenius norm simplifies to the Euclidean norm over the standard deviations, yielding:
\begin{equation}
\mathcal{W}_{2}(\|\cdot\|_{2})\Bigl(\mathcal{N}(\mu_{1}, \mathrm{diag}(\sigma_{1}^{2})),\, \mathcal{N}(\mu_{2}, \mathrm{diag}(\sigma_{2}^{2}))\Bigr)
=
\sqrt{\,
\|\mu_{1} - \mu_{2}\|_2^2
\;+\;
\|\sigma_{1} - \sigma_{2}\|_2^2
\,}\,.
\label{eq:w2_factorized}
\end{equation}

In their implementation, several major modifications are applied to \autoref{eq:dbc_loss}. 
First of all, DBC and DBC-normed use a \textit{scaled Huber distance} instead of the $L_1$ distance as the approximant of representation distance. 
We first define the Huber distance\footnote{\url{https://pytorch.org/docs/stable/generated/torch.nn.SmoothL1Loss.html}} for two vectors $x,y\in\mathbb{R}^k$ as:
\begin{align}
d_{\mathrm{Huber}}(x,y)
&= \sum_{i=1}^{k}
\begin{cases}
\tfrac12,(x_{i}-y_{i})^{2},
&\text{if }|x_{i}-y_{i}|<1,\\[6pt]
|x_{i}-y_{i}|-\tfrac12,
&\text{otherwise.}
\end{cases}
\end{align}

The scaled Huber distance can be formulated as:
\begin{equation} d_{\Psi}\bigl(\phi(x_{1}),\,\phi(x_{2})\bigr)
=\frac{1}{k} d_{\mathrm{Huber}}\bigl(\phi(x_{1}),\phi(x_2)\bigr).
\label{eq:smoothl1_vec}
\end{equation}
As for approximating reward distance $d_R$, instead of using the absolute difference,
\begin{equation}
\hat{d_R}(r_1, r_2) = |r_1-r_2|,
\label{eq:abs_diff}
\end{equation} DBC and DBC-normed apply Huber distance:
\begin{equation}
\hat d_{R}(r_{1},r_{2})
=d_{\mathrm{Huber}}\bigl((r_{1}),(r_{2})\bigr).
\label{eq:huber_dist}
\end{equation}

When approximating the transition distance $d_T$, DBC use the following form that slightly differs from \autoref{eq:w2_dist} to compare two Gaussian distributions:
\begin{equation}
\hat d_T\bigl((\mu_1,\sigma_1),\,(\mu_{2},\sigma_{2})\bigr)
=\frac{1}{k}\sum_{i=1}^{k}
\sqrt{
(\mu_{1,i}-\mu_{2,i})^{2}
\;+\;
(\sigma_{1,i}-\sigma_{2,i})^{2}
}\,.
\label{eq:pairwise_euclid_avg}
\end{equation}

DBC-normed, instead, utilize this form: 
\begin{align}
\hat d_T\bigl((\mu_1,\sigma_1),\,(\mu_{2},\sigma_{2})\bigr)
&= \frac{1}{k} \bigl(d_{\mathrm{Huber}}(\mu_1,\mu_{2})
  \;+\;d_{\mathrm{Huber}}(\sigma_1,\sigma_{2})\bigr).
\label{eq:smoothl1_mu_sigma}
\end{align}

MICo~\citep{castro2021mico} shares a similar metric loss structure with DBC:
\begin{equation}
    J_{M}(\phi) =  \Bigl( \underbrace{U_{\phi}(x_1,x_2)}_{=d_{\Psi}(\phi(x_1),\phi(x_2))} - \underbrace{\bigl(\bigl| r_1 - r_2 \bigr| + \gamma\, U_{\bar{\phi}}(x_1', x_2')\bigr)}_{\approx d_{\mathcal{X}}(x_1, x_2)} \Bigr)^2 ,
    \label{eq:mico_metric_loss}
\end{equation}
where $(x_1, x_2, r_1, r_2, x'_1, x'_2)$ is sampled from a replay buffer, $U_\phi(x_1,x_2)$ is the representation distance, parameterized as:
\begin{equation}
     d_\Psi(\phi(x_1), \phi(x_2))=U_\phi(x_1,x_2) \;:=\; \frac{\|\phi(x_1)\|_2^2 + \|\phi(x_2)\|_2^2}{2} \;+\; \beta\, \theta\bigl(\phi(x_1),\phi(x_2)\bigr),
\end{equation}
where $\theta$ represents an angular distance function defined in Appendix Sec. C.2 in~\citet{castro2021mico}.
In the implementation of MICo, the Huber loss~\citep{huber1992robust} is used instead of MSE in \autoref{eq:mico_metric_loss}.

SimSR~\citep{zang2022simsr} differs from MICo in its metric loss along two main dimensions. First, the parameterization of $U_\phi$:
\begin{equation}
    d_\Psi(\phi(x_1), \phi(x_2))=U_\phi(x_1,x_2):=1-\cos\bigl(\phi(x_1),\phi(x_2)\bigr).
\end{equation}
Second, the next latents $\phi(x'_1), \phi(x'_2)$ are sampled from a transition model rather than encoded from next observations in the replay buffer.

RAP~\citep{chen2022learning} reduces the gap of $d_R$ and $\hat d_R$ in the prior work by introducing a better approximant of $d_R$, motivated by the following derivation of $d_R$:
{
\small
\begin{equation}
    d_R(x_1,x_2)=\bigl\lvert \mathcal R^\pi(x_1) - \mathcal R^\pi(x_2)\bigr\rvert=\sqrt{
\mathbb{E}_{a_1 \sim \pi,\, a_2 \sim \pi}
\left[
\left| \mathcal{R}(x_1,a_1) - \mathcal{R}(x_2,a_j) \right|^2
\right]
-
\mathrm{Var}[r_{x_1}]
-
\mathrm{Var}[r_{x_2}]
},
\end{equation}
}

where $r_{x}$ is a random variable defined by $p(r_x = \mathcal R(x,a)) = \pi(a \mid x)$, and $\mathrm{Var}[r_{x}]$ is the variance of $r_x$. 
To approximate $\mathrm{Var}[r_x]$, RAP introduces an observation-reward model\footnote{Note that this model introduces additional gradients that influence the encoder's representation learning.} that maps the current observation to the mean and variance of the expected reward. The estimated $\hat{d}_R$ is then computed by substituting $\mathrm{Var}[r_x]$ with the model-predicted variance, and approximating the expected reward difference using sampled reward pairs from a replay buffer. 
For $\hat{d}_T$, RAP adopts the same computation procedure as DBC.

\subsection{Normalization Techniques}
\label{app:normalization}
Several related normalization techniques and their connections are introduced in this section. We also justify our study design by incorporating LayerNorm in candidate methods rather than the $L_2$ normalization employed in SimSR~\citep{zang2022simsr}.

\paragraph{Max Normalization.}
DBC-normed~\citep{kemertas2021towards} derives an upper bound of $L_p$-norm of the representation, and imposes this bound to the representation space. Specifically, assuming the boundedness of the reward, the target metric $d_\mathcal{X}$ (PBSM in DBC-normed) and the representational metric $d_\Psi$ (through isometric embedding (\autoref{eq:isometric})) can be upper bounded by a constant $C$:
\begin{equation}
    d_\Psi(\phi(x_1),\phi(x_2))=d_\mathcal{X}(x_1,x_2)\leq \frac{c_R}{1-c_T}(\max_{x,a}\mathcal{R}(x,a)-\min_{x,a}\mathcal{R}(x,a)) \defeq C.
    \label{eq:DBC-normed_distance_bound}
\end{equation}
For example, in our hyperparameter setting in DMC, the constant $C$ can be $100, 200,$ or $400$, depending on the action repeat.
Such a bound can also be naturally generalized to MICo, SimSR, and RAP distance.
Specifically, when $d_\Psi$ is the $L_p$-distance, if the $L_p$-norm of $\psi=\phi(x)$ is upper bounded as:
\begin{equation}
    \|\psi\|_p\leq \frac{C}{2},
\end{equation}
then \autoref{eq:DBC-normed_distance_bound} can be satisfied.

A ``max normalization'' can then be imposed on $\psi$ to constrain the approximated metrics within a reasonable numerical range, thereby improving metric estimation:
\begin{equation}
\operatorname{MaxNorm}(\psi) \coloneqq 
\begin{cases}
\psi, & \text{if } \|\psi\|_p < \frac{C}{2}, \\
\frac{C}{2}\frac{\psi}{\|\psi\|_p}, & \text{otherwise.}
\end{cases}
\end{equation}

\paragraph{$L_2$ Normalization.}
The $L_2$ normalization of $\psi$ enforces $\|\psi\|_2=1$, which is defined as:
\begin{equation}
    \operatorname{L2Norm}(\psi) = \frac{\psi}{\|\psi\|_2}. 
\end{equation}

\begin{remark}[\textbf{Caveat of applying $L_2$ normalization to metric learning}]
Note the boundedness of $L_p$ distance for $L_2$-normalized vectors $\phi(x_1),\phi(x_2)\in \mathbb{R}^k$ are given as follows by Hölder's inequality: 
\begin{equation}
\|\phi(x_1)-\phi(x_2)\|_p \le  2 \|\phi(x_1)\|_p \le 
\begin{cases}
2\,k^{\frac{1}{p}-\frac{1}{2}}, & \text{if } 1\le p \le 2, \\[1mm]
2, & \text{if } p\ge 2.
\end{cases}
\end{equation}

Note that if the $L_p$-distance is used as $d_\Psi$, directly applying $L_2$ normalization on $\psi$ can lead to the \textbf{misspecification of the metrics}. 
In other words, the numerical range of $d_\Psi$ may not be \textit{sufficiently expressive} to capture the ground truth metrics, i.e., the target metric space $(\mathcal{X}, d_\mathcal{X})$ cannot always be isometrically embedded into the Euclidean unit sphere $(\Psi_{\text{unit}}, \|\cdot\|_p)$. Consider a counterexample when $p=2$. Suppose the existence of a pair of $x_1,x_2$ that $c_R|\mathcal{R}^\pi(x_1)-\mathcal{R}^\pi(x_2)|>2$, and $d_\mathcal{X}(x_1,x_2)>2$. But $d_\Psi(\phi(x_1),\phi(x_2))=\|\phi(x_1)-\phi(x_2)\|_p \le 2$. Thus, in this case, there is no such $\phi$ that $\|\phi(x_1)\|_2=\|\phi(x_2)\|_2=1$ and $d_\mathcal{X}(x_1,x_2)=d_\Psi(\phi(x_1),\phi(x_2)) $.

As another counterexample, in SimSR~\citep{zang2022simsr}, $L_2$ normalization is imposed on the representation space, i.e., $\|\phi(x_1)\|_2=\|\phi(x_2)\|_2=1$. Under such condition, one can show the equivalence of the cosine distance and the squared $L_2$ distance:
\begin{equation}
\begin{split}
d_\Psi(\phi(x_1), \phi(x_2))
&=1-\cos\bigl(\phi(x_1),\phi(x_2)\bigr) \\
&= 1-\langle\phi(x_1),\phi(x_2)\rangle\\[1mm]
&=\frac{1}{2}\Bigl(\|\phi(x_1)\|_2^2+\|\phi(x_2)\|_2^2-2\langle\phi(x_1),\phi(x_2)\rangle\Bigr)\\[1mm]
&=\frac{1}{2}\|\phi(x_1)-\phi(x_2)\|_2^2\in[0,2].
\end{split}
\end{equation}
Similarly, in such a case, a $\phi$ that satisfies $\forall x_1,x_2\in \mathcal{X}$, $\|\phi(x_1)\|_2=\|\phi(x_2)\|_2=1$ and $d_\mathcal{X}(x_1,x_2)=d_\Psi(\phi(x_1),\phi(x_2)) $ does not always exist.
\end{remark}

\paragraph{Layer Normalization.} LayerNorm~\citep{ba2016layer} can be formalized as follows:
\begin{equation}
\operatorname{LayerNorm}(\psi) = \alpha \odot \frac{\psi - \mu(\psi)}{\sqrt{\sigma^2(\psi)+\epsilon}} + \beta, 
\end{equation}
where $\alpha, \beta \in \mathbb{R}^k$ are learnable parameters, $\epsilon>0$ is a small constant for numerical stability, and $\odot$ denotes element-wise multiplication. $\mu$ and $\sigma^2$ are the mean and variance: 
\begin{equation}
    \mu(\psi)=\frac{1}{k}\sum_{i=1}^{k}\psi_i,\qquad \sigma^2(\psi)=\frac{1}{k}\sum_{i=1}^{k}(\psi_i-\mu(\psi))^2.
\end{equation}
Note that, if we assume $\alpha = \alpha_0 \,\mathbf{1}_k \in \mathbb{R}^k$ where $\alpha_0$ is a constant and $\beta\approx \boldsymbol{0}$ (as it is initialized to 0 in many implementations and remains small early in training), we have:

\begin{equation}
\begin{split}
\big\|\operatorname{LayerNorm}(\psi)\big\|_2^2 &\approx \sum_{i=1}^k \alpha_i^2 \cdot \frac{(\psi_i - \mu(\psi))^2}{\sigma^2(\psi)+\epsilon} \\
&= \alpha_0^2 \cdot \frac{\sum_{i=1}^k (\psi_i - \mu(\psi))^2}{\sigma^2(\psi)} \\
&= \alpha_0^2 \cdot \frac{k\sigma^2(\psi)}{\sigma^2(\psi)} \quad \text{(from variance definition)} \\
&= \alpha_0^2 k.
\label{eq:layer_norm_sq}
\end{split}
\end{equation}

Then the $L_2$ norm of the representation after LayerNorm satisfies:
\begin{equation}
\big\|\operatorname{LayerNorm}(\psi)\big\|_2 \approx \alpha_0\sqrt{k}.
\end{equation}

For layer-normalized $\phi(x_1)$ and $\phi(x_2)$, the upper bound of $d_\Psi(\phi(x_1), \phi(x_2))$ depends on both $\alpha$ and $\beta$, providing flexibility in expressing the target metrics $d_\mathcal{X}$.  
As a result, we conduct case studies in \autoref{sec:5.2} on methods with LayerNorm rather than $L_2$ normalization to ensure full expressivity of the target metrics.

\section{Proof}

\subsection{Proof of Transition Distance Preservation under Isometric Embedding}
\label{app:proof_isometric}

\begin{lemma}[Transition distance preservation (\autoref{eq:transition_distance})]
Let $\phi:\mathcal X \to \Psi$ be an isometric embedding with $d_{\mathcal X}(x_1, x_2) = d_{\Psi}(\phi(x_1), \phi(x_2))$ for any $x_1,x_2\in\mathcal X$ (\autoref{eq:isometric}). Then,
\begin{equation}
d_T(d_{\mathcal X})(\mathcal P(x'\mid x_1),\mathcal P(x'\mid x_2)) = d_T(d_{\Psi})(\mathcal P_\phi(\psi'\mid x_1),\mathcal P_\phi(\psi'\mid x_2)).
\end{equation}
\end{lemma}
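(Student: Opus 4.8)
The plan is to handle every transition distance $d_T$ in \autoref{tab:cand_summary} uniformly, by noticing that each of them depends on the ground metric it is fed only through expectations of a pointwise function of that metric (the metric itself for $\mathcal W_1$ and for the MICo/SimSR sample-based distance, its square for the $\mathcal W_2$ surrogate) taken over \emph{couplings} of the two next-state distributions. Fix $x_1,x_2$ and abbreviate $P \defeq \mathcal P(x'\mid x_1)$, $Q \defeq \mathcal P(x'\mid x_2)$; the pushforwards $\phi_* P = \mathcal P_\phi(\psi'\mid x_1)$ and $\phi_* Q = \mathcal P_\phi(\psi'\mid x_2)$ are exactly what the definition $\mathcal P_\phi(\psi'\mid x)=\sum_{x'}\mathcal P(x'\mid x)\mathbf 1(\psi'=\phi(x'))$ encodes. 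The single fact I would lean on throughout is the pointwise isometry identity $d_{\mathcal X}(x_1',x_2') = d_\Psi(\phi(x_1'),\phi(x_2'))$ from \autoref{eq:isometric}.

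The core observation is a pushforward invariance: for any coupling $\delta$ of $P$ and $Q$ on $\mathcal X\times\mathcal X$, the measure $(\phi\times\phi)_*\delta$ is a coupling of $\phi_* P$ and $\phi_* Q$ on $\Psi\times\Psi$, and by the isometry identity the transport cost is unchanged,
\[
\int d_{\mathcal X}(x_1',x_2')\,\mathrm d\delta(x_1',x_2') = \int d_\Psi(\psi_1',\psi_2')\,\mathrm d\big((\phi\times\phi)_*\delta\big)(\psi_1',\psi_2').
\]
For the sample-based/independent-coupling distance used by MICo and SimSR, $d_T(d)(P,Q)=\mathbb E_{x_1'\sim P,\,x_2'\sim Q}[d(x_1',x_2')]$ is evaluated at the fixed product coupling $\delta = P\otimes Q$, whose pushforward is precisely $\phi_* P\otimes\phi_* Q$; the displayed identity then yields $d_T(d_{\mathcal X})(P,Q)=d_T(d_\Psi)(\phi_* P,\phi_* Q)$ immediately, with no optimization needed.

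For the Wasserstein transition distances ($\mathcal W_1$ in the exact BSM/PBSM definitions of \autoref{def:bisim_metric} and \autoref{def:pibisim_metric}, and the $\mathcal W_2$ closed form) I would prove the two inequalities separately. The direction $\mathcal W(d_\Psi)(\phi_* P,\phi_* Q)\le \mathcal W(d_{\mathcal X})(P,Q)$ follows at once, since pushing forward sends every $\mathcal X$-coupling to a $\Psi$-coupling of equal cost, so the infimum over $\Psi$-couplings can only be smaller. For the reverse direction I would \emph{lift}: given any coupling $\gamma$ of $\phi_* P$ and $\phi_* Q$, disintegrate $P$ and $Q$ along the fibers of $\phi$ into conditionals $P_{\psi_1}$, $Q_{\psi_2}$ supported on $\phi^{-1}(\psi_1)$, $\phi^{-1}(\psi_2)$ respectively, and set $\delta \defeq \int P_{\psi_1}\otimes Q_{\psi_2}\,\mathrm d\gamma(\psi_1,\psi_2)$. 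A short check shows $\delta$ has marginals $P$, $Q$ and satisfies $(\phi\times\phi)_*\delta=\gamma$, so the core identity forces its cost to equal that of $\gamma$; taking infima gives the matching inequality and hence equality. In the discrete setting implicit in the definition of $\mathcal P_\phi$ the disintegration is simply conditioning on the value $\phi(x')$, so no measure-theoretic machinery is needed, and the $\mathcal W_2$ Gaussian formula is only a computational shortcut for this same quantity, so it inherits the identity.

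The main obstacle is precisely this reverse (lifting) direction, because the encoders of interest are \emph{not} injective: denoising requires $\phi$ to collapse observations sharing a task-relevant state, so information is genuinely lost in passing to $\Psi$ and one cannot invert the pushforward directly. The disintegration construction is what repairs this, reconstructing an $\mathcal X$-coupling whose cost is pinned by the isometry no matter how many observations $\phi$ identifies. I would verify the marginal and pushforward conditions on $\delta$ with care, since this is the only point where non-injectivity could otherwise break the argument.
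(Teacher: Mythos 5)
Your proof is correct and follows essentially the same route as the paper's: the pointwise isometry identity combined with pushing couplings forward under $\phi\times\phi$ (the fixed product coupling for the MICo/SimSR sample-based distance, and optimal-transport couplings for the Wasserstein case). The only substantive difference is that your explicit two-inequality treatment with the disintegration lifting justifies a step the paper leaves implicit — its chain of equalities silently identifies the infimum over pushforwards of $\mathcal X$-couplings with the infimum over \emph{all} couplings of $\mathcal P_\phi(\psi'\mid x_1)$ and $\mathcal P_\phi(\psi'\mid x_2)$, which is precisely the surjectivity-by-lifting fact you prove, so your version is if anything the more complete argument.
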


\begin{proof}
We prove it by considering two common forms of $d_T$: Wasserstein distance~\citep{ferns2004metrics,ferns2011bisimulation,zhang2020learning} and sampling-based distance~\citep{castro2021mico,zang2022simsr}. 

Case 1: $d_T$ is Wasserstein distance. For convenience, we take 1-Wasserstein distance as example, and the proof naturally extends to $p$-Wasserstein. Let $\mathcal T(P,Q)$ be the coupling space for $P$ and $Q$, 
\begin{equation}
\begin{split}
&\mathcal{W}_1(d_\mathcal X)(\mathcal P(x'\mid x_1),\mathcal P(x'\mid x_2)) =  \inf_{\mu \in \mathcal T(\mathcal P(x'\mid x_1), \mathcal P(x'\mid x_2))} \sum_{x_1'\in\mathcal X, x_2'\in\mathcal X} d_\mathcal X(x_1',x_2') \mu(x_1', x_2') \\ 
&= \inf_{\mu \in \mathcal T(\mathcal P(x'\mid x_1), \mathcal P(x'\mid x_2))}  \sum_{x_1'\in\mathcal X, x_2'\in\mathcal X} d_\Psi(\phi(x_1'),\phi(x_2')) \mu(x_1', x_2') \quad \text{(isometric embedding)}\\
&= \inf_{\mu \in \mathcal T(\mathcal P(x'\mid x_1), \mathcal P(x'\mid x_2))}  \sum_{\psi_1'\in\Psi, \psi_2'\in\Psi} d_\Psi(\psi_1', \psi_2') \sum_{x_1',x_2': \phi(x_1')=\psi_1',\phi(x_2')=\psi_2'}\mu(x_1',x_2') \\
&= \inf_{\nu \in \mathcal T(\mathcal P_\phi(\psi'\mid x_1), \mathcal P_\phi(\psi'\mid x_2))}  \sum_{\psi_1'\in\Psi, \psi_2'\in\Psi} d_\Psi(\psi_1', \psi_2') \nu(\psi_1',\psi_2') \\
&= \mathcal{W}_1(d_\Psi)(\mathcal P_\phi(\psi'\mid x_1),\mathcal P_\phi(\psi'\mid x_2)).
\end{split}
\end{equation}

Case 2: $d_T$ as sampling-based distance.
\vspace{-1em}
\begin{equation}
\begin{split}
&\E{x_1'\sim \mathcal P(\cdot\mid x_1),x_2'\sim \mathcal P(\cdot \mid x_2)}{d_\mathcal X(x_1', x_2')} = \sum_{x_1'\in\mathcal X, x_2'\in\mathcal X} d_\mathcal X (x_1', x_2') \mathcal P(x_1'\mid x_1) \mathcal P(x_2' \mid x_2)\\
&= \sum_{x_1'\in\mathcal X, x_2'\in\mathcal X} d_\Psi(\phi(x_1'),\phi(x_2')) \mathcal P(x_1'\mid x_1) \mathcal P(x_2' \mid x_2) \quad \text{(isometric embedding)}\\
&= \sum_{\psi_1'\in\Psi, \psi_2'\in\Psi} d_\Psi(\psi_1', \psi_2') \sum_{x_1',x_2': \phi(x_1')=\psi_1',\phi(x_2')=\psi_2'} \mathcal P(x_1'\mid x_1) \mathcal P(x_2' \mid x_2) \\
&= \sum_{\psi_1'\in\Psi, \psi_2'\in\Psi} d_\Psi(\psi_1', \psi_2') \left(\sum_{x_1': \phi(x_1')=\psi_1'} \mathcal P(x_1'\mid x_1)\right) \left(\sum_{x_2': \phi(x_2')=\psi_2'}\mathcal P(x_2' \mid x_2) \right)\\
&= \E{\psi_1'\sim \mathcal P_\phi(\cdot\mid x_1),\psi_2'\sim \mathcal P_\phi(\cdot \mid x_2)}{d_\Psi(\psi_1', \psi_2')}.
\end{split}
\end{equation}
\end{proof}

\subsection{Proof of Denoising Property of Bisimulation Metrics}
\label{app:proof_denoising}

\begin{proposition}[\textbf{Denoising property of BSM}]
For any $x, x_+ \in \mathcal X$ of an EX-BMDP (\autoref{sec:2.1}) with $\phi^*(x) = \phi^*(x_+)$, the bisimulation metric (\autoref{def:bisim_metric}) is zero: $d^\sim (x, x_+) = 0$.
\end{proposition}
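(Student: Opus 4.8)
The plan is to use a fixed-point (coinductive) argument that exploits both the factorized dynamics and the reward structure of the EX-BMDP. I would define
$\rho \defeq \sup\{\, d^\sim(x, x_+) : x, x_+ \in \mathcal X,\ \phi^*(x) = \phi^*(x_+) \,\}$
and reduce the claim to showing $\rho = 0$. Because BSM is a bounded pseudometric (bounded by $\tfrac{c_R}{1-c_T}$ times the reward range, as in the bound used in Appendix \autoref{app:normalization}), $\rho$ is finite, which is precisely what lets the contraction argument close.

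First I would dispatch the reward term. In an EX-BMDP the grounded reward depends only on the task-relevant state, $\mathcal R(x, a) = R(\phi^*(x), a)$, so the hypothesis $\phi^*(x) = \phi^*(x_+)$ immediately gives $|\mathcal R(x, a) - \mathcal R(x_+, a)| = 0$ for every action $a$. Hence in the defining recursion for $d^\sim(x, x_+)$ (\autoref{def:bisim_metric}) only the transition (Wasserstein) term can contribute.

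The heart of the argument is constructing a coupling that certifies a small transition distance. Fix an action $a$ and write $q^{-1}(x) = (s, \xi)$ and $q^{-1}(x_+) = (s, \xi_+)$ with a common task-relevant state $s$. The factorization $p(s', \xi' \mid s, \xi, a) = p(s' \mid s, a)\, p(\xi' \mid \xi)$ means the next task-relevant state is drawn from the same law $p(\cdot \mid s, a)$ from either observation. I would therefore build a coupling of $\mathcal P(\cdot \mid x, a)$ and $\mathcal P(\cdot \mid x_+, a)$ by sampling a single shared $s' \sim p(\cdot \mid s, a)$, then independent noises $\xi' \sim p(\cdot \mid \xi)$ and $\xi_+' \sim p(\cdot \mid \xi_+)$, and finally emitting $x' \sim q(\cdot \mid (s', \xi'))$ and $x_+' \sim q(\cdot \mid (s', \xi_+'))$. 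The block structure then guarantees $\phi^*(x') = s' = \phi^*(x_+')$ for every coupled pair, so each pair lies in the set defining $\rho$, whence $d^\sim(x', x_+') \le \rho$. The step requiring care is verifying that the two marginals of this joint distribution really are $\mathcal P(\cdot \mid x, a)$ and $\mathcal P(\cdot \mid x_+, a)$; this follows by marginalizing the shared $s'$ against $p(\cdot \mid s, a)$ and using the independent evolution of the noise together with the block assumption, and it is the one place where the EX-BMDP structure must be invoked explicitly.

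With this coupling in hand, $\mathcal W_1(d^\sim)(\mathcal P(\cdot \mid x, a), \mathcal P(\cdot \mid x_+, a)) \le \mathbb{E}[d^\sim(x', x_+')] \le \rho$ for every $a$, so the recursion gives $d^\sim(x, x_+) \le c_T\,\rho$. Taking the supremum over all eligible pairs yields $\rho \le c_T\,\rho$, and since $c_T = \gamma \in [0,1)$ and $\rho < \infty$, this forces $\rho = 0$, proving $d^\sim(x, x_+) = 0$. The main obstacle, as noted, is the marginal-consistency check for the constructed coupling; the remaining reduction is a routine contraction argument.
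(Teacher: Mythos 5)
Your proof is correct, and it uses the same two central ingredients as the paper's: the observation that the reward term vanishes because $\mathcal R(x,a)=R(\phi^*(x),a)$ depends only on the shared task-relevant state, and the coupling that draws a single $s'\sim p(\cdot\mid s,a)$, independent noises $\xi',\xi_+'$, and then emits $x',x_+'$, so that every coupled pair is again bisimilar and the marginals match by the EX-BMDP factorization. Where you genuinely differ is in how the recursion is closed. The paper argues by induction along the Picard iteration: it initializes $d^{(0)}\equiv 0$, shows the bisimulation operator preserves the property ``zero on pairs with equal task-relevant state,'' and passes to the limit $d^{(n)}\to d^\sim$. You instead work directly with the fixed point, define $\rho=\sup\{d^\sim(x,x_+):\phi^*(x)=\phi^*(x_+)\}$, and run a contraction argument $\rho\le c_T\rho\Rightarrow\rho=0$. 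Your route is coinductive and arguably cleaner in that it never mentions the iterates, but it buys this at the cost of an extra hypothesis you correctly flag: finiteness of $\rho$, i.e., boundedness of $d^\sim$, which holds under bounded rewards (the standard setting of \citet{ferns2004metrics}, and the bound you cite from Appendix~\autoref{app:normalization}). The paper's inductive route never needs to invoke boundedness explicitly, since the property is anchored at $d^{(0)}\equiv 0$ and propagates through each application of the operator; what it needs instead is the (cited) convergence of the iterates to the unique fixed point. Both arguments ultimately rest on the same Banach fixed-point machinery, so the difference is one of packaging rather than substance, but your version is a valid alternative and slightly more self-contained once boundedness is granted.
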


\begin{proof}
Let $d:\mathcal X \times\mathcal X \to \mathbb R$ be any pseudometric. We define the bisimulation operator $\mathcal F$ on $d$: $\forall x_1,x_2\in \mathcal X$,
\begin{equation}
(\mathcal F (d))(x_1,x_2) \defeq \max_{a\in\mathcal{A}}\left(c_R|\mathcal R(x_1,a)-\mathcal R(x_2,a)|+c_T\mathcal{W}_1(d)(\mathcal P(x'\mid x_1,a),\mathcal P(x'\mid x_2,a))\right).
\end{equation}
We initialize $d^{(0)}(x_1, x_2) = 0, \forall x_1,x_2$ and then apply the operator: $d^{(n+1)} = \mathcal F(d^{(n)}),\forall n\in \mathbb N$. Iteratively, the metric will converge to the unique fixed point  $d^{\sim}$~\citep{ferns2004metrics,ferns2011bisimulation}: $\lim_{n\to \infty} d^{(n)} = d^\sim.$

We show by induction that $d^{(n)}(x,x_+) = 0$ whenever $\phi^*(x)=\phi^*(x_+)$, and hence $d^{\sim}(x,x_+) = 0$ in the limit. 
The base case $n=0$ is immediate, because $d^{(0)}\equiv 0$.  Assume for some $n \in \mathbb N$, $d^{(n)}(x,x_+)=0$ whenever $\phi^*(x)=\phi^*(x_+)$. Consider the $n+1$-case: 

\begin{equation}
d^{(n+1)}(x,x_+) = \max_{a\in\mathcal{A}}\left(c_R|\mathcal R(x,a)-\mathcal R(x_+,a)|+c_T\mathcal{W}_1(d^{(n)})(\mathcal P(x'\mid x,a),\mathcal P(x'\mid x_+,a))\right).
\end{equation}
Recall in an EX-BMDP, $\mathcal R(x,a)$ only depends on $s=\phi^*(x)$ and $a$. Since $\phi^*(x)=\phi^*(x_+)$, the reward difference term is zero. The remainder of the expression is governed by the transition distance term:
\begin{equation}
\mathcal{W}_1(d^{(n)})(\mathcal P(x'\mid x,a),\mathcal P(x'\mid x_+,a)) = 
\inf_{\mu\in \mathcal T(\mathcal P(\cdot\mid x,a), \mathcal P(\cdot\mid x_+,a))} \sum_{x',x_+'\in\mathcal X} d^{(n)}(x',x_+')\mu(x',x_+'),
\end{equation}
where $\mathcal T(P,Q)$ is the coupling space for $P$ and $Q$. 
Consider the coupling that pairs next‐state samples by forcing them to share the same task‐relevant state. Let $\xi,\xi_+$ be the task-irrelevant noise underlying $x, x_+$ (i.e., $[\phi^*(x),\xi]=q^{-1}(x)$). Since $\phi^*(x) = \phi^*(x_+)$, the coupling $\mu$ is defined by
\begin{equation}
\mu(x',x_+') = \sum_{s'\in\mathcal S,\xi'\in\Xi,\xi_+'\in \Xi} p(s' \mid \phi^*(x), a) p(\xi' \mid \xi) p(\xi'_+ \mid \xi_+) q(x'\mid s',\xi') q(x_+'\mid s', \xi_+').
\end{equation}
In this construction, any sample $(x',x_+')$ shares the same task-relevant state $s'$. By the inductive hypothesis, $d^{(n)}(x',x_+') = 0$. Thus,
\begin{equation}
\begin{split}
&\sum_{x',x_+'\in\mathcal X} d^{(n)}(x',x_+')\mu(x',x_+') =\sum_{x',x_+'\in\mathcal X}  0 = 0.
\end{split}
\end{equation}
Therefore, $\mathcal{W}_1(d^{(n)})(\mathcal P(x'\mid x,a),\mathcal P(x'\mid x_+,a)) = 0$ and $d^{(n+1)}(x,x_+) = 0$.
\end{proof}

\begin{proposition}[\textbf{Denoising property of PBSM with an exo-free policy}]
Define an \textit{exo-free} policy~\citep{islam2022agent} $\pi:\mathcal X \to \Delta(\mathcal A)$ that is independent of noise, i.e., $\pi(a\mid x) = \pi(a\mid x_+),\forall a \in \mathcal A$ whenever $\phi^*(x) = \phi^*(x_+)$. 
For any $x, x_+ \in \mathcal X$ of an EX-BMDP with $\phi^*(x) = \phi^*(x_+)$, the policy-dependent bisimulation metric (\autoref{def:pibisim_metric}) with an \textit{exo-free} policy $\pi$ is zero: $d^{\pi} (x, x_+) = 0$.
\end{proposition}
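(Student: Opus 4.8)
The plan is to mirror the fixed-point argument used for BSM in the preceding proposition. I would define the policy-dependent bisimulation operator $\mathcal F^\pi$ on a pseudometric $d$ by $(\mathcal F^\pi(d))(x_1,x_2) \defeq c_R|\mathcal R^\pi(x_1)-\mathcal R^\pi(x_2)| + c_T\,\mathcal W_1(d)(\mathcal P^\pi(\cdot\mid x_1),\mathcal P^\pi(\cdot\mid x_2))$, initialize $d^{(0)}\equiv 0$, and iterate $d^{(n+1)}=\mathcal F^\pi(d^{(n)})$, so that $d^{(n)}\to d^\pi$. The statement then reduces to proving by induction that $d^{(n)}(x,x_+)=0$ whenever $\phi^*(x)=\phi^*(x_+)$; passing to the limit yields $d^\pi(x,x_+)=0$. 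The base case is immediate since $d^{(0)}\equiv 0$.

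For the inductive step I would treat the two terms of $\mathcal F^\pi$ separately. The reward term vanishes because $\mathcal R(x,a)=R(\phi^*(x),a)$ depends only on the task-relevant state, and the exo-free assumption gives $\pi(\cdot\mid x)=\pi(\cdot\mid x_+)$ when $\phi^*(x)=\phi^*(x_+)$; hence $\mathcal R^\pi(x)=\mathbb E_{a\sim\pi}[R(\phi^*(x),a)]=\mathcal R^\pi(x_+)$. The transition term is the crux: I would exhibit a single coupling $\mu$ of $\mathcal P^\pi(\cdot\mid x)$ and $\mathcal P^\pi(\cdot\mid x_+)$ supported only on pairs $(x',x_+')$ sharing the same next task-relevant state $s'$. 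Writing $\xi,\xi_+$ for the noise components underlying $x,x_+$ (so $[\phi^*(x),\xi]=q^{-1}(x)$) and using the EX-BMDP factorization $p(s',\xi'\mid s,\xi,a)=p(s'\mid s,a)p(\xi'\mid\xi)$, I would couple by drawing a common action $a\sim\pi(\cdot\mid x)$ and a common next state $s'\sim p(\cdot\mid\phi^*(x),a)$, then evolving the two noises independently via $p(\xi'\mid\xi)$ and $p(\xi_+'\mid\xi_+)$ and emitting $x'\sim q(\cdot\mid s',\xi')$, $x_+'\sim q(\cdot\mid s',\xi_+')$. Under this $\mu$ every sampled pair satisfies $\phi^*(x')=\phi^*(x_+')=s'$, so the inductive hypothesis forces $d^{(n)}(x',x_+')=0$ on the support; since $d^{(n)}\ge 0$, the Wasserstein infimum is pinned at $0$, giving $d^{(n+1)}(x,x_+)=0$.

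The main obstacle is verifying that this $\mu$ is genuinely a coupling, i.e. that its marginals recover $\mathcal P^\pi(\cdot\mid x)$ and $\mathcal P^\pi(\cdot\mid x_+)$, and this is exactly where exo-freeness enters. Summing out $x_+'$ collapses $p(\xi_+'\mid\xi_+)$ and the emission to one and returns $\mathcal P^\pi(\cdot\mid x)$ with no further assumption. Summing out $x'$ leaves an expression in $\pi(a\mid x)$ and $p(s'\mid\phi^*(x),a)$; to identify it with $\mathcal P^\pi(\cdot\mid x_+)$ I would use $\phi^*(x)=\phi^*(x_+)$ to replace $p(s'\mid\phi^*(x),a)$ by $p(s'\mid\phi^*(x_+),a)$ and exo-freeness to replace $\pi(a\mid x)$ by $\pi(a\mid x_+)$. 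Without exo-freeness these two action distributions differ and the second marginal check breaks, which is precisely why PBSM need not denoise under arbitrary policies. The rest is the same bookkeeping as in the BSM proof, so I expect no further difficulty once the coupling is fixed.
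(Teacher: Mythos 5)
Your proposal is correct and follows essentially the same route as the paper's proof: a fixed-point induction on the PBSM operator, with the reward term killed by exo-freeness plus reward exo-independence, and the transition term killed by the same coupling (common action, common next task-relevant state, independently evolved noises, then emission). In fact your explicit verification that the construction has the right marginals—and your observation that exo-freeness is exactly what makes the second marginal check go through—is a detail the paper's proof sketch leaves implicit, so your write-up is, if anything, slightly more complete.
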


\begin{proof}
This is a proof sketch, adapting the bisimulation metric (BSM) proof above to PBSM. 

By induction, consider the $n+1$-case: 
\begin{equation}
d^{(n+1)}(x,x_+) = \left(c_R|\mathcal R^\pi(x)-\mathcal R^\pi(x_+)|+c_T\mathcal{W}_1(d^{(n)})(\mathcal P^\pi(x'\mid x),\mathcal P^\pi(x'\mid x_+))\right).
\end{equation}
Since $\phi^*(x) = \phi^*(x_+)$ and both reward and policy are exo-free, we have $\mathcal R^\pi(x) = \sum_a \pi(a\mid x) \mathcal R(x,a) = \sum_a \pi(a \mid x_+) \mathcal R(x_+,a) = \mathcal R^\pi(x_+)$, i.e., the reward difference term is zero. Consider the transition difference term, we construct a similar coupling:
{\small
\begin{equation}
\mu(x',x_+') = \sum_{a\in \mathcal A,s'\in\mathcal S,\xi'\in\Xi,\xi_+'\in \Xi} \pi(a\mid x) p(s' \mid \phi^*(x), a) p(\xi' \mid \xi) p(\xi'_+ \mid \xi_+) q(x'\mid s',\xi') q(x_+'\mid s', \xi_+').
\end{equation}
}
In this coupling, any sample $(x',x_+')$ shares the same task-relevant state $s'$. Hence $d^{(n+1)}(x,x_+) = 0$ and by the convergence of PBSM operator~\citep{castro2020scalable}, we have $d^{\pi} (x, x_+) = 0$. 
\end{proof}

\begin{remark}[\textbf{PBSM may \textit{not} exhibit denoising property at convergence}]
\label{remark:PBSM}
In general, an optimal policy is \emph{not necessarily} exo-free. That implies even if a policy converges to an optimal one, denoted as $\pi^*$, the corresponding PBSM, $d^{\pi^*}$, may still lack the denoising property. 

Consider the following counterexample. For $x,x_+\in \mathcal X$ be an anchor-positive pair with  the same task-relevant state. By bisimulation relation~\citep{li2006towards}, they have the same optimal-value function: $Q^*(x,a) = Q^*(x_+,a),\forall a\in \mathcal A$. Now, suppose there exist two distinct optimal actions $a_1,a_2\in \mathcal A$, such that $Q^*(x,a_1)=Q^*(x,a_2) =\max_{a\in \mathcal A} Q^*(x,a)$ and $a_1\neq a_2$. 
Construct a deterministic optimal policy, $\pi^*$, such that $\pi^*(x)=a_1$ and $\pi^*(x_+)=a_2$.
In this case, although $Q^*(x,a_1)=Q^*(x_+,a_2)$, the reward difference $|\mathcal{R}^\pi(x)-\mathcal{R}^\pi(x_+)|$ can be still nonzero when $\mathcal R(\phi^*(x),a_1) \neq \mathcal R(\phi^*(x),a_2)$. Thus, for this optimal policy $\pi^*$, $d^{\pi^*}(x,x_+)$ can also be nonzero. 

This example indicates that the PBSM under an optimal policy -- namely, the fixed point attained by PBSM-based methods (e.g., DBC, DBC-normed) -- may not exhibit the denoising property.
\end{remark}

In fact, similarly to \autoref{remark:PBSM} illustrating negative results for policy-dependent metrics, we find that model-irrelevance abstraction does not necessarily imply $Q^\pi$-irrelevance abstraction, thereby clarifying the scope of the classic result~\citep{li2006towards}. 

\begin{remark}[\textbf{Model-irrelevance abstraction $\phi_{\text{model}}$ may \textit{not} imply $Q^\pi$-irrelevance abstraction $\phi_{Q^\pi}$}]
In \cite[Theorem 2]{li2006towards}, it is established that $\phi_{\text{model}}$ implies $\phi_{Q^\pi}$ for \textit{any} policy $\pi$. Here, we revisit this statement and demonstrates that it does not extend to \emph{exo-dependent} policies.
To see why the classical result fails in this setting, let $x,x_+\in \mathcal X$ be an anchor-positive pair with the same task-relevant state. If $\phi_{\text{model}}$ implies $\phi_{Q^\pi}$, we would have $Q^\pi(x,a) = Q^\pi(x_+,a),\forall a\in \mathcal A,\forall \pi.$ 

However, consider a counterexample similar to \autoref{remark:PBSM}. Suppose $x,x_+$ deterministically transits to $x',x_+'$, respectively under an action $a$, and the policy $\pi$ is deterministic. Since $Q^\pi(x,a) =R(x,a) + \gamma Q^\pi(x',\pi(x')) $ and $\mathcal R(x,a) = \mathcal R(x_+,a)$, the key question becomes $Q^\pi(x',\pi(x')) \stackrel{?}{=} Q^\pi(x_+',\pi(x_+'))$. 
Although $x',x_+'$ are bisimilar, an \emph{exo-dependent} policy may select different actions for these two states, i.e., $\pi(x') \neq \pi(x_+')$.  
In a terminal step of a finite-horizon MDP, for example, we could have $Q^\pi(x',\pi(x')) = \mathcal R(x',\pi(x')) \neq \mathcal R(x_+',\pi(x_+')) = Q^\pi(x_+',\pi(x_+'))$. Therefore, $\phi_{\text{model}} \implies \phi_{Q^\pi}$ does not hold once we allow for exo-dependent policies.
\end{remark}

\section{Difficulty Levels of the Tasks}

Difficulty levels for each task in state-based and pixel-based DMC are provided in \autoref{tab:state_levels} and \autoref{tab:pixel_levels} respectively. 
These levels are based on the average reward across all compared methods in \autoref{tab:cand_summary}. 
Tasks listed in \autoref{tab:state_levels} and \autoref{tab:pixel_levels} are sorted in ascending order of difficulty. This ordering is also applied to all other per-task performance tables and figures, facilitating a clearer understanding of how different methods perform across tasks of varying difficulty levels.



\begin{table}[ht]
\centering
\small
\caption{\small\textbf{Difficulty levels for 20 state-based DMC tasks, as determined by the compared methods (\autoref{tab:cand_summary}).} 
``Avg Reward'' stands for the average reward across all IID Gaussian noise settings in \autoref{sec:5.1}.
For each run, the reported reward is the average of 10 evaluation points collected around 2M steps.
``Max (Min) Reward'' denotes the best (worst) agent's average reward over 12 runs, while ``Max/Min'' is the ratio of the best to worst performance, indicating a task's ability to discriminate between agent performances. }
\vspace{-0.5em}
\begin{adjustbox}{max width=\linewidth}
\begin{tabular}{l l r r r r l}
\toprule
Task      &             & Avg Reward & Max Reward & Min Reward & Max/Min & Difficulty \\
\midrule
ball\_in\_cup  & catch           & 934.8      & 977.4      & 841.7      & 1.2   & Easy \\
cartpole    & balance         & 919.4      & 997.3      & 791.2      & 1.3   & Easy \\
cartpole    & balance\_sparse & 877.7      & 983.6      & 772.3      & 1.3   & Easy \\
walker      & stand           & 834.6      & 979.0      & 437.8      & 2.2   & Easy \\
cartpole    & swingup         & 818.1      & 874.1      & 707.6      & 1.2   & Easy \\
walker      & walk            & 805.7      & 961.9      & 382.4      & 2.5   & Easy \\
\midrule
reacher     & easy            & 740.1      & 955.1      & 453.0      & 2.1   & Medium \\
finger      & spin            & 728.8      & 923.6      & 498.5      & 1.9   & Medium \\
quadruped   & walk            & 703.1      & 948.9      & 245.5      & 3.9   & Medium \\
cartpole    & swingup\_sparse & 647.3      & 839.1      & 531.9      & 1.6   & Medium \\
reacher     & hard            & 641.1      & 853.0      & 340.3      & 2.5   & Medium \\
finger      & turn\_easy      & 587.8      & 926.5      & 207.7      & 4.5   & Medium \\
walker      & run             & 545.8      & 776.1      & 117.4      & 6.6   & Medium \\
cheetah     & run             & 533.4      & 859.0      & 129.8      & 6.6   & Medium \\
pendulum    & swingup         & 514.3      & 824.5      & 247.2      & 3.3   & Medium \\
\midrule
quadruped   & run             & 460.7      & 864.3      & 199.0      & 4.3   & Hard \\
finger      & turn\_hard      & 435.6      & 893.0      & 102.6      & 8.7   & Hard \\
hopper      & stand           & 261.9      & 878.4      & 22.3       & 39.3  & Hard \\
acrobot     & swingup         & 75.7       & 246.1      & 11.2       & 22.0  & Hard \\
hopper      & hop             & 64.7       & 243.4      & 1.5        & 162.4 & Hard \\
\bottomrule
\end{tabular}
\end{adjustbox}
\label{tab:state_levels}
\end{table}

\begin{table}[ht]
\centering
\small
\vspace{-1em}
\caption{\small\textbf{Difficulty levels for 14 pixel-based DMC tasks, as determined by the compared methods (\autoref{tab:cand_summary}).}
``Avg Reward'' stands for the average reward across clean background, natural video (colored and grayscale), natural image (colored and grayscale), and IID Gaussian noise settings described in \autoref{sec:5.1}.
For each run, the reported reward is the average of 10 evaluation points collected around 2M steps.
``Max (Min) Reward'' denotes the best (worst) agent's average reward over 5 runs, while ``Max/Min'' is the ratio of the best to worst performance, indicating a task's ability to discriminate between agent performances. }
\vspace{-0.5em}
\begin{adjustbox}{max width=\linewidth}
\begin{tabular}{l l r r r r l}
\toprule
Task      &              & Avg Reward & Max Reward & Min Reward & Max/Min & Difficulty \\
\midrule
cartpole   & balance         & 949.3      & 986.7      & 905.5      & 1.1   & Easy \\
cartpole   & balance\_sparse & 915.3      & 999.4      & 804.6      & 1.2   & Easy \\
walker      & stand            & 887.7      & 959.1      & 633.7      & 1.5   & Easy \\
finger      & spin             & 815.2      & 909.5      & 426.4      & 2.1   & Easy \\
\midrule
cartpole    & swingup          & 765.0      & 853.2      & 551.1      & 1.5   & Medium \\
ball\_in\_cup  & catch            & 719.2      & 887.8      & 263.4      & 3.4   & Medium \\
walker     & walk             & 718.9      & 909.1      & 360.8      & 2.5   & Medium \\
point\_mass   & easy             & 421.5      & 558.6      & 256.1      & 2.2   & Medium \\
cartpole    & swingup\_sparse  & 409.6      & 680.4      & 57.8       & 11.8  & Medium \\
\midrule
reacher   & easy             & 336.8      & 949.1      & 113.0      & 8.4   & Hard \\
pendulum  & swingup          & 313.2      & 468.9      & 9.3        & 50.5  & Hard \\
cheetah   & run              & 299.4      & 411.0      & 144.7      & 2.8   & Hard \\
walker    & run              & 285.5      & 441.9      & 77.3       & 5.7   & Hard \\
hopper     & hop              & 73.6       & 122.5      & 5.6        & 22.0  & Hard \\
\bottomrule
\end{tabular}
\end{adjustbox}
\label{tab:pixel_levels}
\end{table}

\section{Implementation Details}

\subsection{Hyperparameters}
\label{app:hparams}
Our hyperparameter settings for the benchmarked agents are based on their open-source codebases and the values reported in their respective papers.
\autoref{tab:hyperparams} (above the double rule) shows the general hyperparameter setting adopted by most agents. For the action repeat, we set it to $4$ for most tasks, to $8$ for cartpole (swingup, swingup\_sparse), and to $2$ for finger spin and walker (walk, run, stand) following the convention~\citep{yarats2021mastering,zang2022simsr,chen2022learning}. Exceptions to \autoref{tab:hyperparams} are detailed below:

\begin{itemize}
    \item For MICo~\citep{castro2021mico}\footnote{\url{https://github.com/google-research/google-research/tree/master/mico}}, $\beta$ in MICo distance parametrization is set to $0.1$. In MICo's code, they use a different hidden unit size 1024 rather than 256 in our implementation, and a reward scale of 0.1 rather than 1 in our implementation. We assume that changing the hidden unit size alters the network architecture, and modifying the reward scale effectively changes the environment. Since other algorithms are evaluated under a unified setting, we avoid such modifications to isolate algorithmic differences, which are the primary focus of our study.
    \item For RAP~\citep{chen2022learning}\footnote{\url{https://github.com/jianda-chen/RAP_distance}}, we use the same hyperparameter setting in their open-source code, where the actor, critic, and encoder learning rates are set to $5\times 10^{-4}$, $\beta$ in MICo distance parametrization is set to $10^{-6}$ (which actually almost disables metric learning), RP and ZP loss coefficients are set to $10^{-4}$, and the encoder feature dimensionality is set to $100$.
\end{itemize}

\begin{table}[ht]
  \centering
    \caption{ \textbf{Hyperparameter settings} for most \textit{agents} we benchmarked (above the double rule) and for \textit{environmental} configurations (below the double rule).}
\vspace{-0.5em}
    \small
  \begin{tabular}{ll}
    \toprule
    \textbf{Hyperparameter Name} & \textbf{Value} \\
    \midrule
    Replay buffer capacity           & $1\times 10^6$ \\
    Replay ratio & $0.2$ \\
    Batch size                       & $128$ \\
    Discount $\gamma$                & $0.99$ \\
    Optimizer                        & Adam \\
    Encoder feature dimensionality & $50$ \\
    Hidden unit size in neural networks & $256$ \\
    
    \midrule
    Critic learning rate             & $1\times10^{-3}$ \\
    Critic target update frequency   & $2$ \\
    Critic Q-function soft-update rate $\tau_Q$ & $0.01$ \\
    Actor learning rate              & $1\times10^{-3}$ \\
    Actor update frequency           & $2$ \\
    Actor log stddev bounds          & $[-10,\,2]$ \\
    Encoder learning rate            & $1\times10^{-3}$ \\
    Encoder soft-update rate $\tau_\phi$ & $0.05$ \\
    Reward model and transition model's learning rate            & $1\times10^{-3}$ \\
    Reward model and transition model's weight decay             & $1\times10^{-7}$ \\
    SAC temperature learning rate        & $1\times10^{-4}$ \\
    SAC initial temperature & $0.1$ \\
    \midrule
    Metric loss coefficient $\lambda_{\text{M}}$ & $0.5$ \\
    Metric reward coefficient $c_R$ & $1$ \\
    Metric transition coefficient $c_T$ & $0.99$ \\
    RP loss coefficient $\lambda_{\text{RP}}$ & $1$ \\
    ZP loss coefficient $\lambda_{\text{ZP}}$ & $1$ \\
    \midrule
    \midrule
    Image size  & $84\times 84 \times 3$ \\
    Frame stack & $3$ \\
    Paralleled environments & $10$ \\
    Distracting video frames $N$ (per paralleled environment) & $1000$ \\
    \bottomrule
  \end{tabular}
  \label{tab:hyperparams}
\end{table}

\subsection{Model Architecture}
\label{app:arch}
The general model architectures of all our implemented methods and isolated metric estimation setting are illustrated in \autoref{fig:sac_arch}, \autoref{fig:deepmdp_arch}, \autoref{fig:metric_arch}, and \autoref{fig:iso_arch}, respectively.
For brevity, only the forward pass of the neural network is shown, referring to the computation from inputs to outputs without gradient updates.

\begin{figure}[htp]
\vspace{0em}
\centering
\begin{minipage}[t]{0.48\linewidth}
    \centering
    \includegraphics[width=\linewidth]{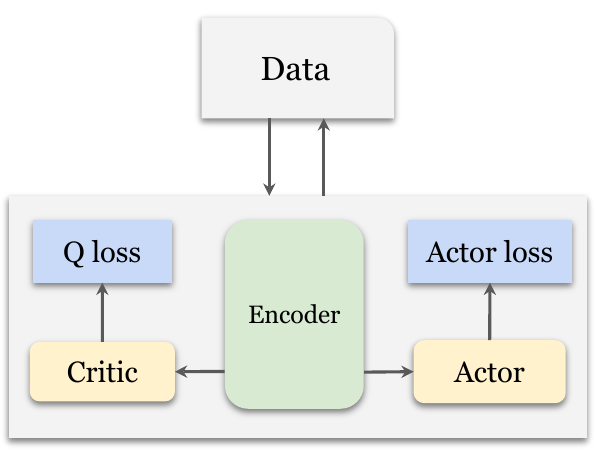}
    \caption{\small \textbf{SAC} architecture used in our experiments.}
    \label{fig:sac_arch}
\end{minipage}
\hfill
\begin{minipage}[t]{0.48\linewidth}
    \centering
    \includegraphics[width=\linewidth]{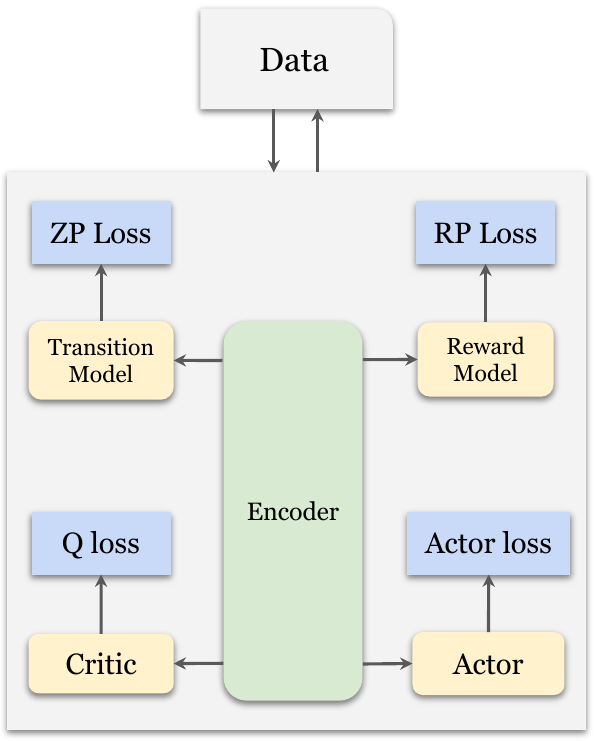}
    \caption{\small \textbf{DeepMDP} architecture used in our experiments.}
    \label{fig:deepmdp_arch}
\end{minipage}
\vspace{0em}
\end{figure}


\begin{figure}[htp]
\vspace{0em}
\centering
\begin{minipage}[t]{0.48\linewidth}
    \centering
    \includegraphics[width=\linewidth]{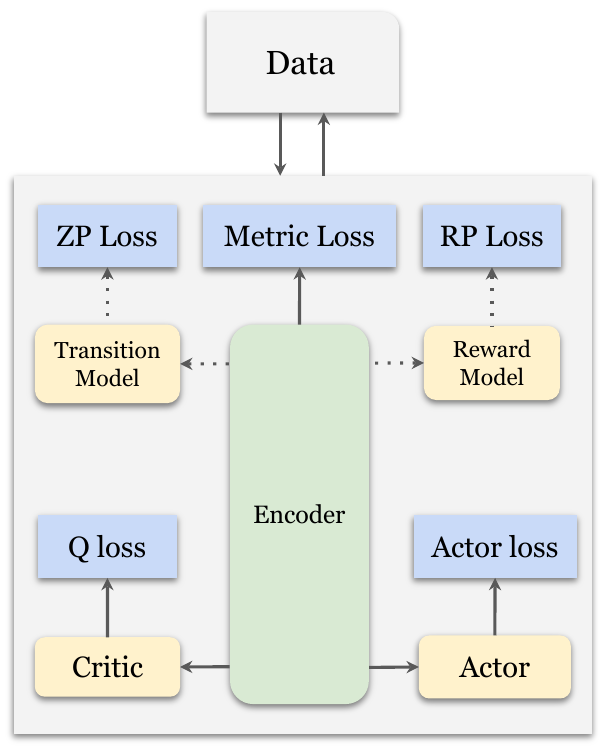}
    \caption{\small General architecture of \textbf{metric learning methods}, summarizing the architecture used in our benchmarked metric learning methods (\autoref{tab:cand_summary}). Dotted lines show optional data flows.}
    \label{fig:metric_arch}
\end{minipage}
\hfill
\begin{minipage}[t]{0.48\linewidth}
    \centering
    \includegraphics[width=\linewidth]{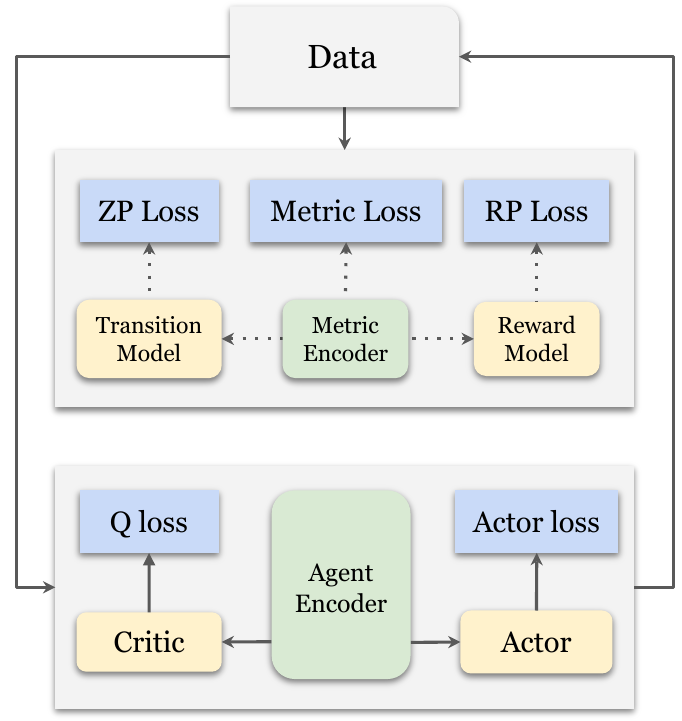}
    \caption{\small An instance of architecture described in the \textbf{isolated metric estimation setting} (\autoref{sec:isolated}). Shown is the case where a base agent, SAC (bottom gray module), is used to collect experiences, which are then used to train the isolated metric encoder, as implemented in our experiments in \autoref{sec:5.3}. Dotted lines show optional data flows. For example, in \autoref{sec:5.3}, isolated metric estimation for DeepMDP enables data flow only to the ZP and RP losses, whereas for MICo, only the metric loss is used.}
    \label{fig:iso_arch}
\end{minipage}
\vspace{0em}
\end{figure}

\subsection{Pixel-based Environmental Setup}
In our pixel-based settings, following~\citep{zhang2020learning}, we use natural videos and images from the Kinetics-400 dataset~\citep{kay2017kinetics} labeled \texttt{driving\_car} as distracting backgrounds.
The dataset is split into $817$ training and $90$ test video clips, each $10$ seconds long, and is publicly available in our codebase (\autoref{fn:artifact}).

For each run, a subset of videos of the training set is sampled for use during training, with the total number of frames determined by the hyperparameter ``distracting video frame $N$'' in \autoref{tab:hyperparams}; that is, the sampled videos must collectively provide at least $N$ frames.
In our experiments, we set $N=1000$, sampling $4$ to $5$ videos depending on their frame rates.
The test set is used for OOD generalization evaluation, ensuring that videos seen during training are excluded from evaluation. During OOD evaluation, each parallel environment is assigned a unique set of $N$ distracting frames, which remain fixed within a run. For example, in our experiments with $N=1000$ and $10$ parallel environments, a total of $10000$ frames are sampled and fixed during each run.

We elaborate on the instantiation of pixel-based noise settings introduced in \autoref{sec:noises}. In the natural video noise setting, clean background pixels are replaced with those from distracting videos; each parallel environment independently samples an initial starting index $\xi_0$, from which the video is played sequentially as background. 
In the natural image noise setting, background pixels are replaced with those from a single randomly sampled frame in the video dataset, which remains fixed across all parallel environments throughout each run.

Additionally, \autoref{tab:hyperparams} (below the double rule) presents the environmental hyperparameters. \autoref{tab:noise-ratios} quantifies the distraction in pixel-based distracting DMC tasks by presenting the percentage of noised pixels.

\begin{table}[ht]
    \centering
    \small
    \caption{Percentage of distracting pixels (the pixels that are task-irrelevant) for 14 pixel-based DMC tasks. These ratios remain consistent across all different pixel-based noise settings introduced in \autoref{sec:noises}.}
\vspace{-0.5em}
    \begin{tabular}{lll}
        \toprule
        \textbf{Task} &  & \textbf{Noise Ratio (\%)} \\
        \midrule
        cartpole    & balance         & 98.3\% \\
        cartpole    & balance\_sparse & 98.3\% \\
        walker      & stand           & 92.6\% \\
        finger      & spin            & 94.3\% \\
        \midrule
        cartpole    & swingup         & 98.3\% \\
        ball\_in\_cup & catch        & 99.0\% \\
        walker      & walk            & 92.6\% \\
        point\_mass & easy            & 99.7\% \\
        cartpole    & swingup\_sparse & 98.3\% \\
        \midrule
        reacher     & easy            & 96.5\% \\
        pendulum    & swingup         & 98.9\% \\
        cheetah     & run             & 95.4\% \\
        walker      & run             & 92.6\% \\
        hopper      & hop             & 97.3\% \\
        \bottomrule
    \end{tabular}
\vspace{-0.5em}
    \label{tab:noise-ratios}
\end{table}


\section{Additional Experiment Results}
\label{app:addition_exp}

\autoref{tab:extended_exp_summary} summarizes the \textbf{per-task result figures} presented in this section, each corresponding to an aggregated score reported in the main text. Detailed noise settings are indicated at the top of each figure and table.

Additionally, \autoref{tab:model_update_time} reports model update time comparisons from \autoref{sec:5.1}.  \autoref{fig:5.2DBC-normed} presents a case study on six state-based DMC tasks using DBC-normed with LayerNorm and its design variants, as discussed in \autoref{sec:5.2}.
\autoref{fig:df_sensitivity} and \autoref{fig:rp_sensitivity} serve as alternative illustrations of \autoref{fig:5.1bar} and \autoref{fig:5.2RP}, highlighting the agents' sensitivity to noise hyperparameters.

\begin{table}[h]
    \centering
    \caption{Summary of extended experimental results in Appendix. Per-task results corresponding to all aggregated scores shown in the main text are included.}
\vspace{-0.5em}
    \label{tab:extended_exp_summary}
    \renewcommand{\arraystretch}{1.5}
    \resizebox{\linewidth}{!}{%
    \begin{tabular}{@{}p{1.2cm} p{4.5cm} p{2.5cm} p{3.3cm}@{}}
        \toprule
        \textbf{Section} & \textbf{Description of Settings} & \textbf{Main Text \newline Reference} & \textbf{Per-task Figures / \newline Tables} \\
        \midrule

        \multirow{2}{*}{\autoref{sec:5.1}} 
        & Noise std.\ sweep:\newline $\sigma \in \{0.2,1.0,2.0,4.0,8.0\}$ \newline (fixed $m=32$) 
        & \autoref{fig:5.1bar} 
        & \autoref{tab:noise_std_0.2__noise_dim_32} -- \autoref{tab:noise_std_8__noise_dim_32}\newline \autoref{fig:all_std02} -- \autoref{fig:all_std8} \\
        
        & Noise dim.\ sweep:\newline $m \in \{2,16,32,64,128\}$ \newline (fixed $\sigma=1.0$) 
        & \autoref{fig:5.1bar} 
        & \autoref{tab:noise_std_1__noise_dim_2} -- \autoref{tab:noise_std_1__noise_dim_128}\newline \autoref{fig:all_dim2} -- \autoref{fig:all_dim128} \\
        
        \midrule

        \multirow{2}{*}{\autoref{sec:5.2}} 
        & LayerNorm ablation 
        & \autoref{tab:5.2ln} 
        & \autoref{fig:5.2ln_huber} \\
        
        & IID Gaussian + Random \newline projection noise std.\ sweep: \newline$\sigma \in \{0.2,1.0,2.0,4.0,8.0\}$\newline (fixed $m=32$) 
        & \autoref{fig:5.2RP} 
        & \autoref{fig:rp_pertask02} -- \autoref{fig:rp_pertask8} \\
        
        \midrule

        \multirow{3}{*}{\autoref{sec:5.3}} 
        & SAC (with LayerNorm) \newline reward curves 
        & \autoref{fig:5.3compared} 
        & \autoref{fig:5.3iso_sacln_rew} \\
        & DF curves on agent encoders \newline co-trained with RL 
        & \autoref{fig:5.3compared} 
        & \autoref{fig:df_co_training_RL} \\
        & Pixel-based isolated evaluation\newline setting curves 
        & -- 
        & \autoref{fig:5.3iso_id_vidgray_agent} -- \autoref{fig:5.3iso_ood_vid} \\
        
        \midrule

        \multirow{2}{*}{\autoref{sec:5.4}} 
        & OOD generalization\newline reward curves 
        & \autoref{fig:5.4ood} 
        & \autoref{fig:ood_images_gray} -- \autoref{fig:ood_v} \\
        
        & Reward difference curves 
        & \autoref{fig:gen_gap} 
        & \autoref{fig:gengap_vg} -- \autoref{fig:gengap_v} \\
        
        \bottomrule
    \end{tabular}
    }
\end{table}
\clearpage

\begin{table}[t]
\vspace{-1em}
    \centering
    \caption{\textbf{Relative time spent on model updates} on NVIDIA L40S GPUs under the same task (walker/walk, with $\mathcal S = \mathbb R^{24}$ and $\Xi = \mathbb R^{32}$). Values represent the multiple of SAC's updating time. Key hyperparameters affecting the speed are set identically for all methods to \autoref{tab:hyperparams}.}
    \vspace{-0.5em}
    \resizebox{0.8\linewidth}{!}{%
        \begin{tabular}{lccccccc}
            \toprule
             & SAC & DeepMDP & DBC & DBC-normed & MICo & RAP & SimSR \\
            \midrule
            Pixel-based & 1.00 & 1.44 & 2.03 & 2.12 & 1.53 & 2.20 & 1.75 \\
            State-based & 1.00 & 1.42 & 1.76 & 1.95 & 1.39 & 2.08 & 1.68 \\
            \bottomrule
        \end{tabular}%
    }
    \label{tab:model_update_time}
\end{table}

\begin{figure}[h]
\vspace{-1em}
    \centering
\includegraphics[width=0.75\linewidth]{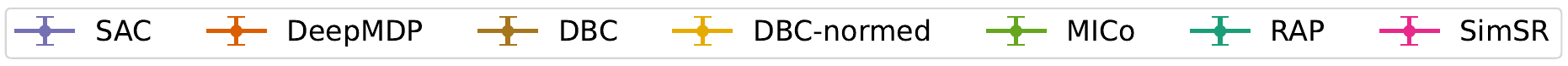} \\
\vspace{0cm}
\includegraphics[width=0.248\linewidth]{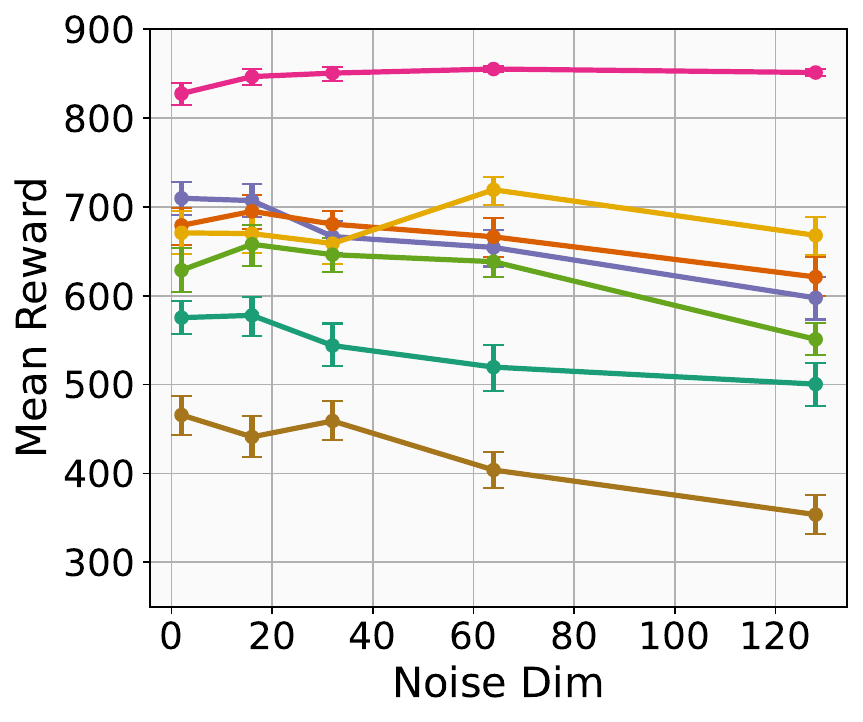}
\includegraphics[width=0.248\linewidth]{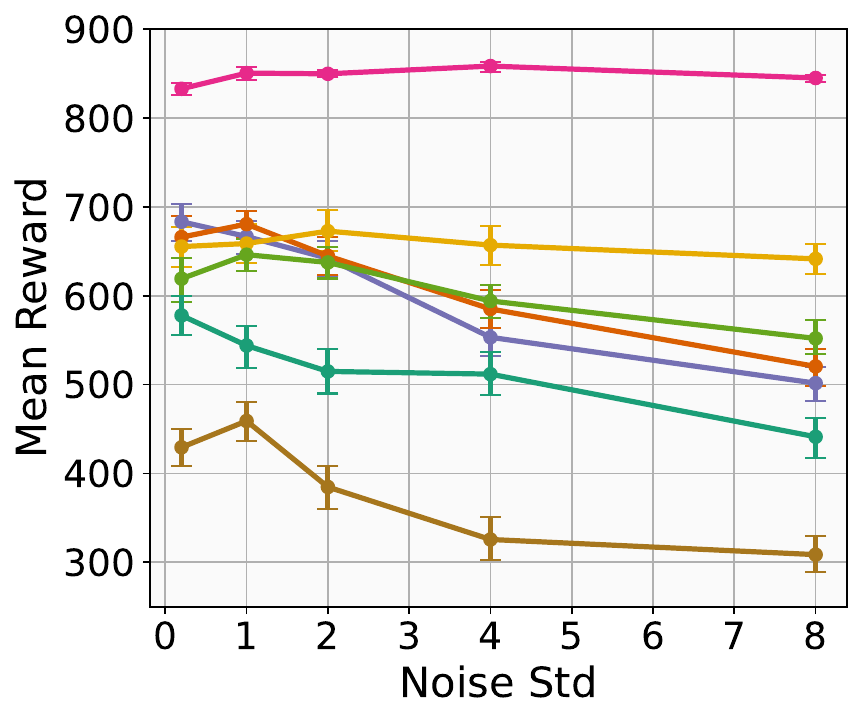}\includegraphics[width=0.248\linewidth]{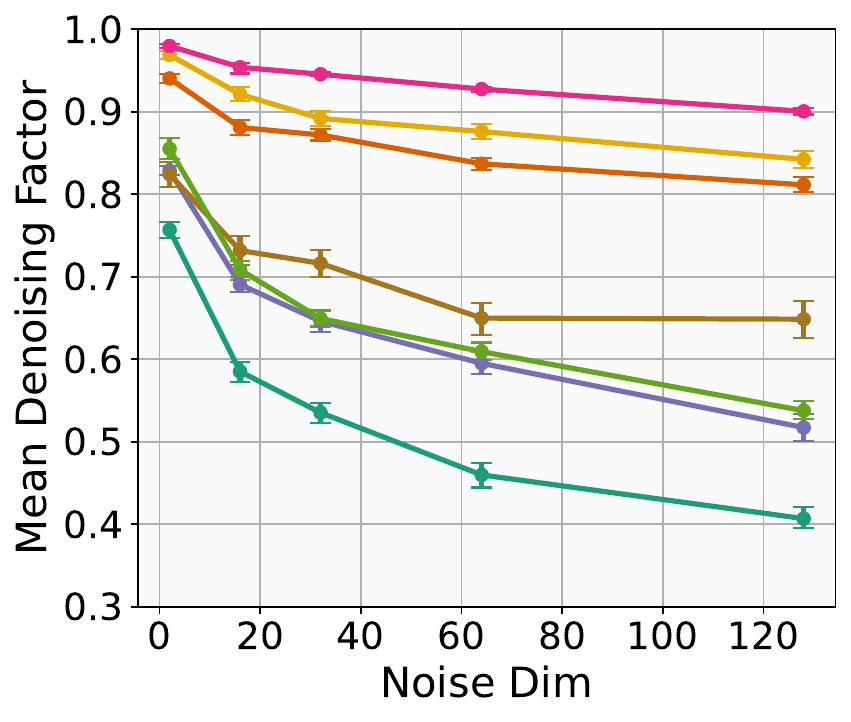}\includegraphics[width=0.248\linewidth]{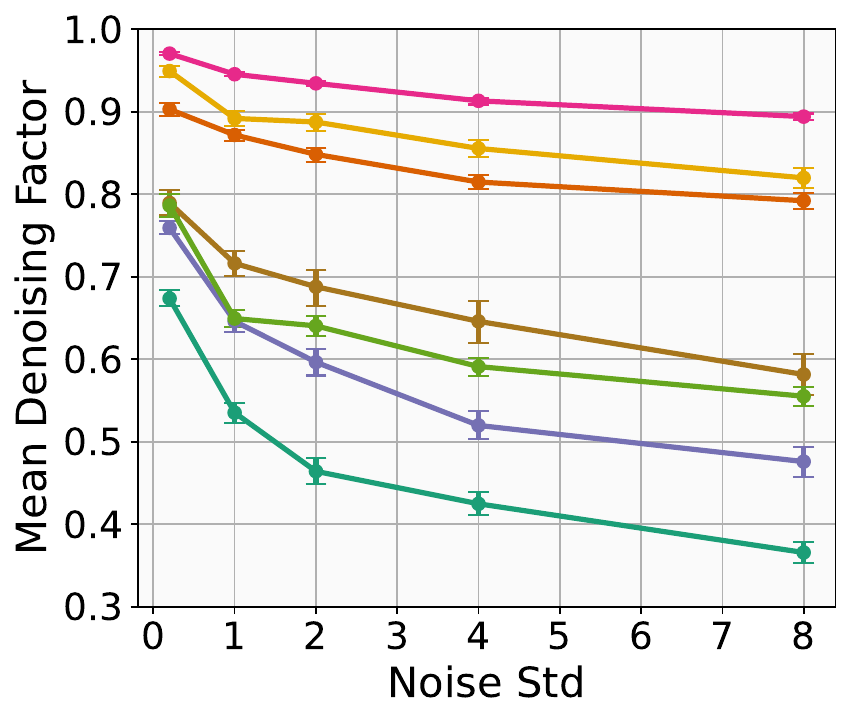}
    \vspace{-2em}
    \caption{\small \textbf{Benchmarking results: reward (left) and denoising factor (right)} of seven methods to IID Gaussian noise dimension (Noise Dim) and standard deviation (Noise Std). Each point is aggregated by 20 state-based tasks in \autoref{tab:state_levels}.}
    \vspace{-1em}
    \label{fig:df_sensitivity}
\end{figure}

\begin{table}[htbp]
\centering
\caption{Performance of state-based DMC tasks for the compared methods in noise setting: noise std=0.2, noise dim=32.}
\begin{adjustbox}{max width=\linewidth}

\end{adjustbox}
\label{tab:img_source_noise}
\end{table}

\begin{figure}[h]
    \makebox[\linewidth]{%
        \includegraphics[width=0.85\linewidth]{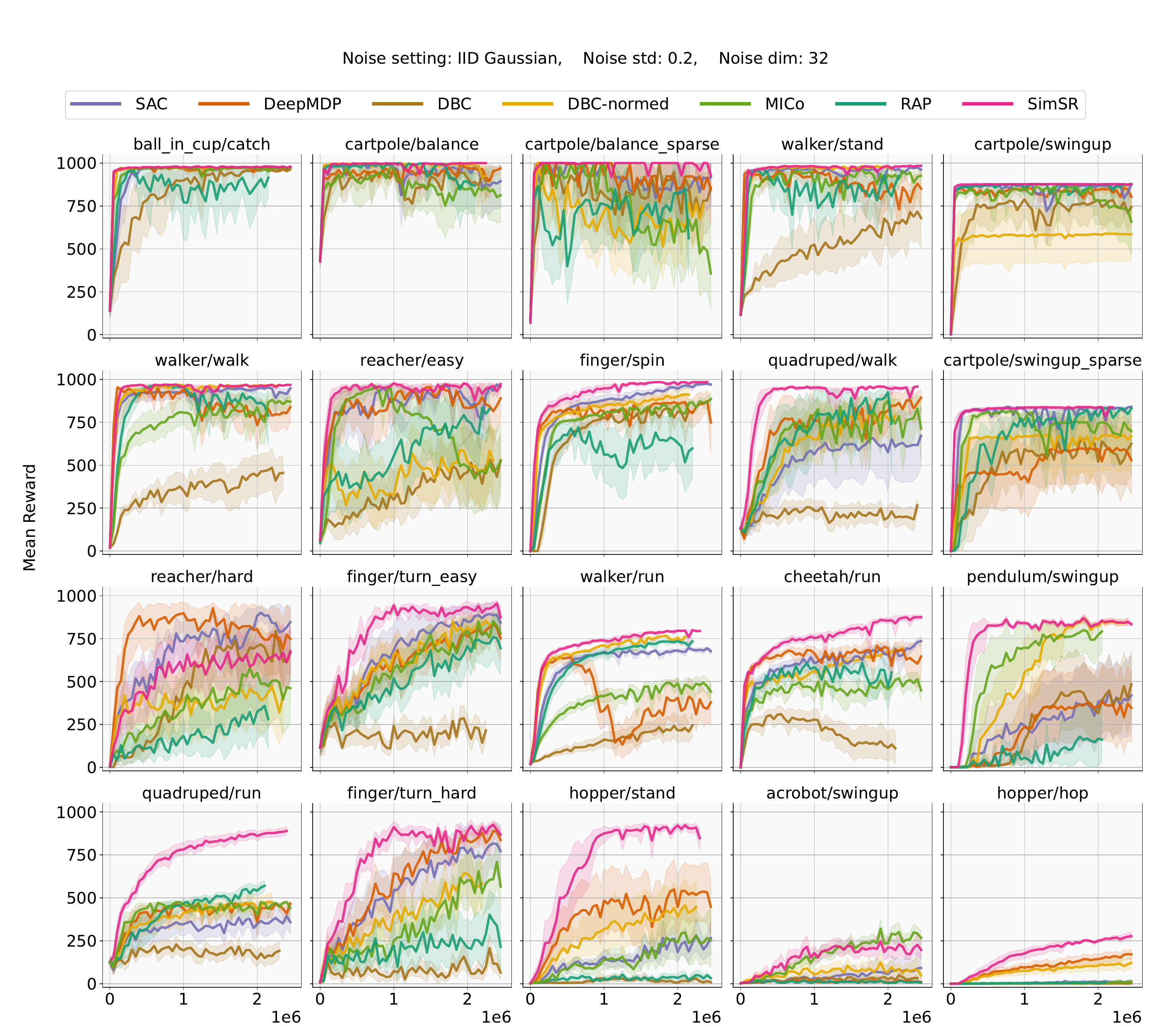}%
    }
    \vspace{-2em}
    \caption{Performance on individual state-based tasks.}
    \label{fig:all_std02}
\end{figure}

\begin{figure}[h]
    \makebox[\linewidth]{%
        \includegraphics[width=0.85\linewidth]{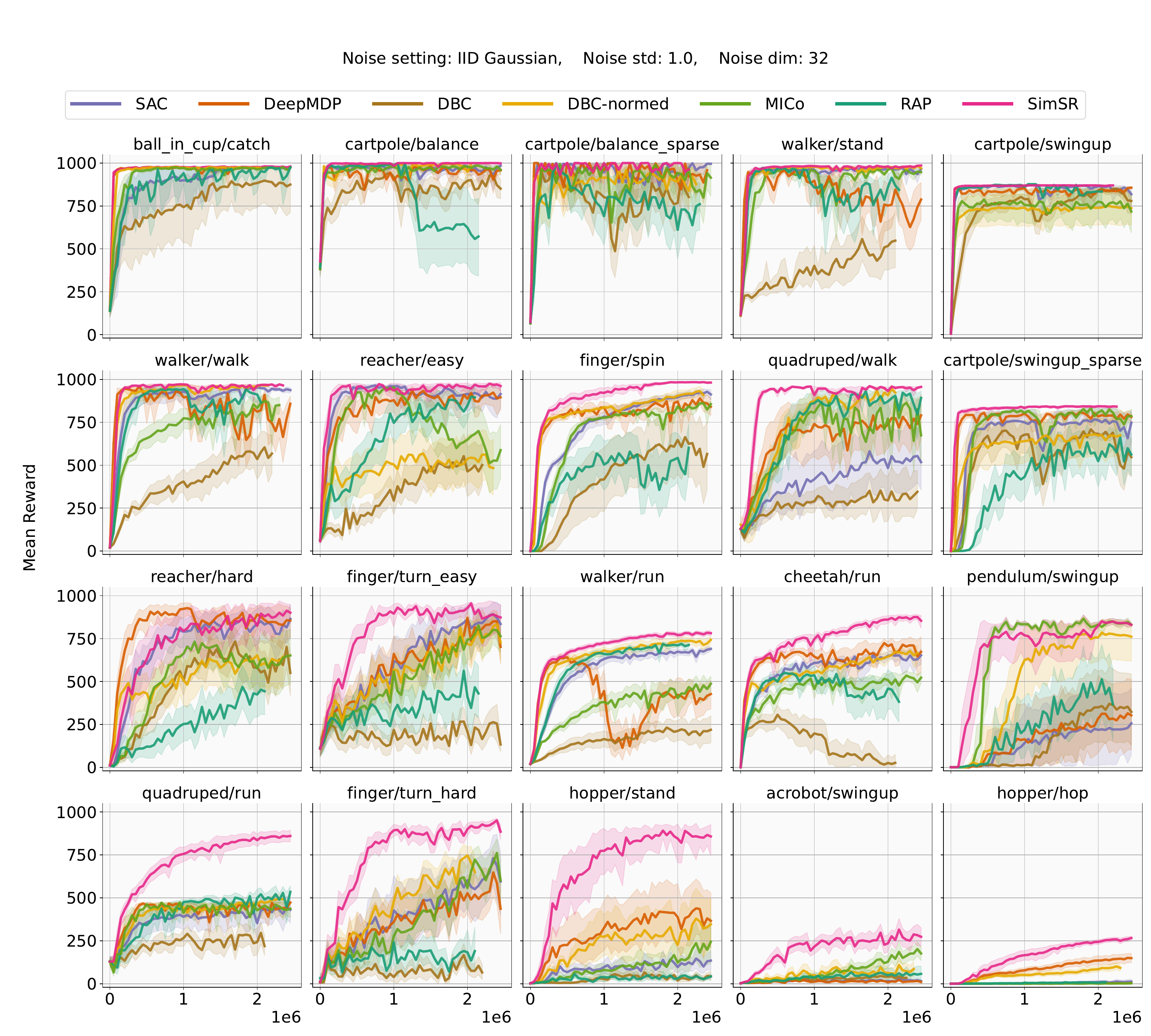}%
    }
    \vspace{-2em}
    \caption{Performance on individual state-based tasks.}
    \label{fig:all_std1}
\end{figure}

\begin{figure}[h]
    \makebox[\linewidth]{%
        \includegraphics[width=0.85\linewidth]{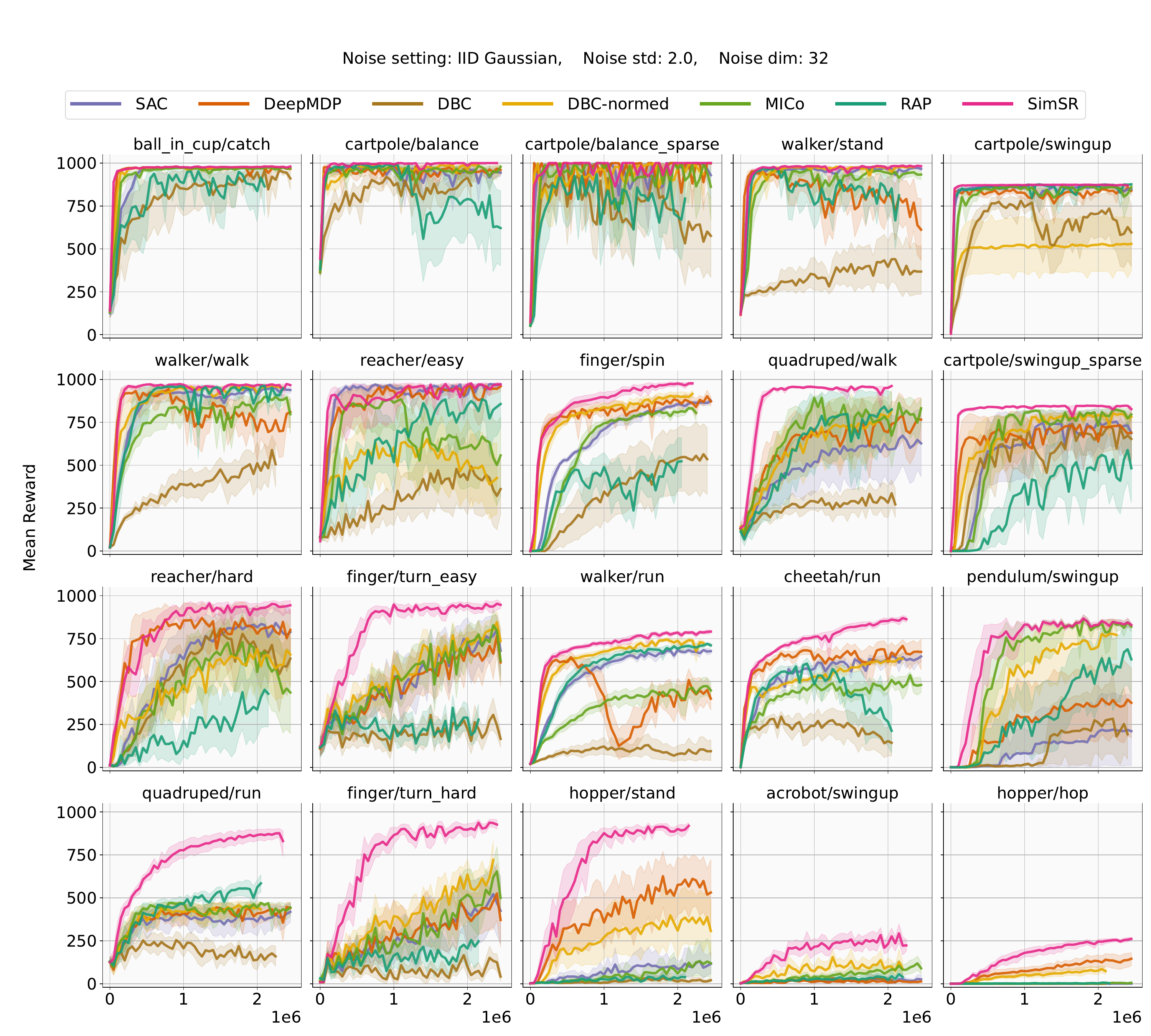}%
    }
        \vspace{-2em}
    \caption{Performance on individual state-based tasks.}
    \label{fig:all_std2}
\end{figure}

\begin{figure}[h]
    \makebox[\linewidth]{%
        \includegraphics[width=0.85\linewidth]{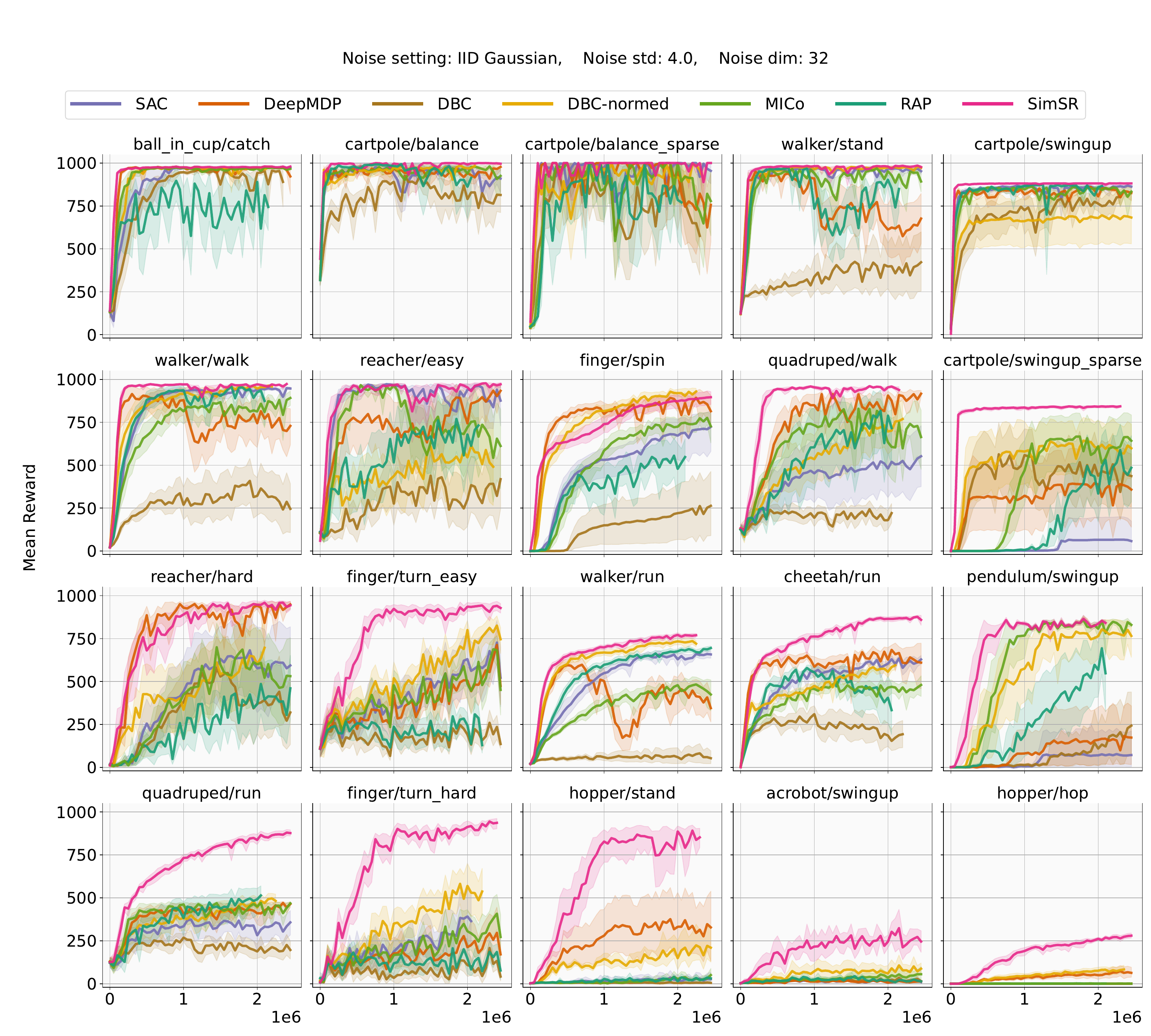}%
    }
        \vspace{-2em}
    \caption{Performance on individual state-based tasks.}
    \label{fig:all_std4}
\end{figure}

\begin{figure}[h]
    \makebox[\linewidth]{%
        \includegraphics[width=0.85\linewidth]{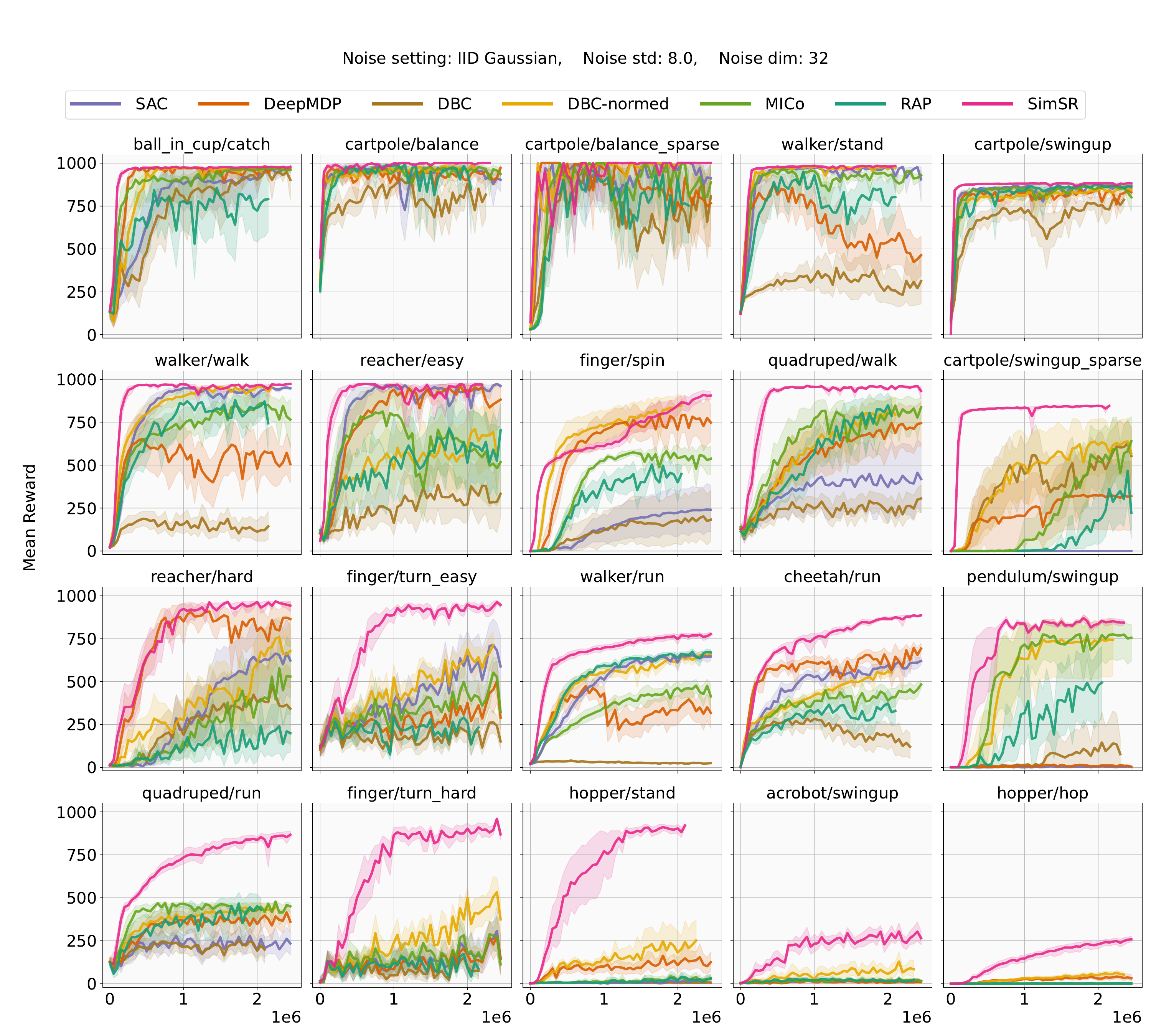}%
    }
        \vspace{-2em}
    \caption{Performance on individual state-based tasks.}
    \label{fig:all_std8}
\end{figure}

\begin{figure}[h]
    \makebox[\linewidth]{%
        \includegraphics[width=0.85\linewidth]{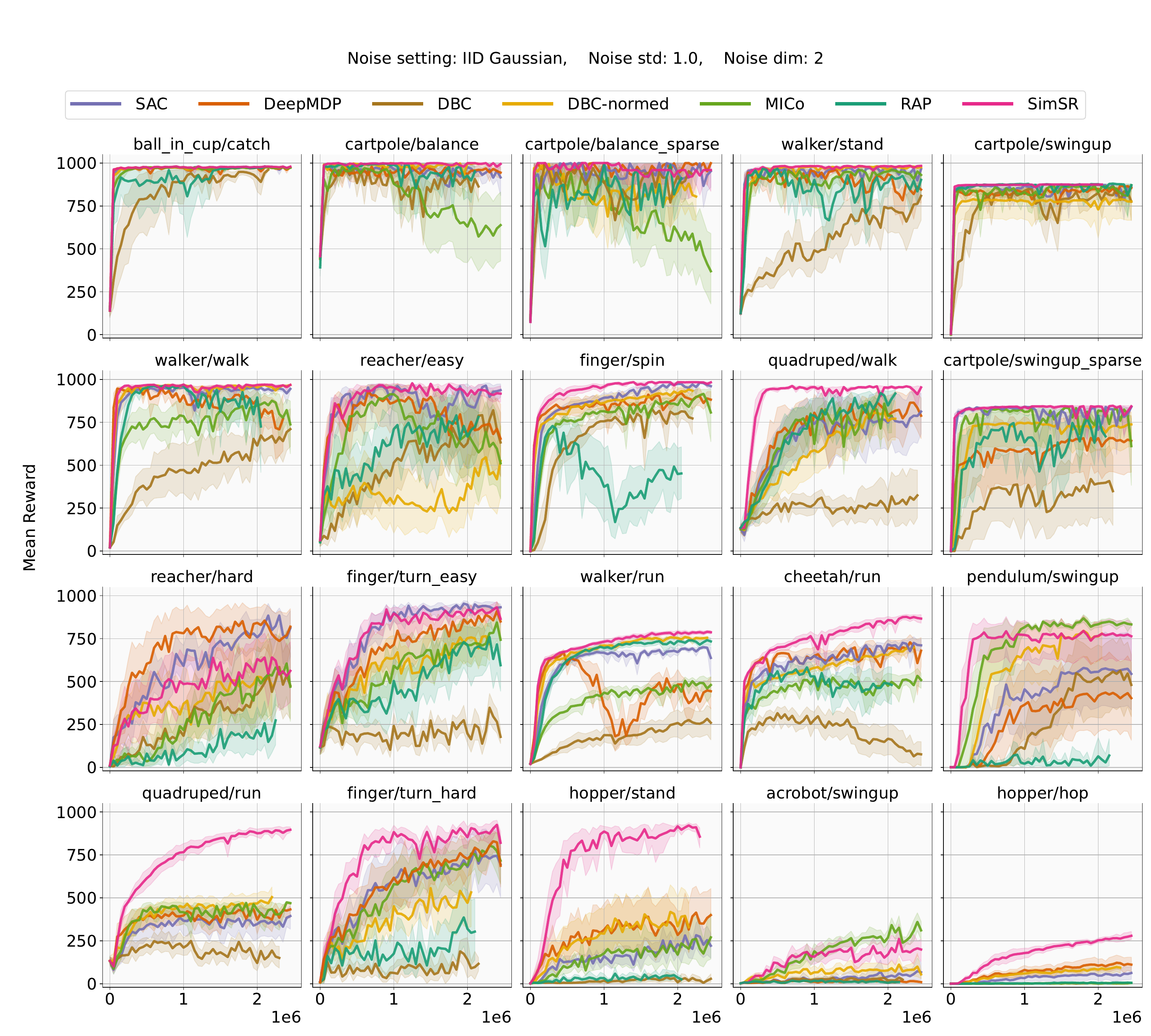}%
    }
        \vspace{-2em}
    \caption{Performance on individual state-based tasks.}
    \label{fig:all_dim2}
\end{figure}

\begin{figure}[h]
    \makebox[\linewidth]{%
        \includegraphics[width=0.85\linewidth]{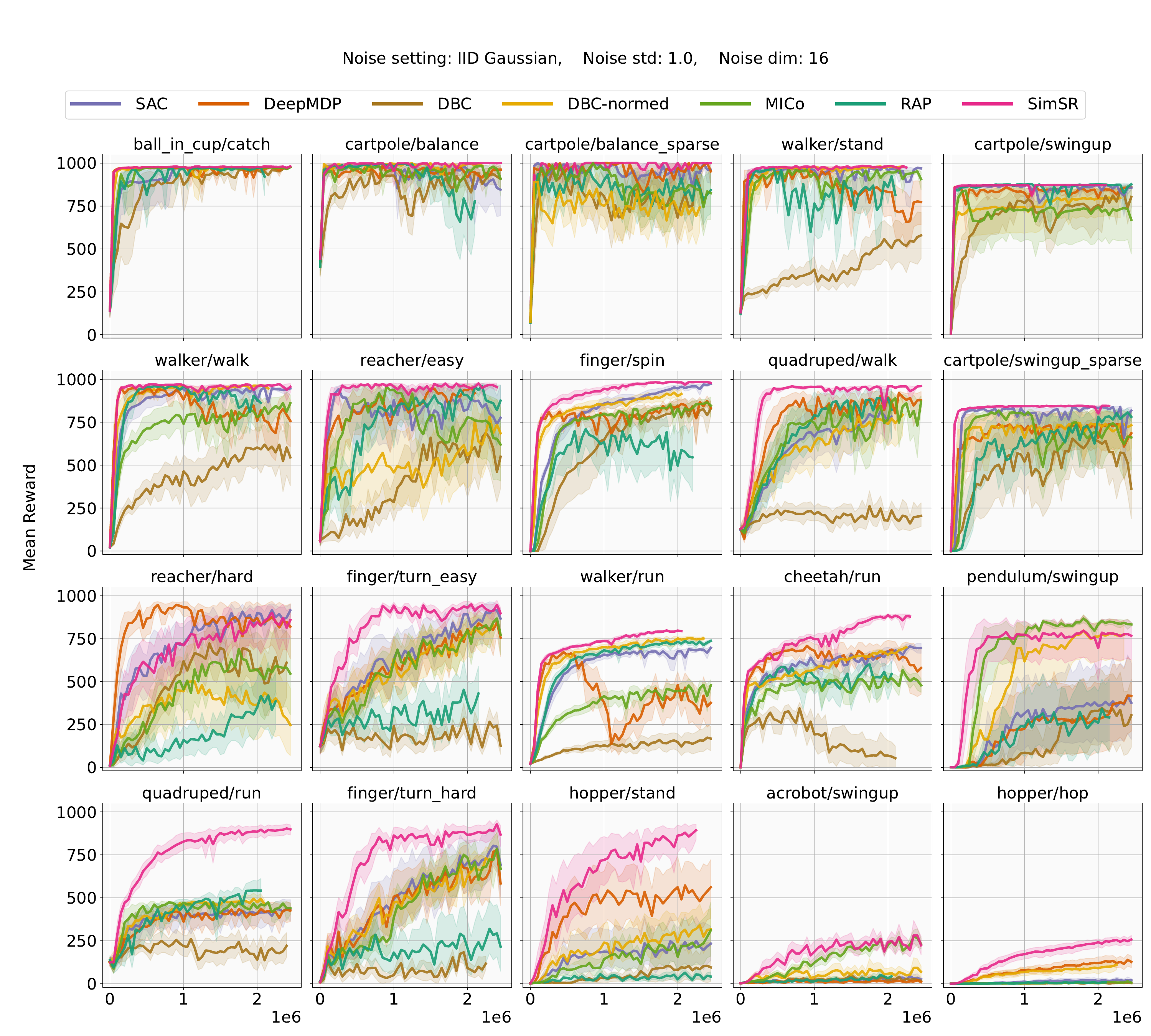}%
    }
        \vspace{-2em}
    \caption{Performance on individual state-based tasks.}
    \label{fig:all_dim16}
\end{figure}

\begin{figure}[h]
    \makebox[\linewidth]{%
        \includegraphics[width=0.85\linewidth]{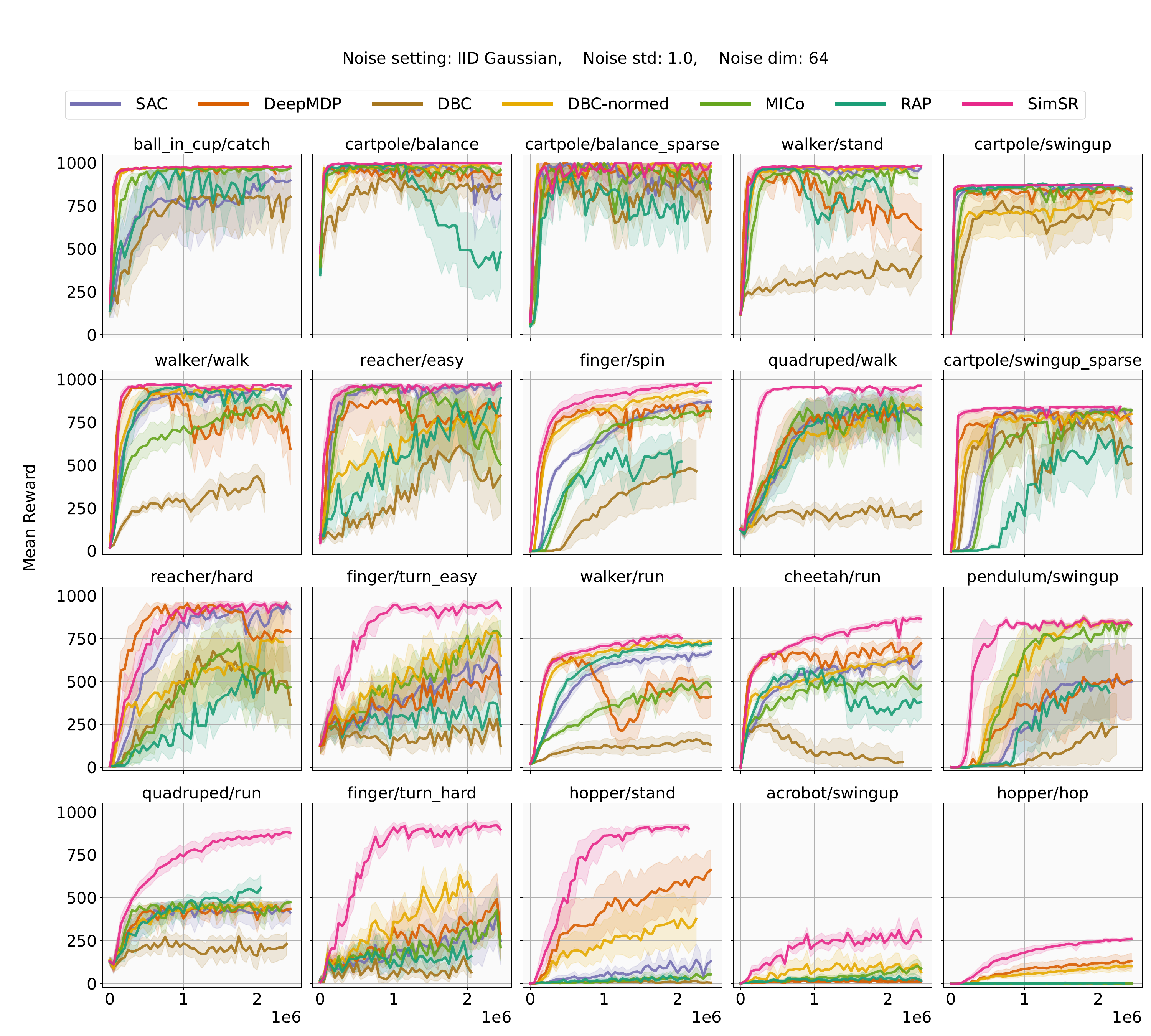}%
    }
        \vspace{-2em}
    \caption{Performance on individual state-based tasks.}
    \label{fig:all_dim64}
\end{figure}

\begin{figure}[h]
    \makebox[\linewidth]{%
        \includegraphics[width=0.85\linewidth]{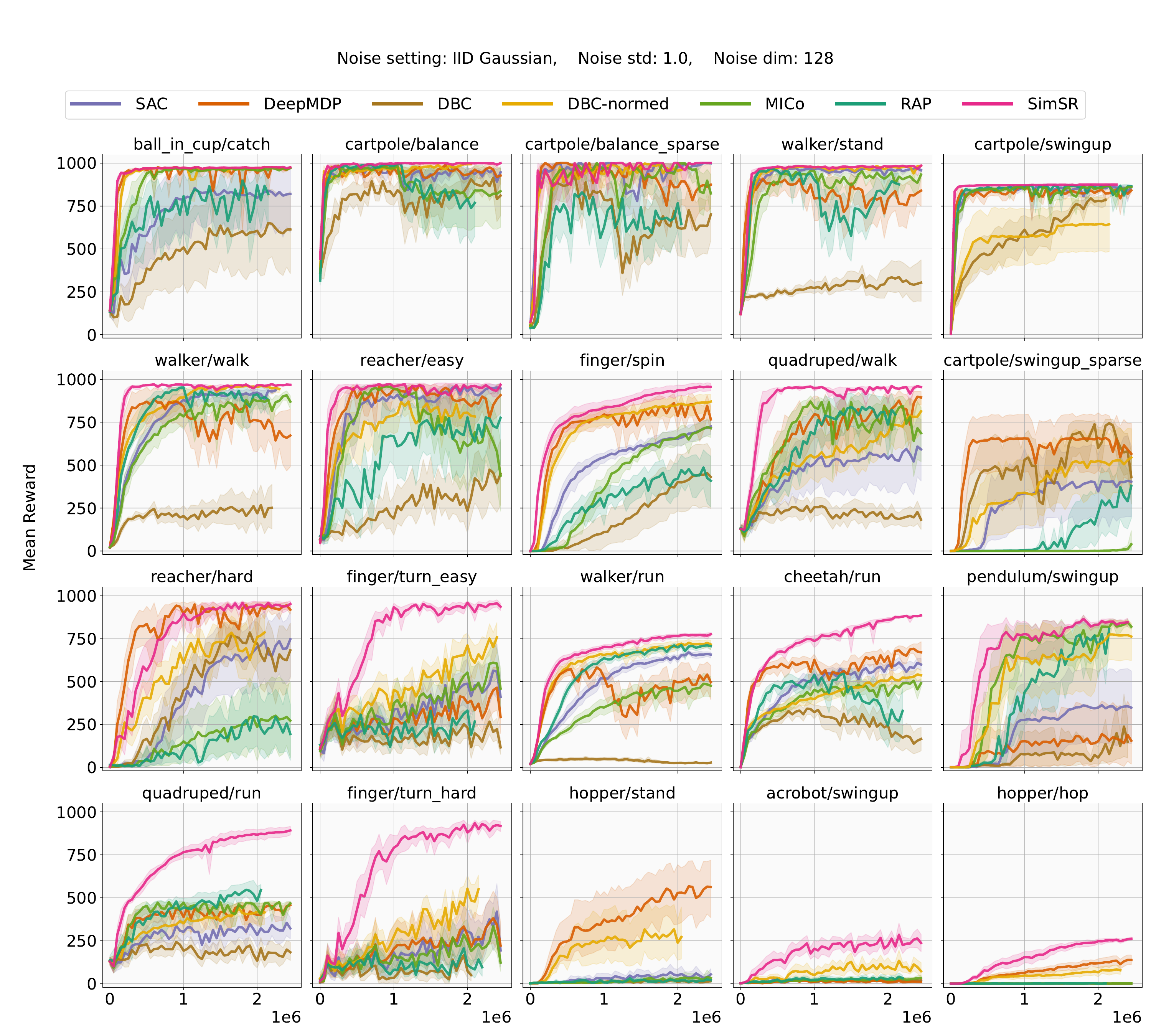}%
    }
            \vspace{-2em}
    \caption{Performance on individual state-based tasks.}
    \label{fig:all_dim128}
\end{figure}

\begin{figure}[h]
    \makebox[\linewidth]{%
        \includegraphics[width=1\linewidth]{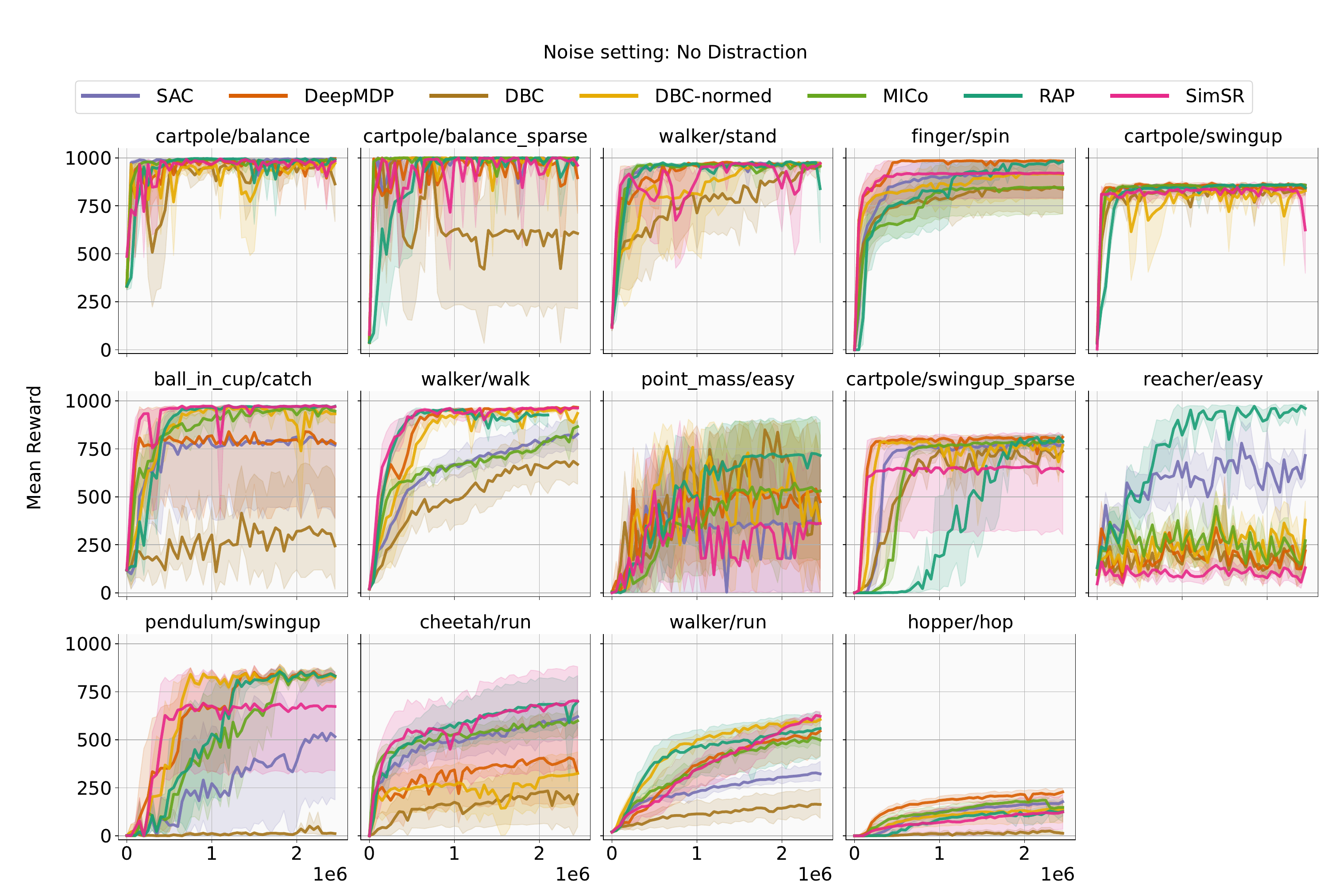}%
    }
                \vspace{-2em}
    \caption{Performance on individual pixel-based tasks.}
    \label{fig:all_none}
\end{figure}

\begin{figure}[h]
    \makebox[\linewidth]{%
        \includegraphics[width=1\linewidth]{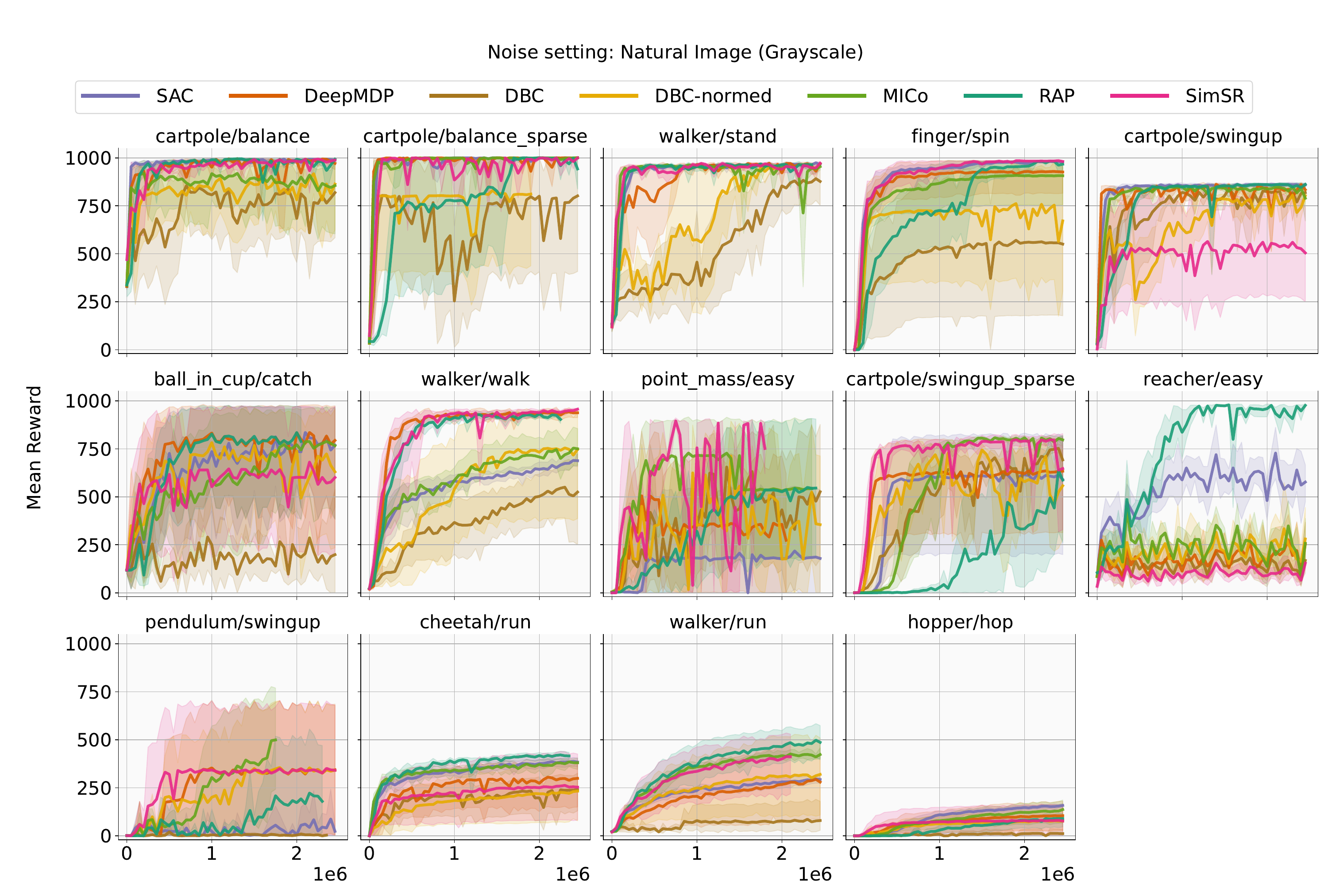}%
    }
                \vspace{-2em}
    \caption{Performance on individual pixel-based tasks.}
    \label{fig:all_imageg}
\end{figure}

\begin{figure}[h]
    \makebox[\linewidth]{%
        \includegraphics[width=1\linewidth]{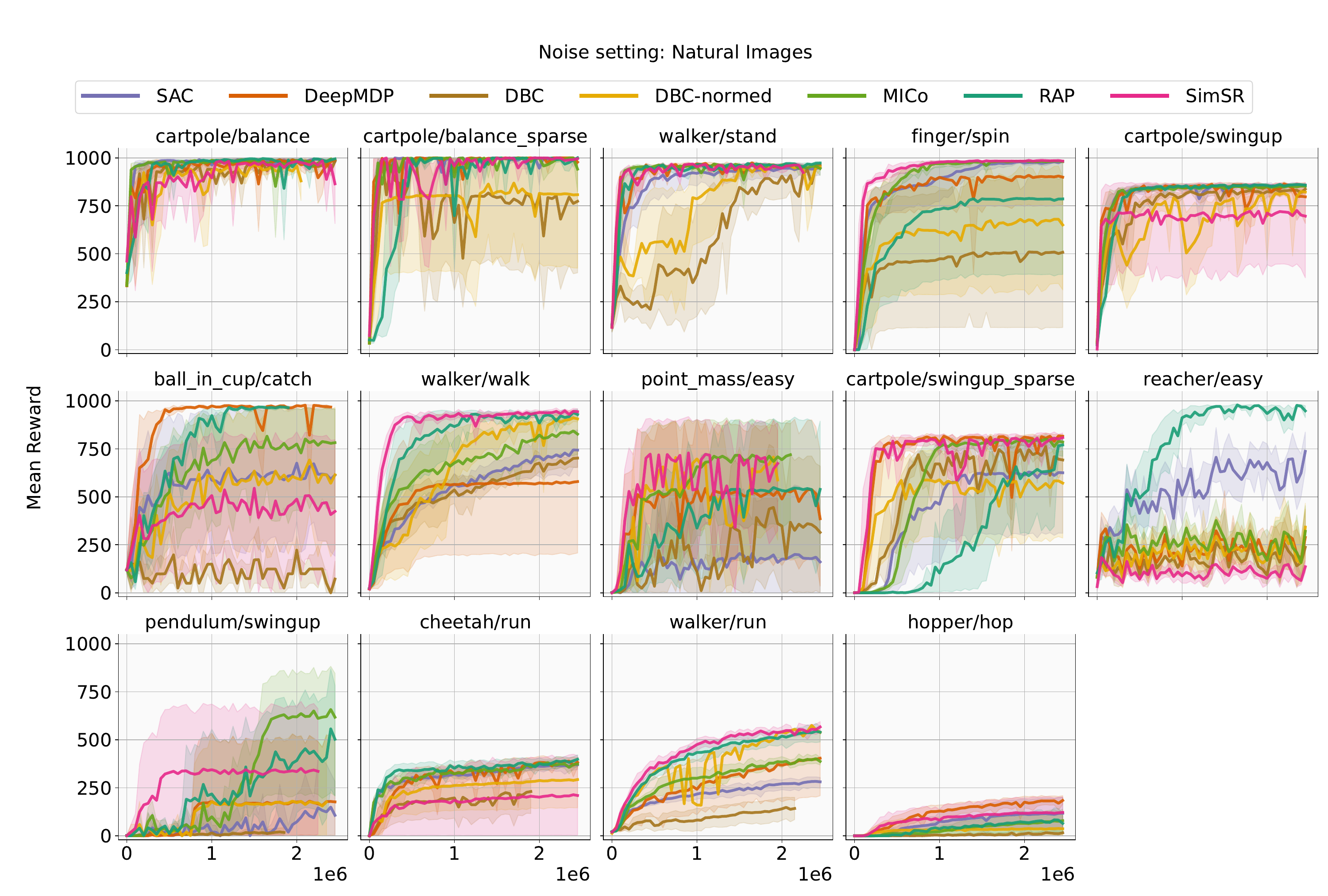}%
    }
                \vspace{-2em}
    \caption{Performance on individual pixel-based tasks.}
    \label{fig:all_image}
\end{figure}

\begin{figure}[h]
    \makebox[\linewidth]{%
        \includegraphics[width=1\linewidth]{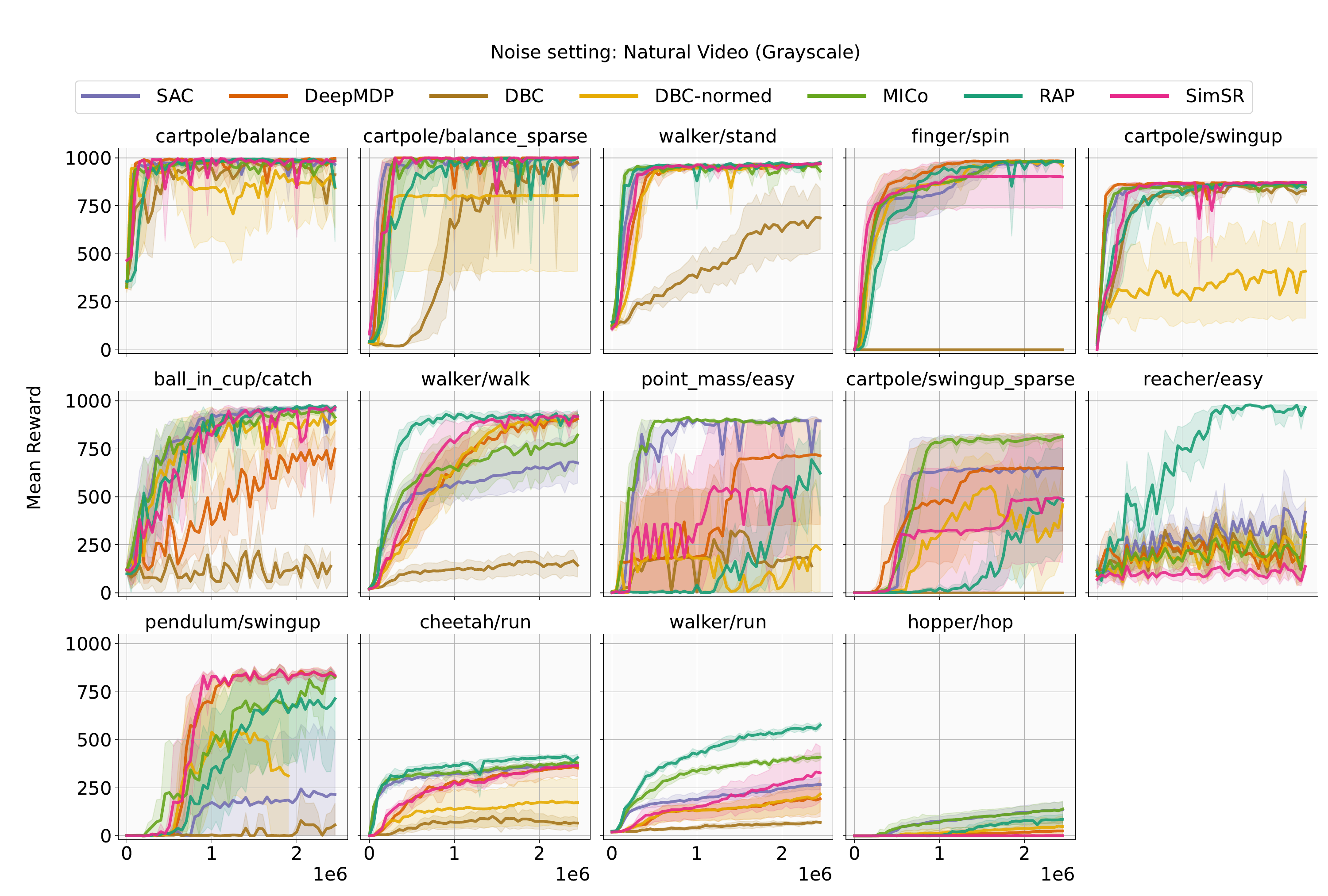}%
    }
                \vspace{-2em}
    \caption{Performance on individual pixel-based tasks.}
    \label{fig:all_videog}
\end{figure}

\begin{figure}[h]
    \makebox[\linewidth]{%
        \includegraphics[width=1\linewidth]{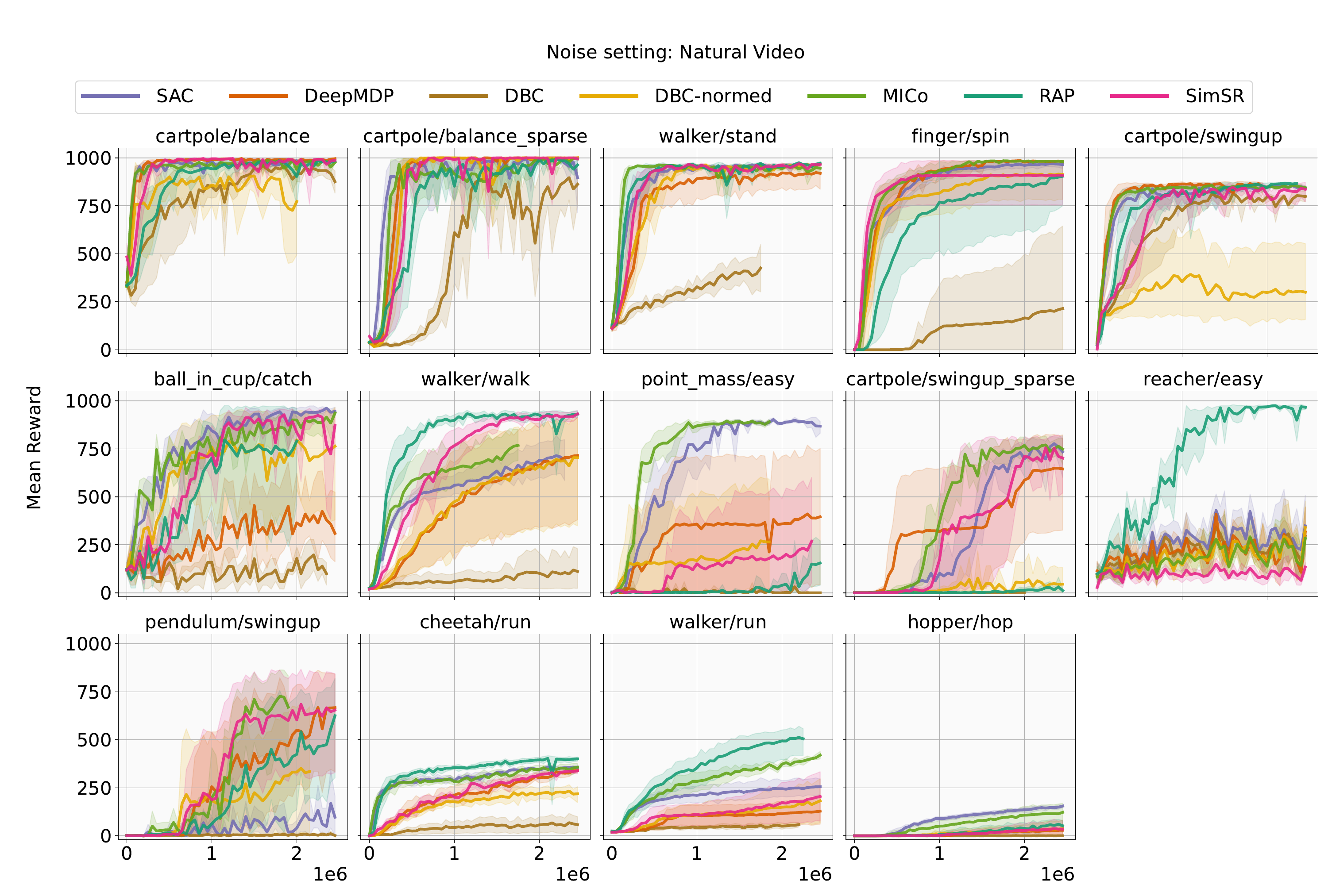}%
    }
                \vspace{-2em}
    \caption{Performance on individual pixel-based tasks.}
    \label{fig:all_video}
\end{figure}

\begin{figure}[h]
    \makebox[\linewidth]{%
        \includegraphics[width=1\linewidth]{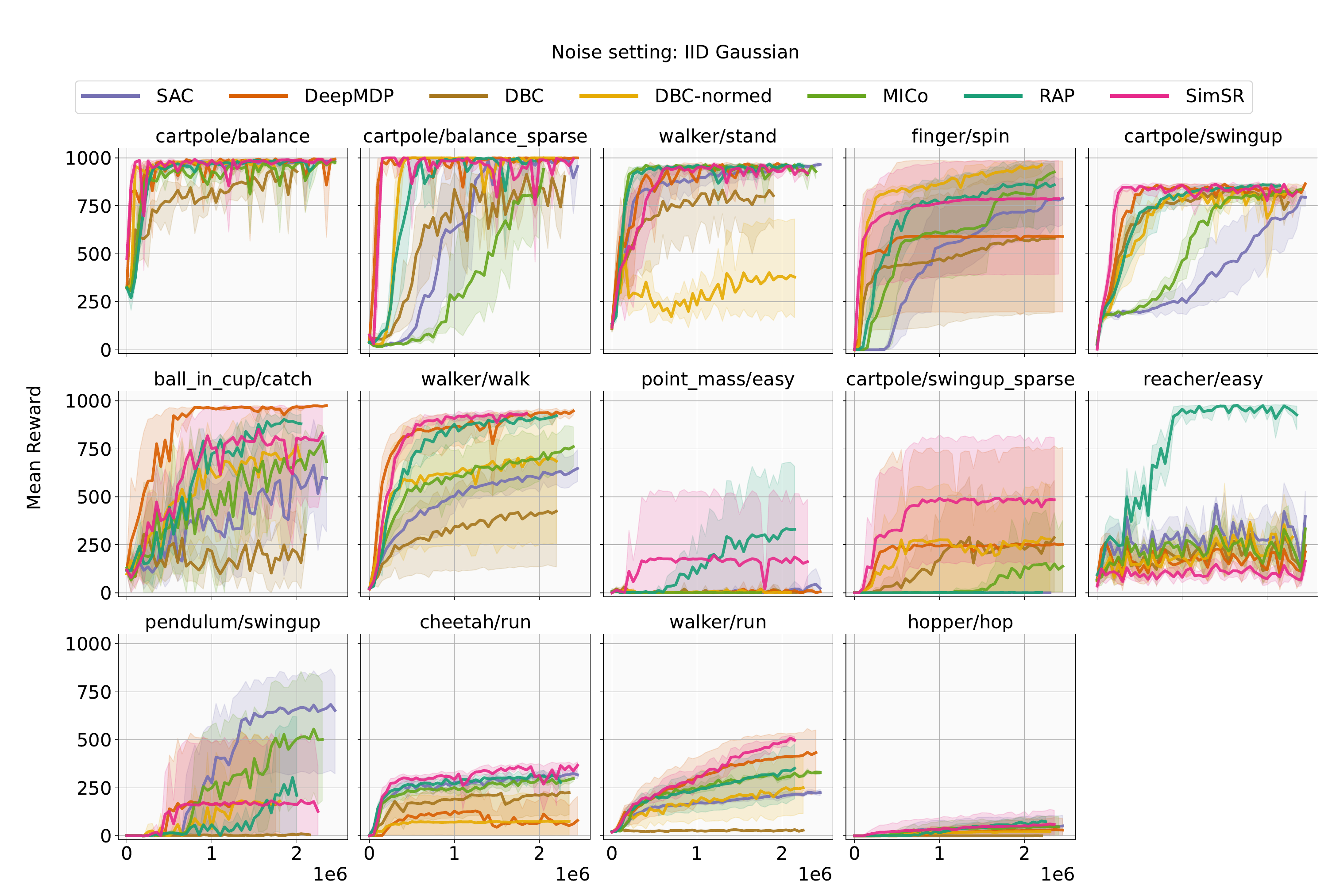}%
    }
                \vspace{-2em}
    \caption{Performance on individual pixel-based tasks.}
    \label{fig:all_pixelnoise}
\end{figure}

\begin{figure}[ht]
    \centering
    \includegraphics[width=0.75\linewidth]{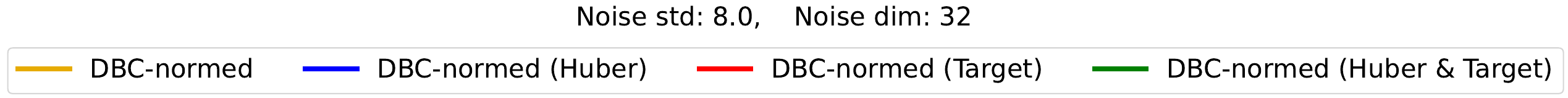}
    \includegraphics[width=\linewidth]{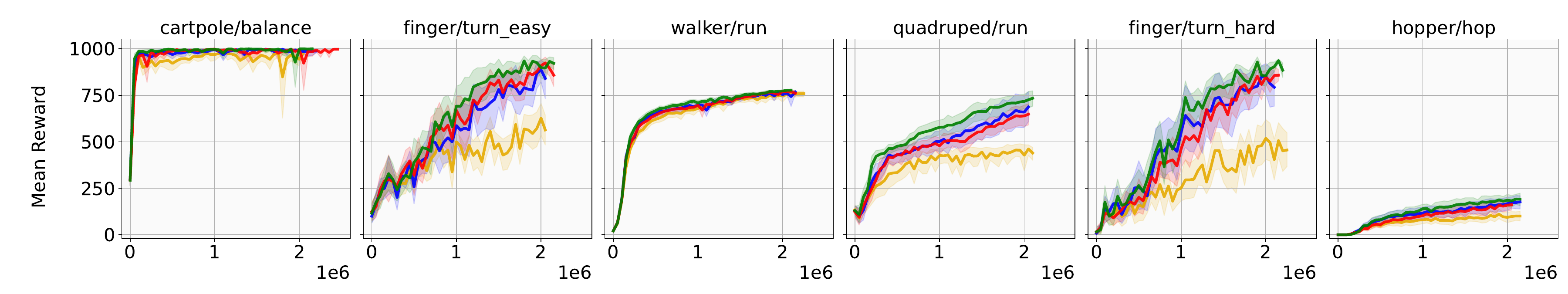}
    \includegraphics[width=\linewidth]{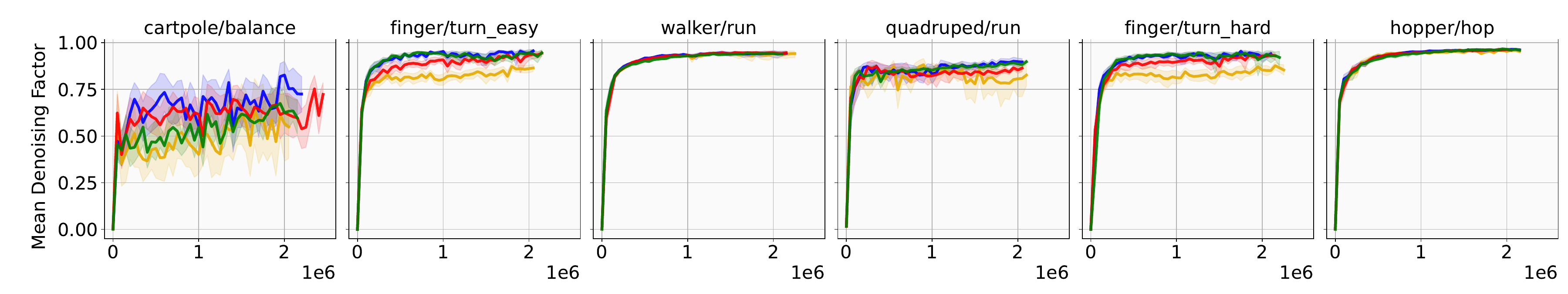}
    
    \caption{\small Case study on 6 state-based DMC tasks on \textbf{DBC-normed with LayerNorm} and its variants:
    \textbf{applying the target trick} (DBC-normed (Target)), \textbf{using Huber loss} (DBC-normed (Huber)) instead of MSE as the metric loss, and \textbf{applying both} (DBC-normed (Huber \& Target)).
    X-axis is the environmental step.
}
    \label{fig:5.2DBC-normed}
\end{figure} 

\begin{figure}[h]
    \centering
    \includegraphics[width=0.8\linewidth]{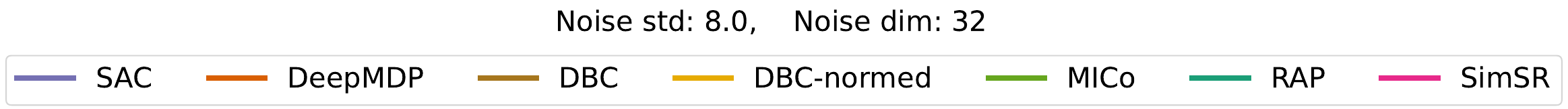}

    \includegraphics[width=\linewidth]{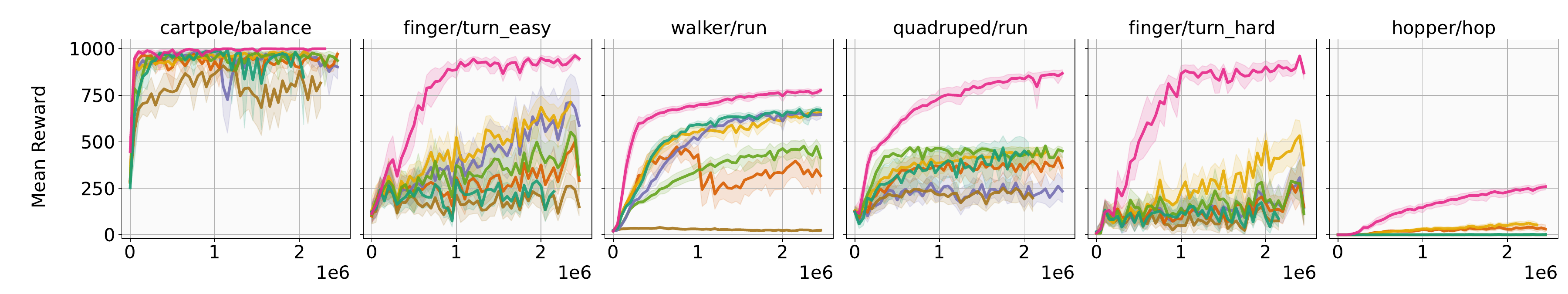}
    \includegraphics[width=\linewidth]{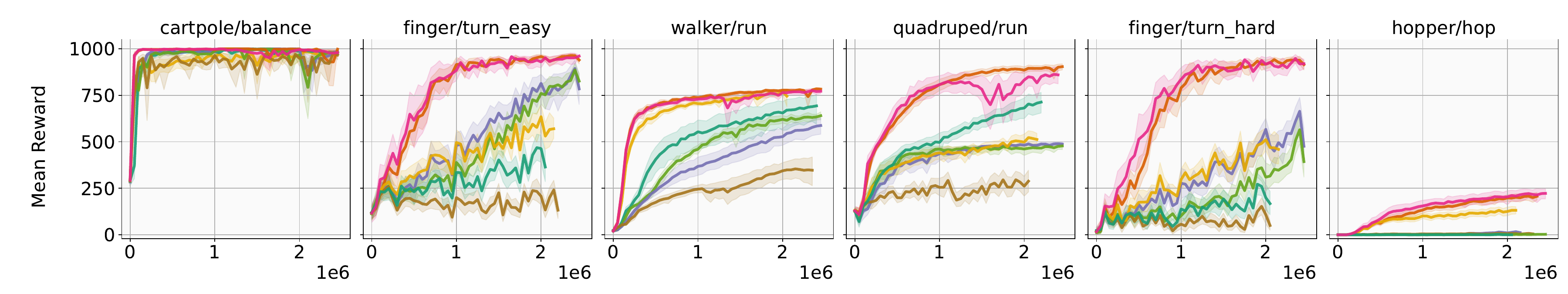}

    \includegraphics[width=\linewidth]{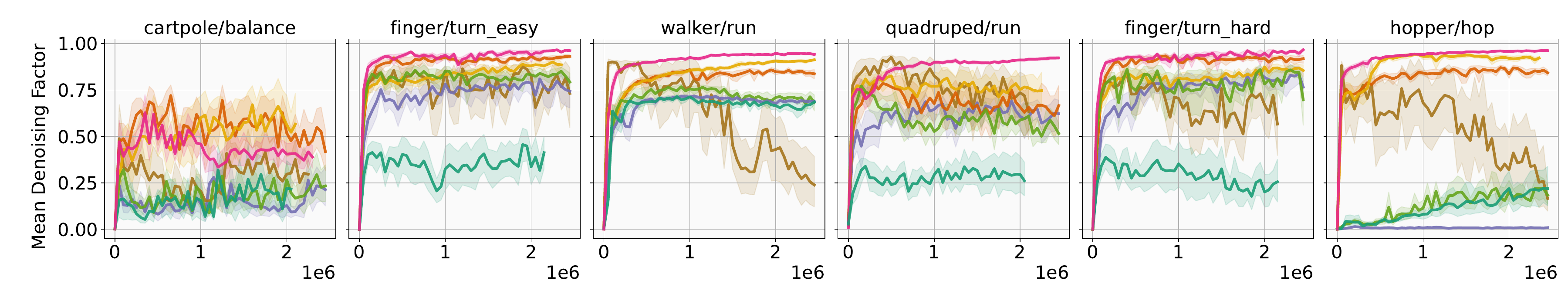}

    \includegraphics[width=\linewidth]{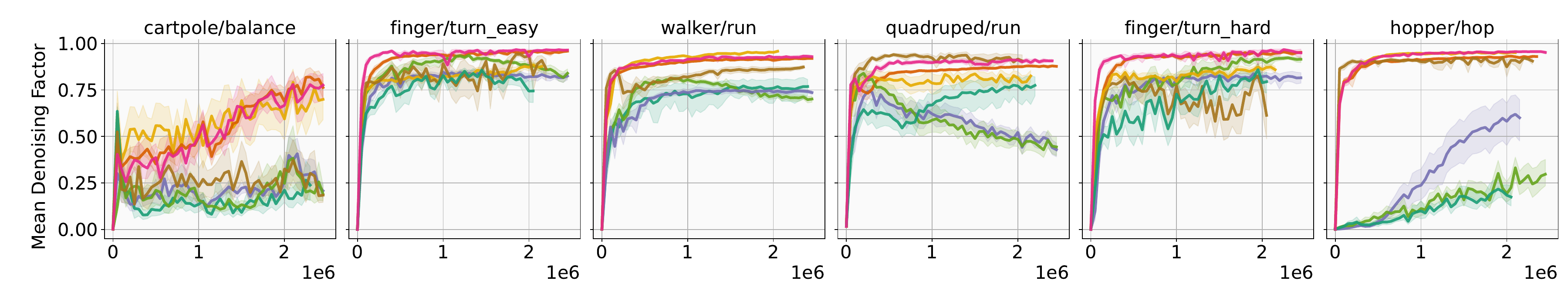}
    
    \caption{\small Case study on six DMC state-based tasks 
    examining the effects of \textbf{including LayerNorm} (the first and third vs. second and fourth rows).
    X-axis stands for the environmental step. See \autoref{tab:5.2ln} for a tabular presentation.
}
    \label{fig:5.2ln_huber}
\end{figure}

\begin{figure}[h]
\vspace{-1em}
    \centering
\includegraphics[width=0.75\linewidth]{fig/trend_legend_std_Overall_episode_reward_2000000.pdf} \\
\includegraphics[width=0.35\linewidth]{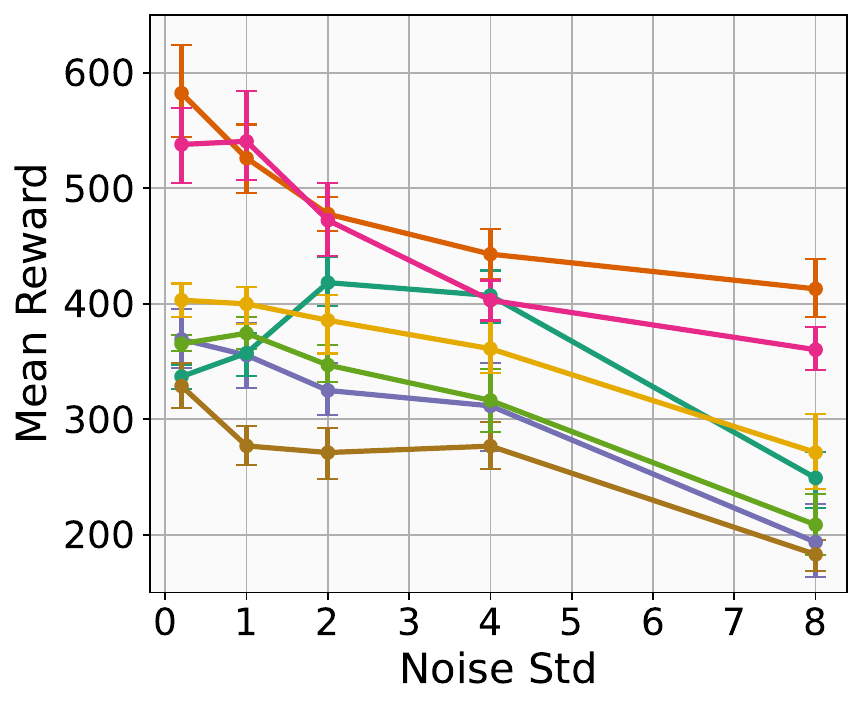}
\includegraphics[width=0.35\linewidth]{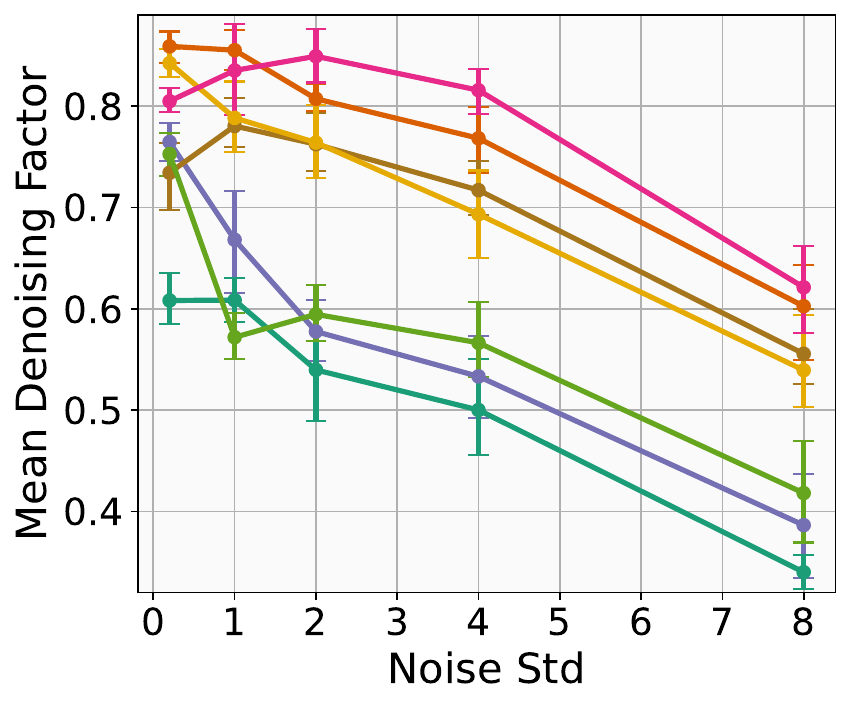}
    \caption{\small \textbf{Aggregated reward (left) and denoising factor (right)} of methods on \textbf{IID Gaussian noise with random projection} setting, varying noise standard deviation, in the 6 selected state-based tasks.}
\label{fig:rp_sensitivity}
\end{figure}

\begin{figure}[h]
\includegraphics[width=\linewidth]{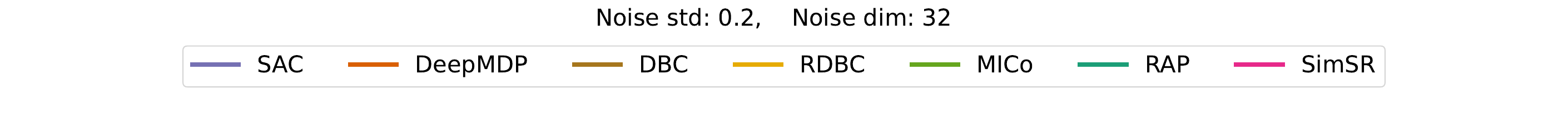}
\vspace{-1em}
\makebox[\linewidth]{
\includegraphics[width=\linewidth]{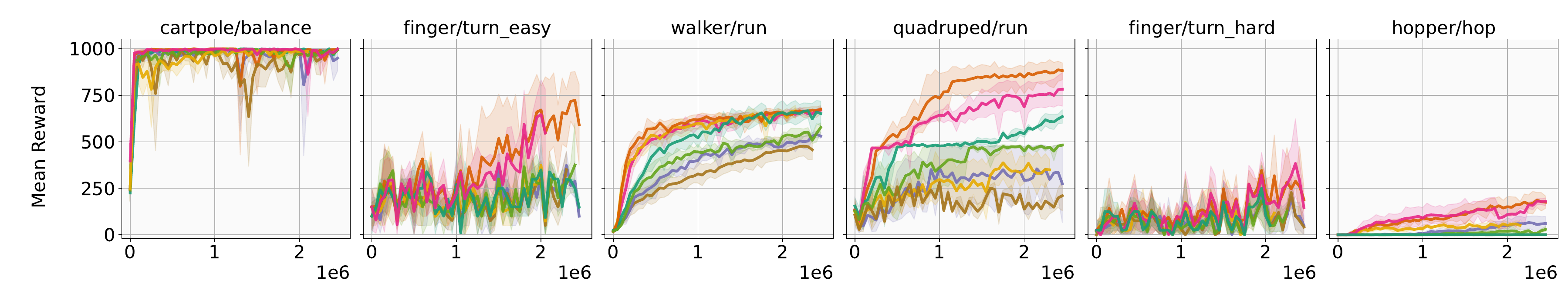}}
\makebox[\linewidth]{
\includegraphics[width=\linewidth]{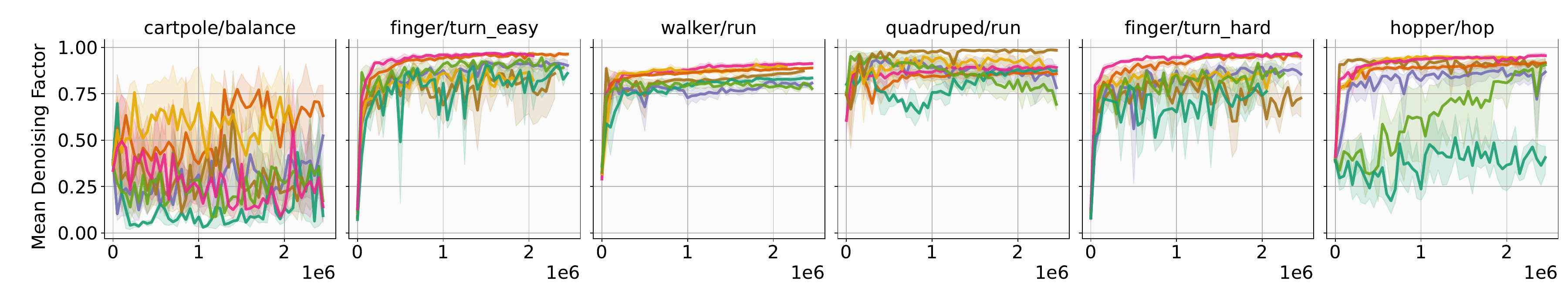}}
            \vspace{-2em}
    \caption{Performance on individual state-based tasks under the IID Gaussian noise with random projection setting.}
    \label{fig:rp_pertask02}
\end{figure}

\begin{figure}[h]
\includegraphics[width=\linewidth]{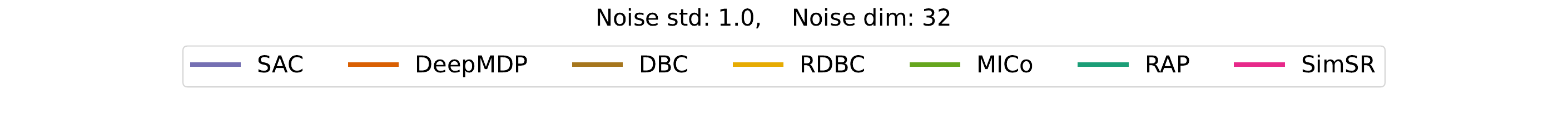}
\vspace{-1em}
\makebox[\linewidth]{
\includegraphics[width=\linewidth]{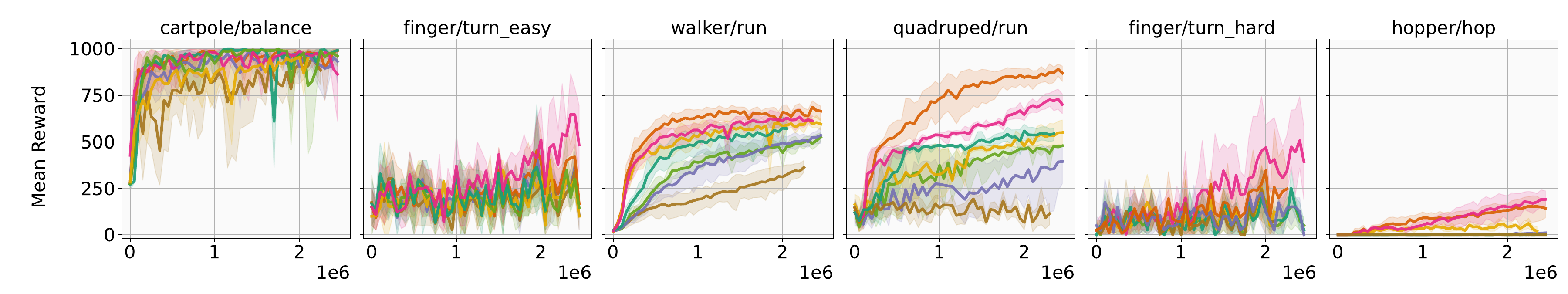}}
\makebox[\linewidth]{
\includegraphics[width=\linewidth]{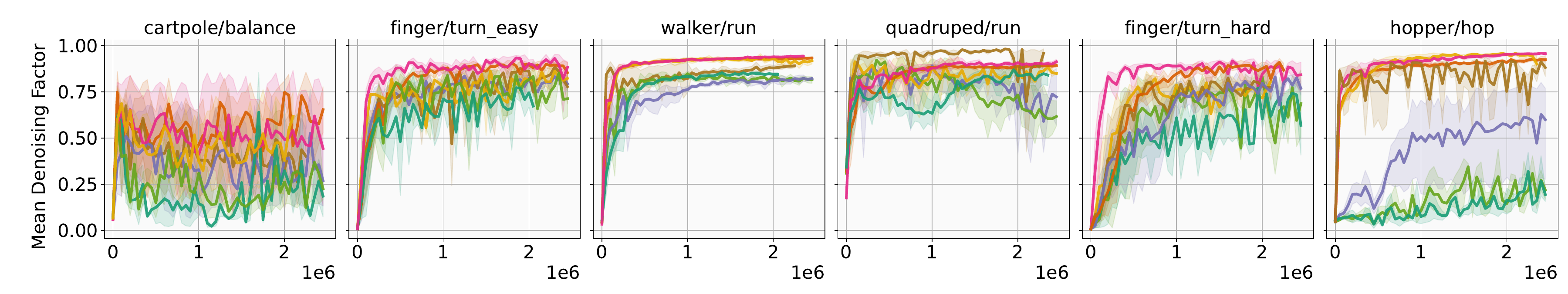}}
            \vspace{-2em}
    \caption{Performance on individual state-based tasks under the IID Gaussian noise with random projection setting.}
    \label{fig:rp_pertask1}
\end{figure}

\begin{figure}[h]
\includegraphics[width=\linewidth]{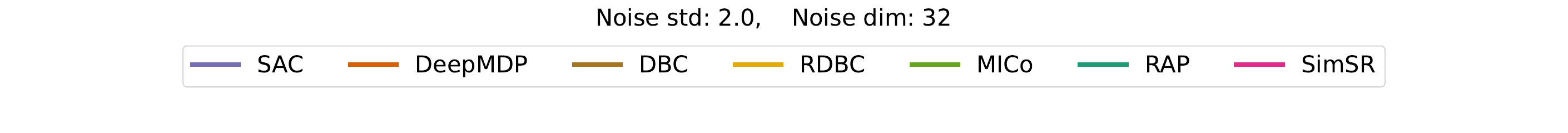}
\vspace{-1em}
\makebox[\linewidth]{
\includegraphics[width=\linewidth]{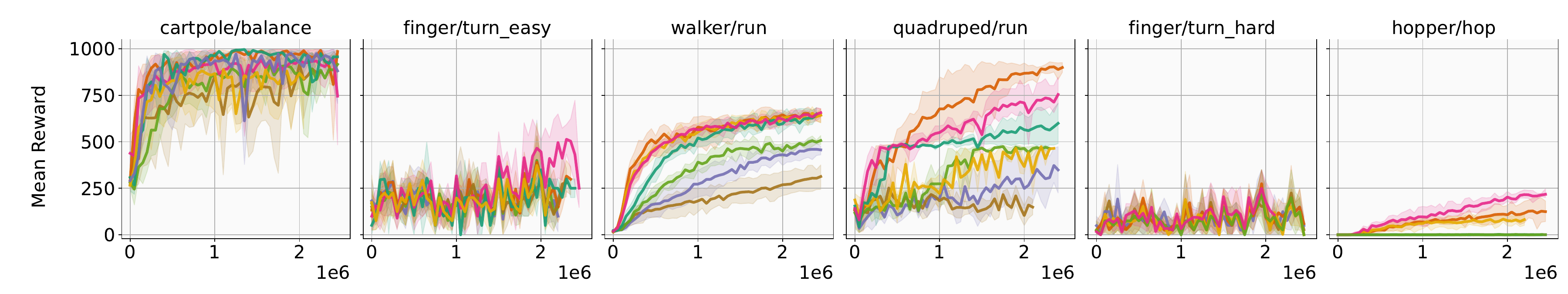}}
\makebox[\linewidth]{
\includegraphics[width=\linewidth]{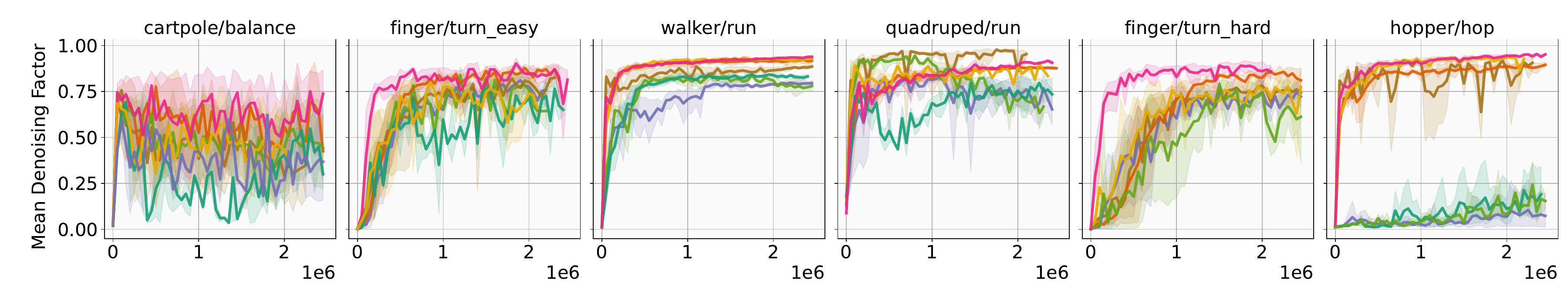}}
            \vspace{-2em}
    \caption{Performance on individual state-based tasks under the IID Gaussian noise with random projection setting.}
    \label{fig:rp_pertask2}
\end{figure}

\begin{figure}[h]
\includegraphics[width=\linewidth]{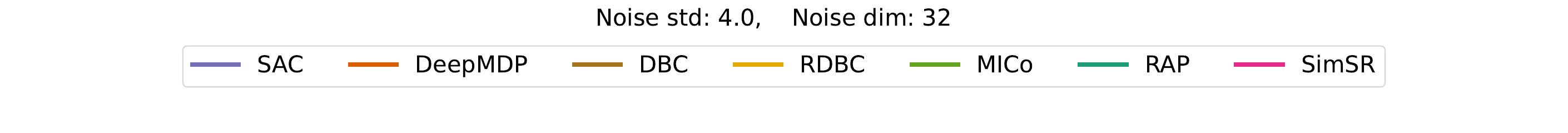}
\vspace{-1em}
\makebox[\linewidth]{
\includegraphics[width=\linewidth]{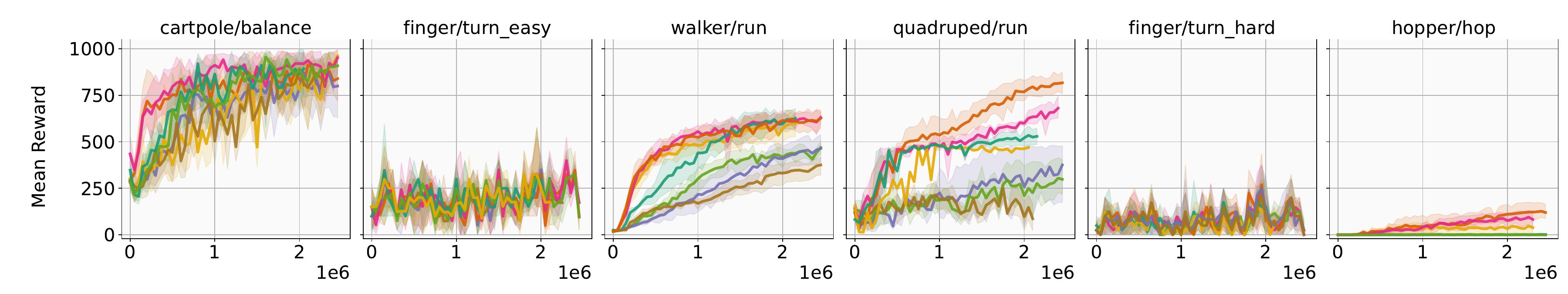}}
\makebox[\linewidth]{
\includegraphics[width=\linewidth]{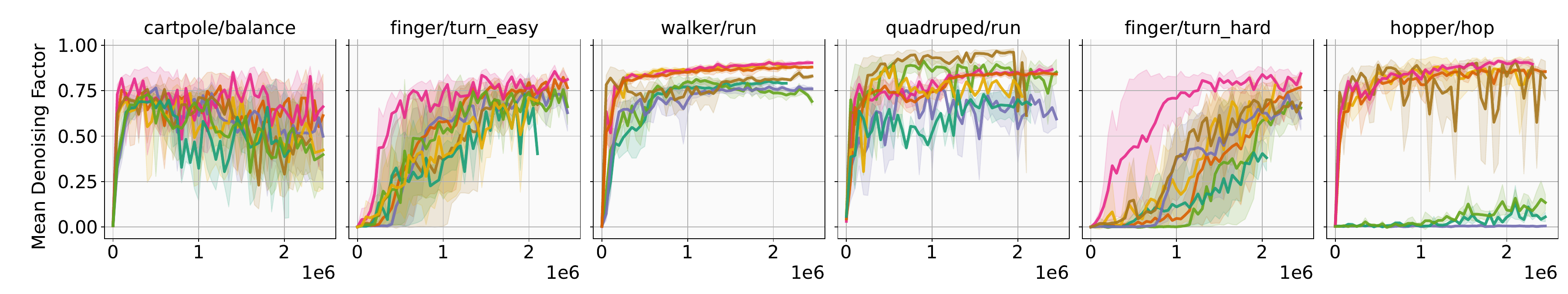}}
            \vspace{-2em}
    \caption{Performance on individual state-based tasks under the IID Gaussian noise with random projection setting.}
                \vspace{-1em}
    \label{fig:rp_pertask4}
\end{figure}

\begin{figure}[h]
\includegraphics[width=\linewidth]{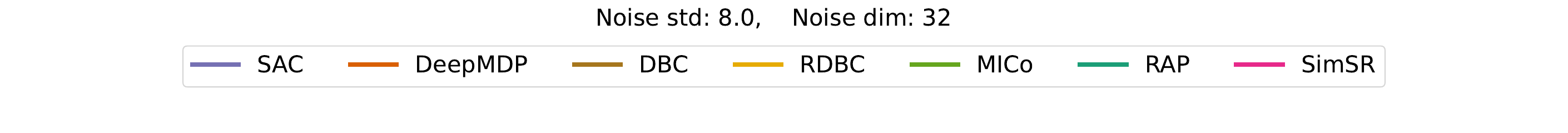}
\vspace{-1em}
\makebox[\linewidth]{
\includegraphics[width=\linewidth]{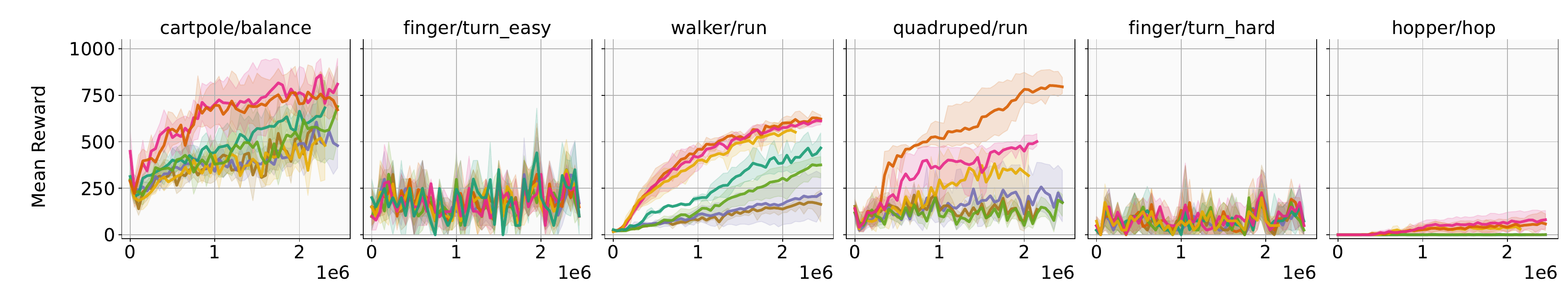}}
\makebox[\linewidth]{
\includegraphics[width=\linewidth]{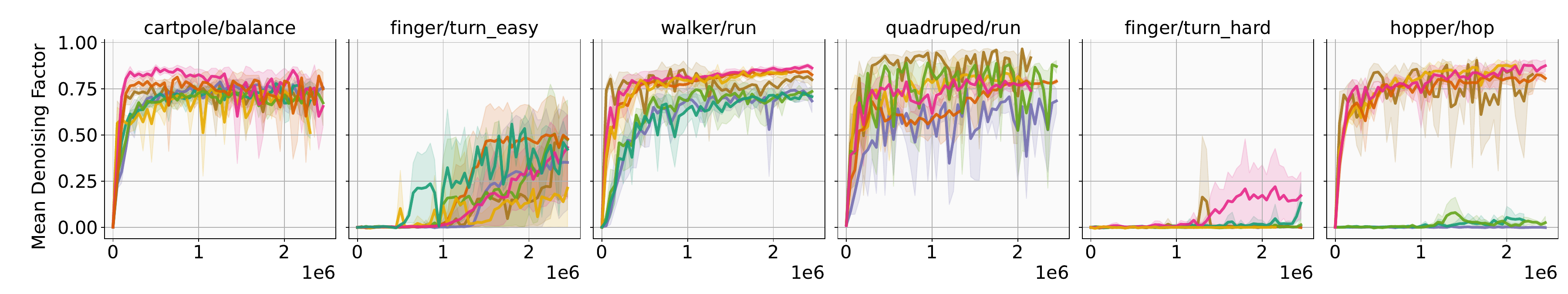}}
            \vspace{-2em}
    \caption{Performance on individual state-based tasks under the IID Gaussian noise with random projection setting.}
                \vspace{-1em}
    \label{fig:rp_pertask8}
\end{figure}

\begin{figure}[h]
    \centering
    \includegraphics[width=0.9\linewidth]{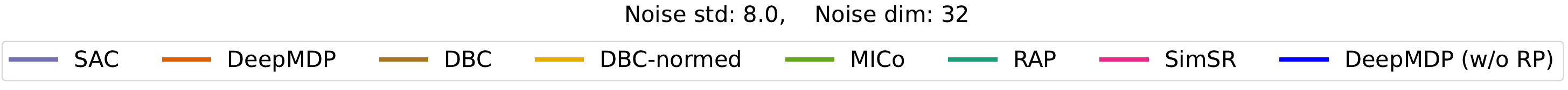}
    \makebox[\linewidth]{%
    \includegraphics[width=\linewidth]{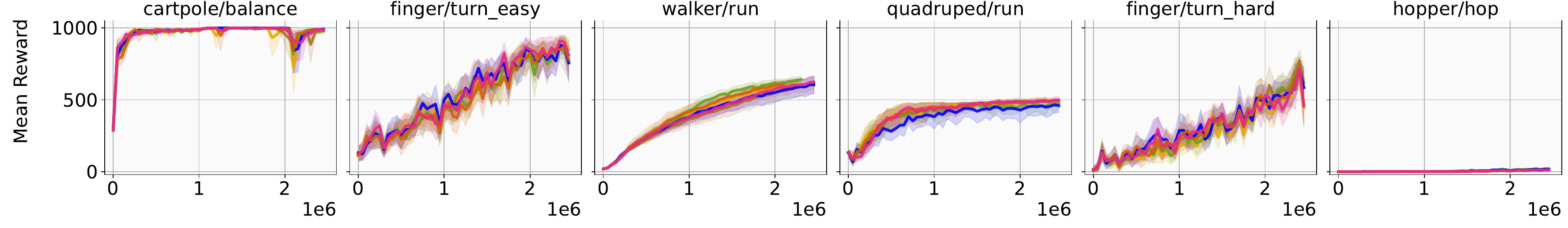}
    }
  \vspace{-2em}
    \caption{Reward curves for the isolated metric estimation setting, using the SAC agent with LayerNorm. Since this setting isolates metric learning, all methods differ only in how the metric encoder $\tilde{\phi}$ is shaped, and thus similar performance across methods is expected. This also ensures that the collected data distribution remains consistent when evaluating different metric learning methods.}
    \label{fig:5.3iso_sacln_rew}
                \vspace{-0.8em}
\end{figure}

\begin{figure}[h]
    \centering
    \includegraphics[width=0.9\linewidth]{fig/rp_pertask_legend8.pdf}
            \vspace{-1em}
    \includegraphics[width=\linewidth]{fig/comp_all_eval_df_squashed_DF_L2_plot_no_title_legend_noise_std8.0_dim32.pdf}
            \vspace{-1em}
    \caption{DF for the agent encoder $\phi$ (co-trained with RL in \autoref{sec:5.1}) without LayerNorm.}
    \vspace{-1em}
    \label{fig:df_co_training_RL}
\end{figure}

\begin{figure}[h]
    \centering
    \makebox[\linewidth]{%
    \includegraphics[width=\linewidth]{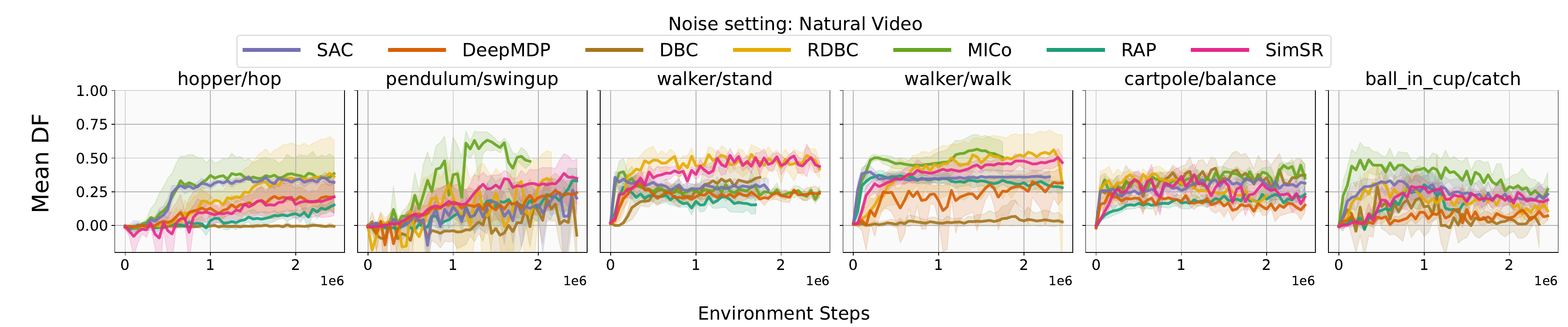}
    }
            \vspace{-2em}
    \caption{Denoising factor curves on six pixel-based DMC tasks under the \textbf{OOD evaluation setting}, measured on the agent encoder $\phi$.}
    \label{fig:5.3iso_id_vidgray_agent}
\end{figure}

\begin{figure}[h]
    \centering
    \makebox[\linewidth]{%
    \includegraphics[width=\linewidth]{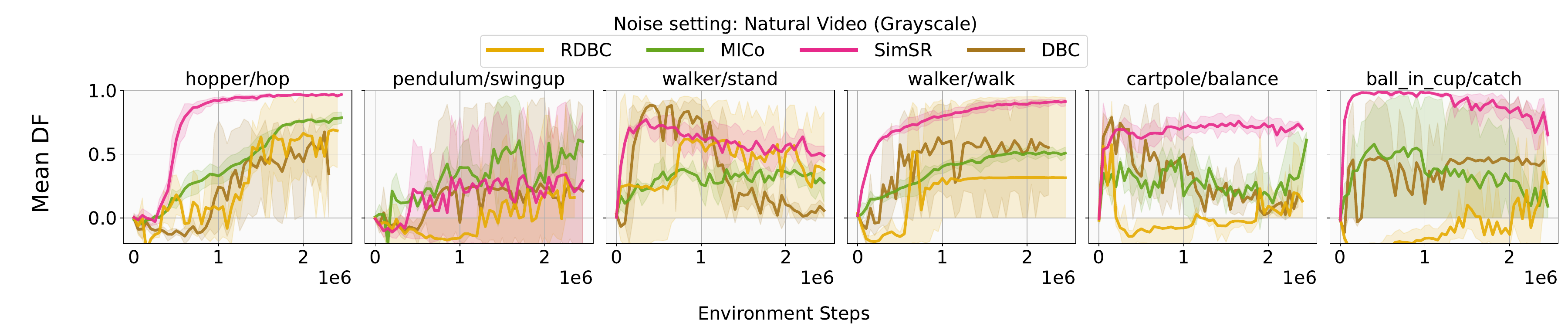}
    }
            \vspace{-2em}
    \caption{Denoising factor curves on six pixel-based DMC tasks under the \textbf{ID evaluation setting}, measured on the isolated metric encoder $\Tilde{\phi}$.}
    \label{fig:5.3iso_id_vidgray}
\end{figure}

\begin{figure}[h]
    \centering
    \makebox[\linewidth]{%
    \includegraphics[width=\linewidth]{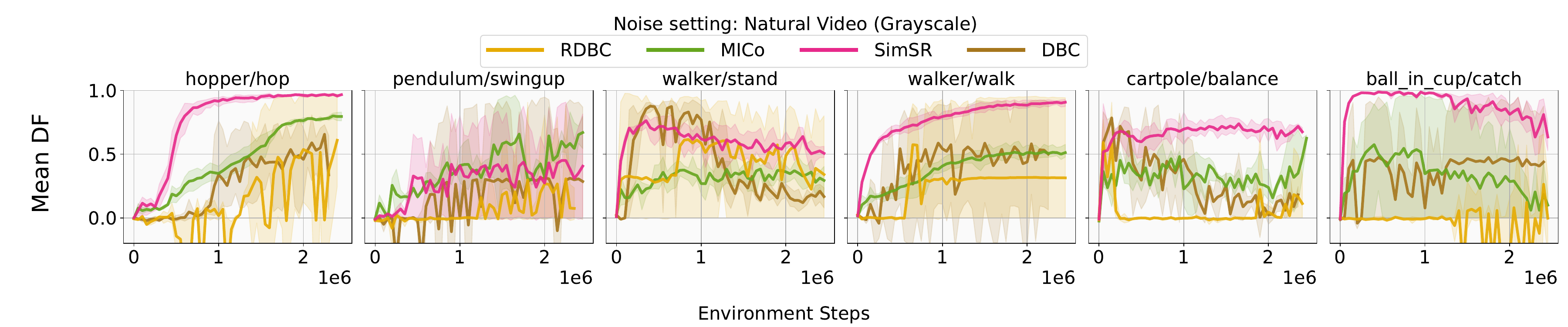}
    }
                \vspace{-2em}
    \caption{Denoising factor curves on six pixel-based DMC tasks under the \textbf{OOD evaluation setting}, measured on the isolated metric encoder $\Tilde{\phi}$.}
    \label{fig:5.3iso_ood_vidgray}
\end{figure}

\begin{figure}[h]
    \centering
    \makebox[\linewidth]{%
    \includegraphics[width=\linewidth]{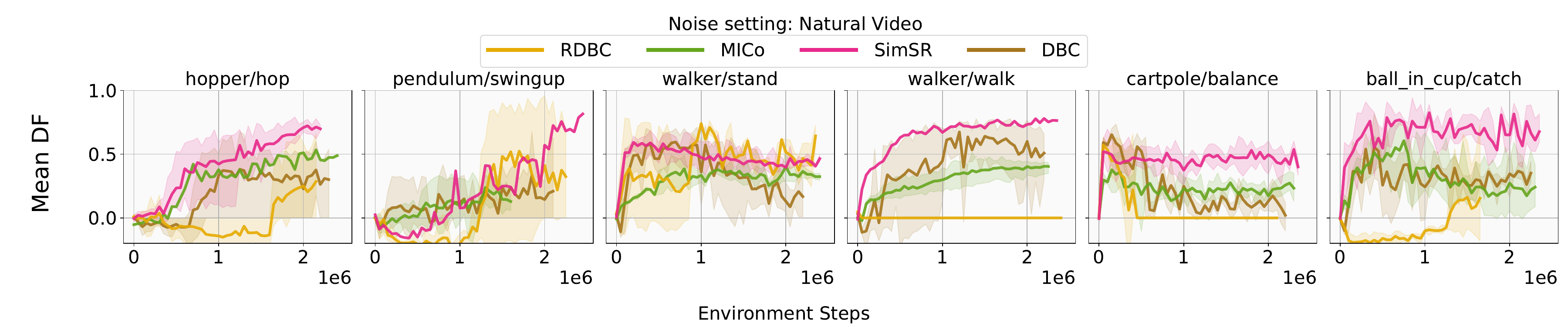}
    }
                \vspace{-2em}
    \caption{Denoising factor curves on six pixel-based DMC tasks under the \textbf{ID evaluation setting}, measured on the isolated metric encoder $\Tilde{\phi}$.}
    
    \label{fig:5.3iso_id_vid}
\end{figure}

\begin{figure}[h]
    \centering
    \makebox[\linewidth]{%
    \includegraphics[width=\linewidth]{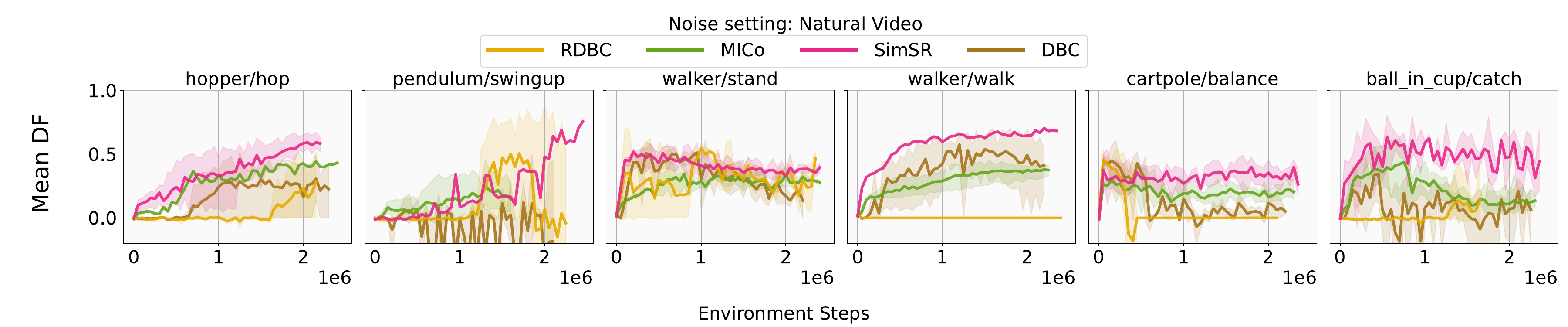}
    }
                \vspace{-2em}
    \caption{Denoising factor curves on six pixel-based DMC tasks under the \textbf{OOD evaluation setting}, measured on the isolated metric encoder $\Tilde{\phi}$.}
    \label{fig:5.3iso_ood_vid}
\end{figure}

\begin{figure}[h]
    \makebox[\linewidth]{%
        \includegraphics[width=1\linewidth]{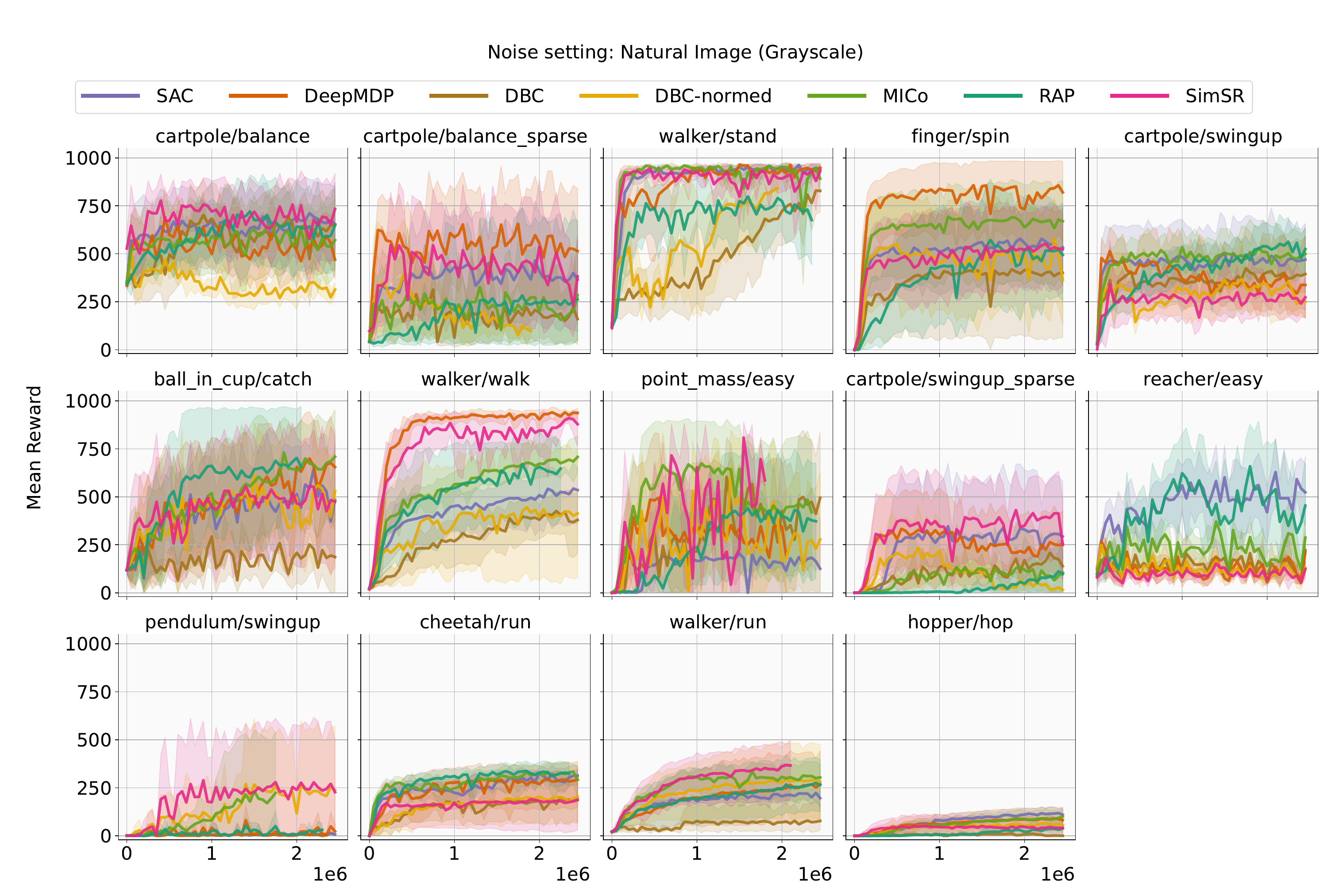}%
    }
                \vspace{-2em}
    \caption{Per-task OOD generalization reward curves for pixel-based DMC tasks.}
    \label{fig:ood_images_gray}
\end{figure}

\begin{figure}[h]
    \makebox[\linewidth]{%
        \includegraphics[width=1\linewidth]{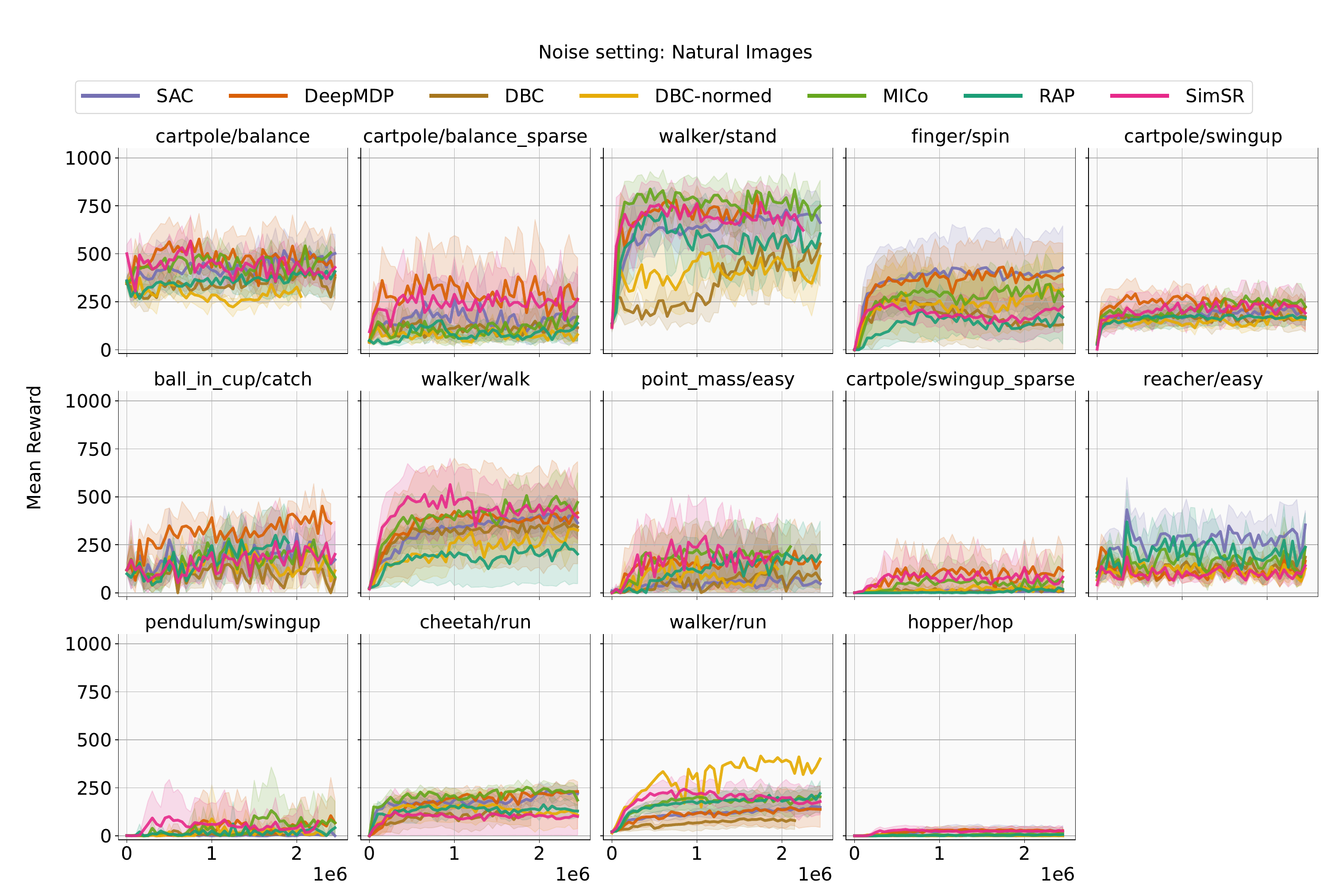}%
    }
                \vspace{-2em}
    \caption{Per-task OOD generalization reward curves for pixel-based DMC tasks.}
    \label{fig:ood_images}
\end{figure}

\begin{figure}[h]
    \makebox[\linewidth]{%
        \includegraphics[width=1\linewidth]{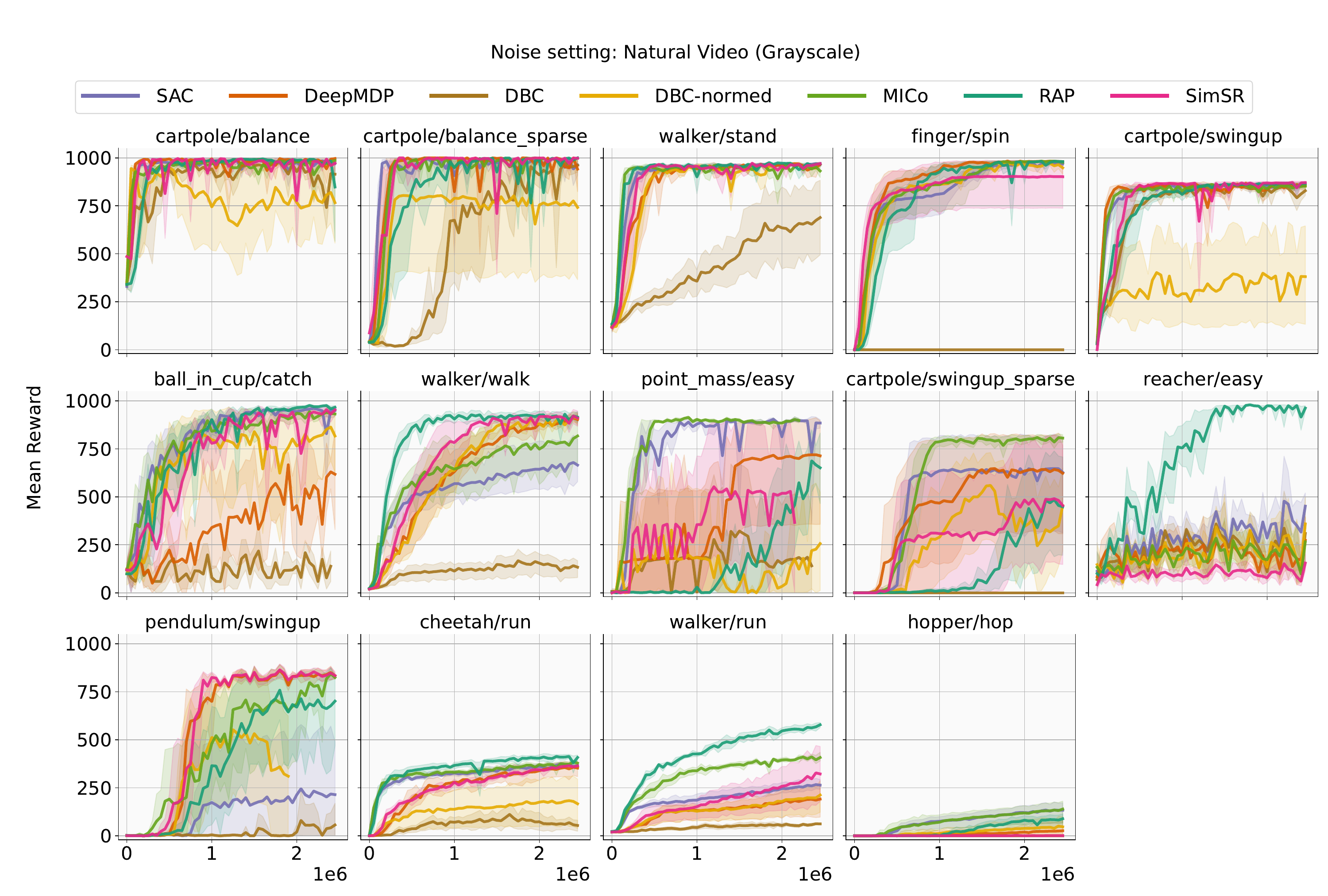}%
    }
                \vspace{-2em}
    \caption{Per-task OOD generalization reward curves for pixel-based DMC tasks.}
    \label{fig:ood_vg}
\end{figure}

\begin{figure}[h]
    \makebox[\linewidth]{%
        \includegraphics[width=1\linewidth]{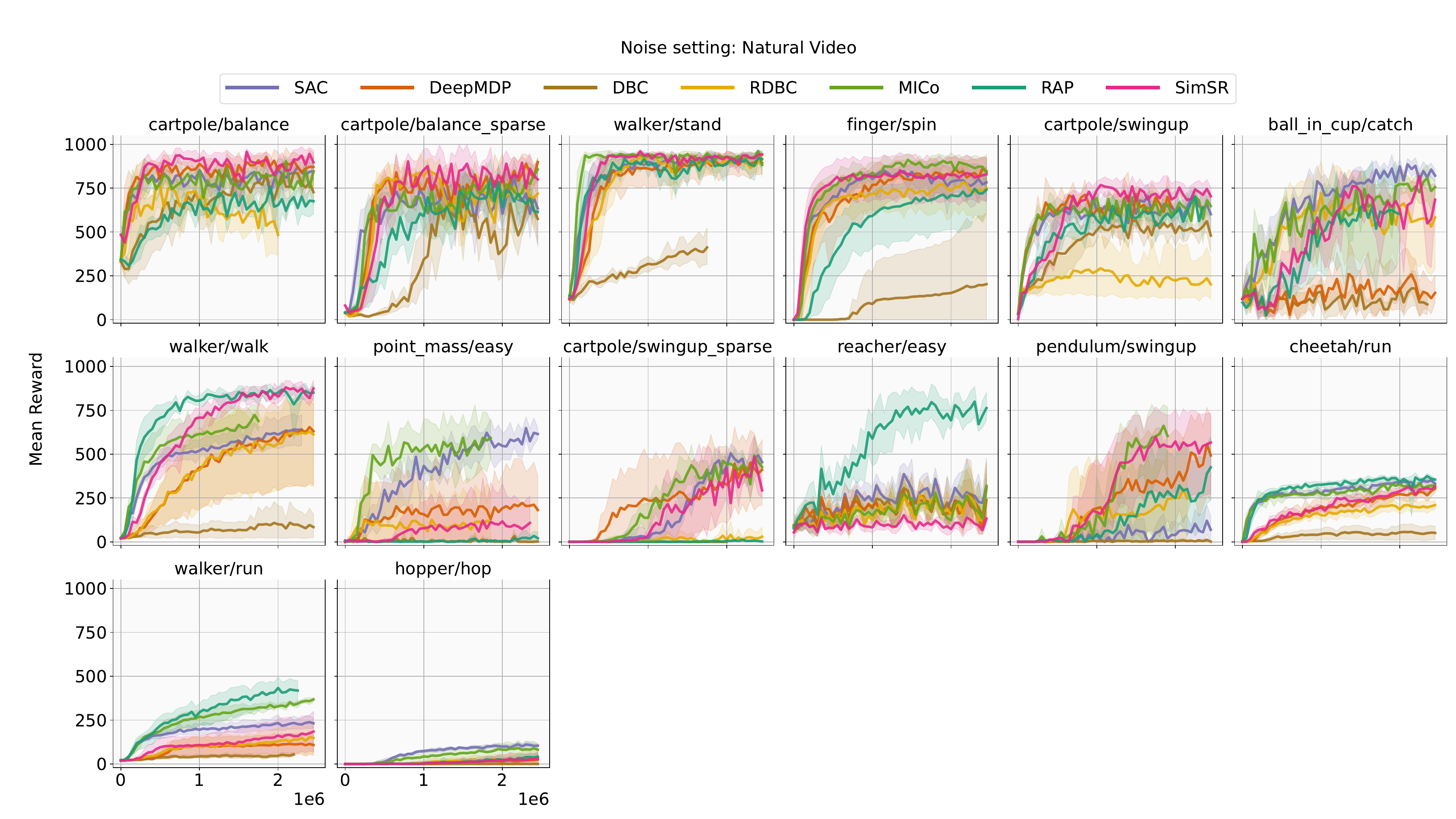}%
    }
                \vspace{-2em}
    \caption{Per-task OOD generalization reward curves for pixel-based DMC tasks.}
    \label{fig:ood_v}
\end{figure}

\begin{figure}[htbp]
    \makebox[\linewidth]{%
        \includegraphics[width=1\linewidth]{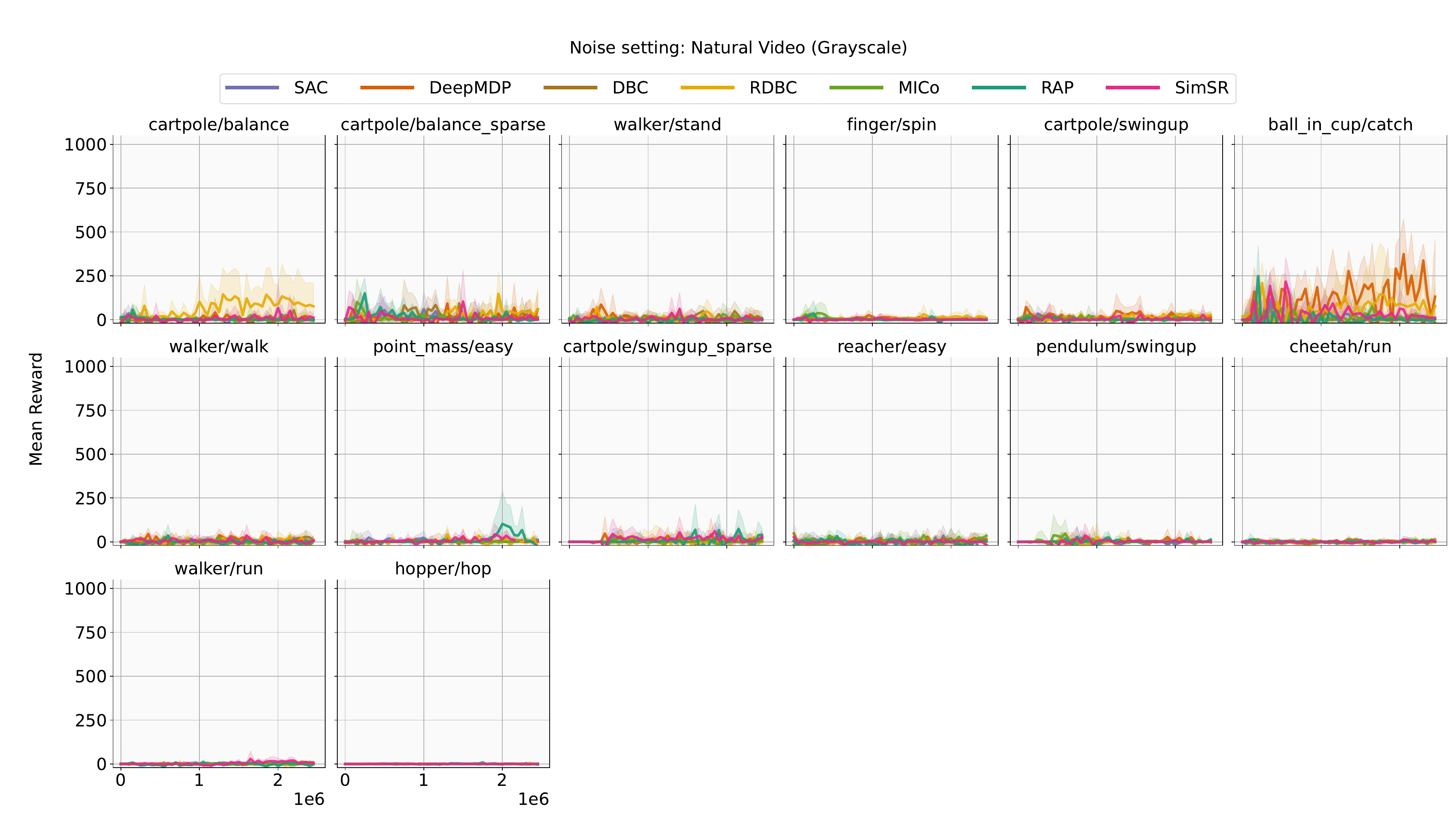}%
    }
                    \vspace{-2em}
    \caption{Pixel-based DMC per-task generalization \textbf{reward difference} curves.}
    \label{fig:gengap_vg}
\end{figure}

\newpage
\begin{figure}[htbp]
    \makebox[\linewidth]{%
        \includegraphics[width=1\linewidth]{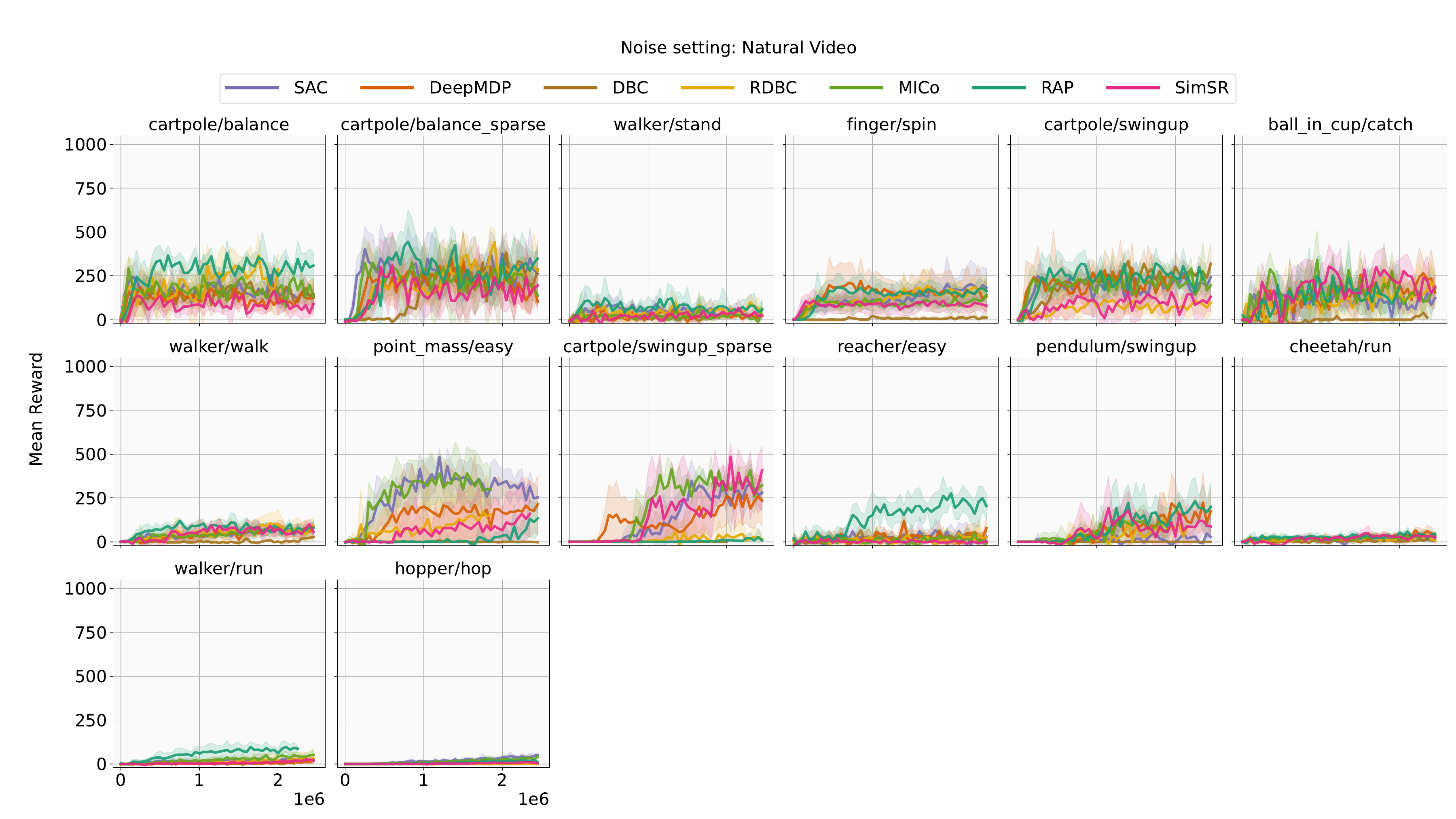}%
    }
                    \vspace{-2em}
    \caption{Pixel-based DMC per-task generalization \textbf{reward difference} curves.}
    \label{fig:gengap_v}
\end{figure}

\end{document}